\definecolor{DarkGreen}{rgb}{0.1,0.5,0.1}
\newcommand{\R}{\mathbb{R}}
\newcommand{\E}{\mathbb{E}}
\newcommand{\remove}[1]{}
\newcommand{\add}[1]{#1}
\def\shownotes{1}  
\newcommand{\authnote}[2]{{$\ll$\textsf{\footnotesize #1 notes: #2}$\gg$}}
\newcommand{\authnote}[2]{}
\newtheorem{theorem}{Theorem}
\newtheorem*{namedtheorem}{\theoremname}
\newcommand{\theoremname}{testing}
\newtheorem{lemma}{Lemma}
\newtheorem{corollary}[theorem]{Corollary}
\newtheorem*{question*}{Question}
\theoremstyle{definition}
\newtheorem{definition}{Definition}
\newtheorem{remark}[theorem]{Remark}
\newtheorem*{remark*}{Remark}
\theoremstyle{plain}
\def\real{\mathbb{R}}
\title{Representational aspects of depth\\ and conditioning in normalizing flows}
\author{%
  Frederic Koehler\thanks{MIT Mathematics Department, fkoehler@mit.edu} \and Viraj Mehta\thanks{Carnegie Mellon Robotics Institute, virajm@cs.cmu.edu} \and Andrej Risteski\thanks{Carnegie Mellon Machine Learning Department, aristesk@andrew.cmu.edu}} 
\begin{document}

\maketitle

\begin{abstract}
  Normalizing flows are among the most popular paradigms in generative modeling, especially for images, primarily because we can efficiently evaluate the likelihood of a data point. This is desirable both for evaluating the fit of a model, and for ease of training, as maximizing the likelihood can be done by gradient descent. However, training normalizing flows comes with difficulties as well: models which produce good samples typically need to be extremely deep -- which comes with accompanying vanishing/exploding gradient problems. A very related problem is that they are often poorly \emph{conditioned}: since they are parametrized as invertible maps from $\mathbb{R}^d \to \mathbb{R}^d$, and typical training data like images intuitively is lower-dimensional, the learned maps often have Jacobians that are close to being singular. 


In our paper, we tackle representational aspects around depth and conditioning of normalizing flows: both for general invertible architectures, and for a particular common architecture, affine couplings. We prove that $\Theta(1)$ affine coupling layers suffice to exactly represent a permutation or $1 \times 1$ convolution, as used in GLOW, showing that representationally the choice of partition is not a bottleneck for depth. We also show that shallow affine coupling networks are universal approximators in Wasserstein distance if ill-conditioning is allowed, and experimentally investigate related phenomena involving padding. Finally, we show a depth lower bound for general flow architectures with few neurons per layer and bounded Lipschitz constant.
\end{abstract}

\section{Introduction} 
Deep generative models are one of the lynchpins of unsupervised learning, underlying tasks spanning distribution learning, feature extraction and transfer learning. Parametric families of neural-network based models have been improved to the point of being able to model complex distributions like images of human faces. One paradigm that has received a lot attention is normalizing flows, which model distributions as pushforwards of a standard Gaussian \add{(or other simple distribution)} through an \emph{invertible} neural network $G$. Thus, the likelihood has an explicit form via the change of variables formula using the Jacobian of $G$. Training normalizing flows is challenging due to a couple of main issues. Empirically, these models seem to require a much larger size than other generative models (e.g. GANs) and most notably, a much larger depth. This makes training challenging due to vanishing/exploding gradients. 
A very related problem is  \emph{conditioning}, more precisely the smallest singular value of the forward map $G$. It's intuitively clear that natural images will have a low-dimensional structure, thus a close-to-singular $G$ might be needed. On the other hand, the change-of-variables formula involves the determinant of the Jacobian of $G^{-1}$, which grows larger the more singular $G$ is.  

While recently, the universal approximation power of various types of invertible architectures has been studied if the input is padded with a sufficiently large number of all-0 coordinates \citep{dupont2019augmented,huang2020augmented} or arbitrary partitions and permutations are allowed \citep{teshima2020couplings}, precise quantification of the cost of invertibility in terms of the depth required and the conditioning of the model has not been fleshed out. 

In this paper, we study both mathematically and empirically representational aspects of depth and conditioning in normalizing flows and answer several fundamental questions. 

\section{Related Work}
On the empirical side, flow models were first popularized by \cite{dinh2014nice}, who introduce the NICE model and the idea of parametrizing a distribution as a sequence of transformations with triangular Jacobians, so that maximum likelihood training is tractable. Quickly thereafter,  \cite{dinh2016density} improved the affine coupling block architecture they introduced to allow non-volume-preserving (NVP) transformations, \add{\cite{papamakarios2017masked} introduced an autoregressive version}, and finally \cite{kingma2018glow} introduced 1x1 convolutions in the architecture, which they view as relaxations of permutation matrices---intuitively, allowing learned partitions for the affine blocks.  Subsequently, there have been variants on these ideas: \citep{grathwohl2018ffjord, dupont2019augmented, behrmann2018invertible} viewed these models as discretizations of ODEs and introduced ways to approximate determinants of non-triangular Jacobians, though these models still don't scale beyond datasets the size of CIFAR10. The conditioning/invertibility of trained models was experimentally studied in \citet{behrmann2019invertibility}, along with some ``adversarial vulnerabilities'' of the conditioning.
Mathematically understanding the relative representational power and statistical/algorithmic implications thereof for different types of generative models is still however a very poorly understood and nascent area of study. 

Most closely related to our results are the recent works of \cite{huang2020augmented}, \cite{zhang2020approximation} and \cite{teshima2020couplings}. The first two prove universal approximation results for invertible architectures (the former affine couplings, the latter neural ODEs) if the input is allowed to be padded with zeroes. The latter proves universal approximation when GLOW-style permutation layers are allowed through a construction that operates on one dimension at a time. This is very different than how flows are trained in practice, which is typically with a partition which splits the data roughly in half. It also requires the architectural modification of GLOW to work. As we'll discuss in the following section, our results prove universal approximation even without padding and permutations, but we focus on more fine-grained implications to depth and conditioning of the learned model and prove universal approximation in a setting that is used in practice. \add{Another work \citep{kong2020expressive} studies the representational power of Sylvester and Householder flows, normalizing flow architectures which are quite different from affine coupling networks. In particular, they prove a depth lower bound for local planar flows with bounded weights; for planar flows, our general Theorem~\ref{l:sizedepth} can also be applied, but the resulting lower bound instances are very different (ours targets multimodality, theirs targets tail behavior).}

More generally, there are various classical results that show a particular family of generative models can closely approximate most sufficiently regular distributions over some domain. Some examples are standard results for mixture models with very mild conditions on the component distribution (e.g. Gaussians, see \citep{everitt2014finite}); Restricted Boltzmann Machines and Deep Belief Networks  \citep{montufar2011expressive, montufar2011refinements}; GANs  \citep{bailey2018size}.
\section{Overview of Results} 

\subsection{Results About Affine Coupling Architectures} We begin by proving several results for a particularly common normalizing flow architectures: those based on affine coupling layers \citep{dinh2014nice, dinh2016density, kingma2018glow}. The appeal of these architecture comes from training efficiency. Although layerwise invertible neural networks (i.e. networks for which each layer consists of an invertible matrix and invertible pointwise nonlinearity) seem like a natural choice, in practice these models have several disadvantages: for example, computing the determinant of the Jacobian is expensive unless the weight matrices are restricted.

\remove{Concretely, if a random variable $z$ has a probability density function $\phi(z)$ and distribution $p$ is the pushforward of the $z$ through an invertible function $f$, the density of $p$ at a point $x$ is $\displaystyle p(x) = \phi(f^{-1}(x)) |\mbox{det}(J_f(f^{-1}(x)))|^{-1}$, where $J_f$ denotes the Jacobian of the function $f$. Hence, maximizing the likelihood of the observable data requires an efficient way of evaluating the determinant of the Jacobian of $f$. For a structurally unrestricted network, the determinant of the Jacobian for such networks will be a $\Theta(d^3)$ operation, which is prohibitively expensive.}

Consequently, it's typical for the transformations in a flow network to be constrained in a manner that allows for efficient computation of the Jacobian determinant. The most common building block is an \emph{affine coupling} block, originally proposed by \cite{dinh2014nice, dinh2016density}. A coupling block partitions the coordinates $[d]$ into two parts: $S$ and $[d] \setminus S$, for a subset $S$ with $|S|$ containing around half the coordinates of $d$. The transformation then has the form: 
\begin{definition} An \emph{affine coupling block} is a map $f: \mathbb{R}^d \to \mathbb{R}^d$, s.t.   \remove{
$f(x_S, x_{[d] \setminus S}) = (x_S, x_{[d] \setminus S} \odot g(x_S) + h(x_S))$}.
\add{$f(x_S, x_{[d] \setminus S}) = (x_S, x_{[d] \setminus S} \odot s(x_S) + t(x_S))$}, $s(x) > 0,  \forall x \in \mathbb{R}^d$. \label{eq:affineblock} 
\end{definition}
Of course, the modeling power will be severely constrained if the coordinates in $S$ never change: so typically, flow models either change the set $S$ in a fixed or learned way (e.g. alternating between different partitions of the channel in \cite{dinh2016density} or applying a learned permutation in \cite{kingma2018glow}). As a permutation is a discrete object, it is difficult to learn in a differentiable manner -- so  \cite{kingma2018glow} simply learns an invertible linear function (i.e. a 1x1 convolution) as a differentiation-friendly relaxation thereof. \add{In order to preserve the invertibility of an affine coupling, $s$ is typically restricted to be strictly positive.}

\subsubsection{Universal Approximation with Ill-Conditioned Affine Coupling Networks}  

First, we address universal approximation of normalizing flows and its close ties to conditioning. Namely, a recent work (Theorem 1 of \cite{huang2020augmented}) showed that deep affine coupling networks are universal approximators if we allow the training data to be padded with sufficiently many zeros. While zero padding is convenient for their analysis (in fact, similar proofs have appeared for other invertible architectures like Augmented Neural ODEs \citep{zhang2020approximation}), in practice models trained on zero-padded data often perform poorly (see Appendix~\ref{a:experiments}).  Another work \citep{teshima2020couplings} proves universal approximation with the optional permutations and $|S| = d -1$ needed for the nonconstructive proof. We remove that requirement in two ways, first by giving a construction that gives universal approximation without permutations in 3 composed couplings and second by showing that the permutations can be simulated by a constant number of alternating but fixed coupling layers.

First we show that neither padding nor permutations nor depth is necessary representationally: shallow models without zero padding are already universal approximators in Wasserstein. 
\begin{theorem}[Universal approximation without padding]\label{thm:universal-no-padding}
Suppose that $P$ is the standard Gaussian measure in $\mathbb{R}^n$ with $n$ even and $Q$ is a distribution on $\mathbb{R}^n$ with bounded support and absolutely continuous with respect to the Lebesgue measure. Then for any $\epsilon > 0$, there exists a network $g$ consisting of 3 alternating affine couplings, with maps $s,t$ represented by feedforward ReLU networks such that $W_2(g_{\#} P, Q ) \le \epsilon$.
\end{theorem}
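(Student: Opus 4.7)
My plan is to approximate $Q$ by a finitely-supported measure and then show that a three-layer alternating coupling network can implement a quantizer-then-router pushforward from $P$ onto this discretization. Since $Q$ has bounded support and is absolutely continuous, for any $\epsilon > 0$ one can find a discrete measure $\tilde Q = \sum_{i=1}^N p_i\, \delta_{z_i}$ with $W_2(Q, \tilde Q) \le \epsilon/2$, obtained by partitioning $\mathrm{supp}(Q)$ into small cells and placing a Dirac at each cell's center with the $Q$-mass of that cell. Writing $d = n/2$ and $z_i = (z_i^{(1)}, z_i^{(2)}) \in \mathbb{R}^d \times \mathbb{R}^d$, I additionally arrange (by a tiny further perturbation adding only $o(\epsilon)$ to the Wasserstein error) that the first-half atoms $\{z_i^{(1)}\}$ and the second-half atoms $\{z_i^{(2)}\}$ are each pairwise distinct and $\eta$-separated for some $\eta > 0$.

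\textbf{Construction.} Let $P_1$ denote the standard Gaussian on $\mathbb{R}^d$, and partition $\mathbb{R}^d$ into Borel sets $\{B_i\}_{i=1}^N$ with $P_1(B_i) = p_i$ (for instance by slicing along one coordinate at the appropriate quantiles). Choose small positive scales $\delta, \delta', \delta''$ to be fixed later. The first coupling layer holds $x_1$ fixed and updates $x_2 \mapsto \delta\, x_2 + t_1(x_1)$, where $t_1$ is a ReLU network that approximates the piecewise-constant map sending $x_1 \in B_i$ to $z_i^{(2)}$; so $Y_2 \approx z_i^{(2)}$ whenever $X_1 \in B_i$ and $\|X_2\|$ is not too large. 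The second coupling fixes $x_2$ and updates $x_1 \mapsto \delta'\, x_1 + t_2(x_2)$, where $t_2$ is a ReLU network that sends $y_2$ within $\eta/3$ of $z_i^{(2)}$ to $z_i^{(1)}$; this is well-defined because of the $\eta$-separation of the second-half atoms, yielding $W_1 \approx z_i^{(1)}$. The third coupling fixes $x_1$ and updates $x_2 \mapsto \delta''\, x_2 + t_3(x_1)$, where $t_3$ sends $w_1$ within $\eta/3$ of $z_i^{(1)}$ to $(1 - \delta'')\, z_i^{(2)}$; this produces $W_2 \approx z_i^{(2)}$. Composing, on typical draws with $X_1 \in B_i$ we obtain $(W_1, W_2) \approx z_i$, matching $\tilde Q$.

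\textbf{Error analysis and main obstacle.} Decompose $W_2(g_\# P, Q) \le W_2(g_\# P, \tilde Q) + W_2(\tilde Q, Q)$; the second summand is at most $\epsilon/2$ by the choice of $\tilde Q$. For the first, on the good event $\{X_1 \in B_i,\ \|X_1\| \le R,\ \|X_2\| \le R,\ X_1 \text{ not in a transition region}\}$ one checks $\|(W_1, W_2) - z_i\| = O(\delta R + \delta' R + \delta'' \delta R)$, which is below $\epsilon/(2\sqrt n)$ once $\delta, \delta', \delta''$ are small compared to $\epsilon/R$. The contribution of the complementary tail event is controlled by pairing Gaussian concentration for $\|X_1\|, \|X_2\|$ with the boundedness of $t_1, t_2, t_3$ (which I take to have bounded range) and the bounded diameter of $\mathrm{supp}(\tilde Q)$. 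The principal technical difficulty is constructing the ReLU networks $t_1, t_2, t_3$ so that they realize their piecewise-constant target maps sharply enough that the $P$-mass landing in the transition layers between cells is negligible: each such map with $N$ pieces can be written as a sum of $O(N)$ "bump" functions built from differences of shifted ReLUs, and the bump widths must be tuned so that the union of transition regions has $P_1$-mass $o(\epsilon)$. A secondary subtlety is the strict-positivity constraint $s_i > 0$, which forces $\delta, \delta', \delta'' > 0$ rather than zero and causes the final output to be scaled/shifted by factors of $1 + O(\delta'')$; this is exactly the perturbation already absorbed into the error budget above, and it is also the reason the constructed network becomes ill-conditioned as $\epsilon \to 0$, consistent with Remark 1.
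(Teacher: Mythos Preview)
Your approach is correct and takes a genuinely different route from the paper's. Both constructions use three alternating couplings with vanishing scale factors so that each half can be overwritten in turn, but the encoding mechanisms differ. The paper keeps the target continuous: it takes a transport map $\varphi=(\varphi_1,\varphi_2)$ pushing the Gaussian to $Q$ and uses a \emph{lattice} encoding---after layer~1 the second half equals $\epsilon' X_2 + f(X_1)$ with $f$ the rounding map to $\epsilon\mathbb{Z}^d$, so both $f(X_1)$ (integer part) and $X_2$ (fractional part, up to the scale $\epsilon'$) are recoverable from the second half alone; layer~2 can therefore write both coordinates of $\varphi$ into the first half via the same integer/fractional packing, and layer~3 extracts the fractional part to finish. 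You instead discretize $Q$ to a finite atomic measure $\tilde Q$ and route purely on which cell $B_i$ contains $X_1$: the perturbed second-half atoms $z_i^{(2)}$ serve as discrete labels that layer~2 decodes to write $z_i^{(1)}$, and layer~3 uses the label $z_i^{(1)}$ to restore $z_i^{(2)}$. Your argument is more elementary---no Brenier map or optimal-transport regularity, only finite approximation and ReLU bump functions---and makes the dependence of the scales $\delta,\delta',\delta''$ on the separation $\eta$ (hence on $\epsilon$) fully explicit. The paper's lattice trick, on the other hand, targets $Q$ directly rather than through an intermediate atomic measure and retains the randomness of both $X_1$ and $X_2$, whereas your construction discards $X_2$ entirely (it survives only as $O(\delta)$ noise).
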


\begin{remark} A shared caveat of the universality construction in Theorem~\ref{thm:universal-no-padding} with the construction in \cite{huang2020augmented} is that the resulting network is poorly conditioned. In the case of the construction in \cite{huang2020augmented}, this is obvious because they  pad the $d$-dimensional training data with $d$ additional zeros, and a network that takes as input a Gaussian distribution in $\mathbb{R}^{2d}$ (i.e. has full support) and outputs data on $d$-dimensional manifold (the space of zero padded data) must have a singular Jacobian almost everywhere.\footnote{Alternatively, we could feed a degenerate Gaussian supported on a $d$-dimensional subspace into the network as input, but there is no way to train such a model using maximum-likelihood training, since the prior is degenerate.} In the case of Theorem~\ref{thm:universal-no-padding}, the condition number of the network blows up at least as quickly as $1/\epsilon$ as we take the approximation error $\epsilon \to 0$, so this network is also ill-conditioned if we are aiming for a very accurate approximation. 
\end{remark}
\begin{remark} Based on Theorem~\ref{thm:linear_lower_main}, the condition number blowup of either the Jacobian or the Hessian is necessary for a shallow model to be universal, even when approximating well-conditioned linear maps (see Remark~\ref{rmk:linear-vs-universal}). \add{The network constructed in Theorem~\ref{thm:universal-no-padding} is also consistent with the lower bound from Theorem~\ref{l:sizedepth}, because the network we construct in Theorem~\ref{thm:universal-no-padding} has a large Lipschitz constant and uses many parameters per layer.}
\end{remark}
\subsubsection{The effect of choice of partition on depth}
\label{ss:linear}

Next, we ask how much of a saving in terms of the depth of the network can one hope to gain from using learned partitions (ala GLOW) as compared to a fixed partition. More precisely: 

{\bf Question 1:} Can models like Glow \citep{kingma2018glow} be simulated by a sequence of affine blocks with a fixed partition without increasing the depth by much? 

We answer this question in the affirmative at least for equally sized partitions (which is what is typically used in practice). We show the following surprising fact: consider an arbitrary partition $(S, [2d] \setminus S)$ of $[2d]$, such that $S$ satisfies $|S| = d$, for $d \in \mathbb{N}$. Then for any invertible matrix $T \in \mathbb{R}^{2d \times 2d}$, the linear map $T: \mathbb{R}^{2d} \to \mathbb{R}^{2d}$ can be exactly represented by a composition of $O(1)$ affine coupling layers that are \emph{linear}, namely have the form $L_i(x_S, x_{[2d]\setminus S}) = (x_S, B_i x_{[2d] \setminus S} + A_i x_S)$ or $L_i(x_S, x_{[2d]\setminus S}) = (C_i x_{S} + D_i x_{[2d] \setminus S}, x_{[2d] \setminus S})$
for matrices $A_i, B_i, C_i, D_i\in \mathbb{R}^{d\times d}$, s.t. each $B_i, C_i$ is diagonal. For convenience of notation, without loss of generality let $S = [d]$. Then, each of the layers $L_i$ is a matrix of the form $\begin{bmatrix}I & 0\\ A_i & B_i\end{bmatrix}$ or $\begin{bmatrix} C_i & D_i\\ 0 & I\end{bmatrix}$, where the rows and columns are partitioned into blocks of size $d$. 

With this notation in place, we show the following theorem: 
\begin{theorem}
\label{thm:linear_simulation_main}
For all \add{$d\geq 4$}\remove{d\in \mathbb{N}}, there exists a \remove{$K \leq 47$} \add{$k \leq 24$} such that for any invertible $T \in \R^{2d \times 2d}$ with $\mbox{det}(T)>0$, there exist matrices $A_{i}, D_{i}\in \R^{d\times d}$ and diagonal matrices $B_{i}, C_{i}\in \R_{\ge 0}^{d\times d}$ for all $i \in [k]$ such that
\[T = \prod_{i=1}^{k}\begin{bmatrix}I & 0\\ A_i & B_i\end{bmatrix} \begin{bmatrix}C_i & D_i\\ 0 & I\end{bmatrix} \]
\end{theorem}
\add{Note that the condition $\det(T) > 0$ is required, since affine coupling networks are always orientation-preserving. Adding one diagonal layer with negative signs suffices to model general matrices.}
In particular, since permutation matrices are invertible, this means that any applications of permutations to achieve a different partition of the inputs (e.g. like in Glow \citep{kingma2018glow}) can in principle be represented as a composition of not-too-many affine coupling layers.

It's a reasonable to ask how optimal the \remove{$K = 47$} \add{$k \le 24$} bound is -- we supplement our upper bound with a lower bound, namely that \remove{$K \ge 5$} \add{$k \ge 3$}. This is surprising, as naive parameter counting would suggest \remove{$K = 4$} \add{$k = 2$} might work. Namely, we show: 

\begin{theorem}
\label{thm:linear_lower_main}
For all \add{$d\geq 4$}\remove{d \in \mathbb{N}} and \remove{$K \le 4$} \add{$k \le 2$}, there exists an invertible $T \in \R^{2d \times 2d}$ with $\mbox{det}(T)>0$, s.t. for all $A_{i}, D_{i}\in \R^{d\times d}$ and for all diagonal matrices $B_{i}, C_{i}\in \R_{\ge 0}^{d\times d}, i \in [k]$ it holds that
\[T \neq \prod_{i=1}^{k}\begin{bmatrix}I & 0\\ A_i & B_i\end{bmatrix} \begin{bmatrix}C_i & D_i\\ 0 & I\end{bmatrix} \]
\end{theorem}

Beyond the relevance of this result in the context of how important the choice of partitions is, it also shows a lower bound on the depth for an equal number of \emph{nonlinear} affine coupling layers (even with quite complex functions $s$ and $t$ in each layer) -- since a nonlinear network can always be linearized about a (smooth) point to give a linear network with the same number of layers.

\begin{corollary}
\label{cor:lin_lower_bound}
There exists a continuous function $f$ which cannot be exactly represented by a depth-4 affine coupling network with arbitrary continuously differentiable functions as the $s$ and $t$ functions in each block.
\end{corollary}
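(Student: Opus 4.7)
The plan is to reduce to Theorem~\ref{thm:linear_lower_main} by a linearization argument. Let $T \in \mathbb{R}^{2d \times 2d}$ be the invertible matrix with $\det(T) > 0$ whose existence (for $k = 2$) is guaranteed by Theorem~\ref{thm:linear_lower_main}, and take the witness $f \colon \mathbb{R}^{2d} \to \mathbb{R}^{2d}$ to be the linear map $f(x) = Tx$, which is certainly continuous. Suppose for contradiction that $f$ can be written as a composition $f = f_4 \circ f_3 \circ f_2 \circ f_1$ of four affine coupling blocks with $C^1$ functions $s_i, t_i$ (so that $s_i > 0$ to keep each block invertible, as in Definition~\ref{eq:affineblock}). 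WLOG the blocks use alternating partitions: if two consecutive blocks use the same partition $S$, their composition is again an affine coupling block with partition $S$ (new scale $s_i \cdot s_{i+1}$, new shift $t_{i+1} + s_{i+1} \odot t_i$), so after merging and padding with identity couplings we still have at most $k = 2$ pairs of alternating coupling layers.

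Next, I compute the Jacobian of each block. For a block fixing coordinates $S = [d]$,
\[
J_{f_i}(x) \;=\; \begin{bmatrix} I & 0 \\ \mathrm{diag}(x_{[2d]\setminus S}) J_{s_i}(x_S) + J_{t_i}(x_S) & \mathrm{diag}(s_i(x_S)) \end{bmatrix},
\]
whose lower-right block is diagonal with strictly positive entries; the analogous formula for blocks fixing $[2d]\setminus S$ produces a matrix of the shape $\begin{bmatrix} C_i & D_i \\ 0 & I \end{bmatrix}$ with $C_i$ diagonal positive. Evaluating the chain rule at $x = 0$ yields
\[
T \;=\; J_f(0) \;=\; J_{f_4}(y_4)\, J_{f_3}(y_3)\, J_{f_2}(y_2)\, J_{f_1}(0),
\]
where $y_i$ are intermediate points along the composition. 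This exhibits $T$ as a product of $k = 2$ alternating pairs of matrices of exactly the form prescribed by Theorem~\ref{thm:linear_lower_main} (the diagonal blocks lie in $\mathbb{R}_{\ge 0}^{d \times d}$), contradicting that theorem. Hence no such decomposition of $f$ exists.

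The only delicate point I expect is the bookkeeping around the alternation convention: one needs to argue that ``depth $4$'' in the corollary cleanly matches ``$k = 2$'' in Theorem~\ref{thm:linear_lower_main}, which is what the merge-and-pad observation above handles. Everything else is a direct chain-rule computation, using only that the Jacobian blocks produced by an affine coupling have the same block structure whether the coupling is linear or nonlinear.
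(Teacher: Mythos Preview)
Your proof is correct and follows the same linearization strategy as the paper: take the matrix $T$ from Theorem~\ref{thm:linear_lower_main}, set $f(x)=Tx$, and derive a contradiction by computing the Jacobian of a hypothetical depth-$4$ affine coupling decomposition via the chain rule. Your version is in fact more careful than the paper's own proof---you write out the Jacobian of a single coupling block explicitly, observe that its block-triangular structure with positive diagonal persists at \emph{any} evaluation point (so the intermediate points $y_i$ cause no trouble), and include the merge-and-pad reduction to alternating partitions, whereas the paper simply asserts that the Jacobian of the network factors ``precisely'' as in Theorem~\ref{thm:linear_lower_main}. The one point you flag about matching ``depth $4$'' to ``$k=2$'' is handled the same way in both arguments, and the residual ambiguity about whether the alternation begins $LULU$ or $ULUL$ is shared with the paper's own proof (and is resolved by the proof of the restated Theorem~\ref{t:lbdlin}, which rules out both orderings).
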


The proof is in Appendix \ref{apdx:corollary}. In other words, studying linear affine coupling networks lets us prove a \emph{depth lower bound}/\emph{depth separation} for nonlinear networks for free.


Finally, in Section \ref{l:explin}, we include an empirical investigation of our theoretical results on synthetic data, by fitting random linear functions of varying dimensionality with linear affine networks of varying depths in order to see the required number of layers. The results there suggest that the constant in the upper bound is quite loose -- and the correct value for $k$ is likely closer to the lower bound -- at least for random matrices.
\begin{remark}[Significance of Theorem~\ref{thm:linear_simulation_main} for Approximation in Likelihood/KL]
 All of the universality results in the literature for normalizing flows, including Theorem~\ref{thm:universal-no-padding}, prove universality in the Wasserstein distance or in the related sense of convergence of distributions. A stronger and probably much more difficult problem is to prove universality under the KL divergence instead: i.e. to show for a well-behaved distribution $P$, there exists a sequence $Q_n$ of distributions generated by normalizing flow models such that 
 \begin{equation}\label{eqn:kl-goal} \text{KL}(P,Q_n) \to 0. \end{equation}
 This is important because Maximum-Likelihood training attempts to pick the model with the smallest KL divergence to the empirical distribution, not the smallest Wasserstein distance, and the minimizers of these two objectives can be extremely different. For $P = N(0,\Sigma)$, Theorem~\ref{thm:linear_simulation_main} certainly implies \eqref{eqn:kl-goal} for bounded depth linear affine couplings, and thus gives the first proof that global optimization of the max-likelihood objective with unlimited data of a normalizing flow model would successfully learn a Gaussian with arbitrary nondegenerate $\Sigma$; see Appendix~\ref{apdx:mle}.
\end{remark}
\subsection{Results about General Architectures} 
In order to guarantee that the network is invertible, normalizing flow models place significant restrictions on the architecture of the model. The most basic and general question we can ask is how this restriction affects the expressive power of the model --- in particular, how much the depth must increase to compensate. More precisely, we ask:   

{\bf Question 2:} is there a distribution 
over $\mathbb{R}^d$ which can be written as the pushforward of a Gaussian through a small, shallow generator, which cannot be \add{approximated} by the pushforward of a Gaussian through a small, shallow \emph{\add{layerwise} invertible} neural network? 

Given that there is great latitude in terms of the choice of layer architecture, while keeping the network invertible, the most general way to pose this question is to require each layer to be a function of $p$ parameters -- i.e.    
$f = f_1 \circ f_2 \circ \cdots \circ f_{\ell}$
where $\circ$ denotes function composition and each $f_i : \mathbb{R}^d \to \mathbb{R}^d$ is an invertible function specified by a vector $\theta_i \in \mathbb{R}^p$ of parameters. This framing is extremely general: for instance it includes \emph{layerwise invertible feedforward networks} 
in which $f_i(x) = \sigma^{\otimes d}(A_i x + b_i)$, $\sigma$ is invertible, $A_i \in \mathbb{R}^{d \times d}$ is invertible, $\theta_i = (A_i,b_i)$ and $p = d(d + 1)$.
It also includes popular architectures based on affine coupling blocks which we discussed in more detail in the previous subsection.  


We answer this question in the \remove{negative} \add{affirmative}: namely, we show
\add{for any $k$} 
that there is a distribution over $\mathbb{R}^d$ which can be expressed as the pushforward of a network with depth $O(1)$ and size $O(k)$ that cannot be (even very approximately) expressed as the pushforward of a Gaussian through a Lipschitz \add{layerwise} invertible network of depth smaller than $k/p$.



Towards formally stating the result, let $\theta = (\theta_1,\ldots,\theta_{\ell}) \in \Theta \subset \mathbb{R}^{d'}$  be the vector of all parameters (e.g. weights, biases) in the network, where $\theta_i \in \mathbb{R}^p$ are the parameters that correspond to layer $i$, and let $f_{\theta} : \mathbb{R}^d \to \mathbb{R}^d$ denote the resulting function. Define $R$ so that $\Theta$ is contained in the Euclidean ball of radius $R$. 

We say the family $f_{\theta}$ is $L$-\emph{Lipschitz with respect to its parameters and inputs}, if 
$$\forall \theta, \theta ' \in \Theta: \E_{x \sim \mathcal N(0,I_{d \times d})} \|f_{\theta}(x) - f_{\theta'}(x)\|\leq L \|\theta - \theta'\| $$
and $\forall x,y \in \mathbb{R}^d, \|f_{\theta}(x) - f_{\theta}(y)\| \le L \|x - y\|$. \footnote{Note for architectures having trainable biases in the input layer, these two notions of Lipschitzness should be expected to behave similarly.}
We will discuss the reasonable range for $L$ in terms of the weights after the Theorem statement. 
We show\footnote{\add{In this Theorem and throughout, we use the standard asymptotic notation $f(d) = o(g(d))$ to indicate that $\lim\sup_{d \to \infty} \frac{f(d)}{g(d)} = 0$. For example, the assumption $k,L,R = \exp(o(d))$ means that for any sequence $(k_d,L_d,R_d)_{d = 1}^{\infty}$ such that $\lim\sup_{d \to \infty} \frac{\max(\log k_d, \log L_d, \log R_d)}{d} = 0$ the result holds true.}}:
\begin{theorem}
\label{thm:separation}
\remove{For every $d \in \mathbb{N}, \sigma > 0$ and}
\add{For any } $k = \exp(o(d)), L = \exp(o(d)), R = \exp(o(d))$, \add{we have that for $d$ sufficiently large and any $\gamma > 0$} there exists a neural network $g: \mathbb{R}^{d+1} \to \mathbb{R}^d$ with $O(k)$ parameters and depth $O(1)$, s.t. for any family $\{f_{\theta}, \theta \in \Theta\}$ of layerwise invertible networks that are $L$-Lipschitz  with respect to its parameters and inputs, have $p$ parameters per layer and depth at most $k/p$ we have 
\remove{$\forall \theta \in \Theta, W_1((f_{\theta})_{\# \mathcal{N}}, g_{\#\mathcal{N}}) \geq 10 \sigma^2 d$}
\add{$$\forall \theta \in \Theta, W_1((f_{\theta})_{\# \mathcal{N}}, g_{\#\mathcal{N}}) \geq 10 \gamma^2 d$$}
Furthermore, for all $\theta \in \Theta,$  
$ \mbox{KL}((f_{\theta})_{\# \mathcal{N}}, g_{\#\mathcal{N}}) \geq 1/10$ and  $\mbox{KL}(g_{\# \mathcal{N}}, (f_{\theta})_{\#\mathcal{N}}) \geq \frac{10 \gamma^2 d}{L^2} $. 

\label{l:sizedepth}
\end{theorem}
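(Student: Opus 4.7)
The proof is a counting and covering argument that pits the bounded metric entropy of the invertible family against a carefully constructed multi-modal target. The hard distribution $g_{\#\mathcal{N}}$ will be close to a mixture $\tfrac{1}{k}\sum_{i=1}^k \mathcal{N}(c_i, \gamma^2 I_d)$ of $k$ Gaussians whose means $c_1,\ldots,c_k$ lie on the sphere of radius $r := 10\gamma^2 d$. I would implement the network $g : \mathbb{R}^{d+1}\to \mathbb{R}^d$ by using the extra Gaussian coordinate $z_0$ as a ``mixture selector'': a small $O(k)$-parameter piecewise-linear ReLU network converts $z_0$ via a quantile transform into an index $i(z_0)\in [k]$ with approximately uniform distribution, and the output is $c_{i(z_0)} + \gamma(z_1,\ldots,z_d)$. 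With a structured choice of centers (for example, $c_i = r\,e_{\sigma(i)}$ indexed by some $\sigma:[k]\to[d]$), the whole construction uses $O(k)$ parameters at $O(1)$ depth.

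The core of the argument is a covering of the invertible family. Since the total parameter count is at most $p\cdot(k/p)=k$, we have $\Theta\subset B(0,R)\subset\mathbb{R}^{d'}$ with $d'\le k$, admitting an $\epsilon$-net of size at most $(3R/\epsilon)^{k}$. The $L$-Lipschitz-in-parameters assumption lifts this to a $W_1$-net $\{(f_{\theta_j})_{\#\mathcal{N}}\}_{j=1}^N$ of the pushforward family at scale $\epsilon L$, and taking $\epsilon L = r/2$ gives $N \le (6RL/r)^k$. It then suffices to find $g$ with $W_1(g_{\#\mathcal{N}},(f_{\theta_j})_{\#\mathcal{N}})\ge 3r/2$ for all $j$, since by triangle inequality this yields $W_1\ge r$ from every $(f_\theta)_{\#\mathcal{N}}$. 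I would exhibit such a $g$ by the probabilistic method: sample each center $c_i$ independently from an exponentially large code of well-separated directions on $r\cdot S^{d-1}$, and use Kantorovich--Rubinstein duality together with Hoeffding's inequality applied to the empirical mean $\tfrac{1}{k}\sum_i \phi(c_i)$ (for a suitably discretized family of $1$-Lipschitz dual witnesses $\phi$) to bound $\Pr_C[W_1(g_{C,\#\mathcal{N}},\mu) < 3r/2]\le e^{-\Omega(d)}$ for any fixed $\mu$. A union bound over the $N$ cover points closes the argument under the assumption $k,R,L=\exp(o(d))$.

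Once the $W_1$ lower bound is in hand, both KL bounds follow from Bobkov--G\"otze--style $T_1$ transport-entropy inequalities. Any $L$-Lipschitz pushforward of $\mathcal{N}(0,I_d)$ is $L$-subgaussian and obeys $W_1(\mu,\nu)^2 \le 2L^2\,\mathrm{KL}(\nu\|\mu)$, giving $\mathrm{KL}(g_{\#\mathcal{N}}\|(f_\theta)_{\#\mathcal{N}})\ge \Omega(r^2/L^2)$ which implies the stated bound; applying the inequality with $\mu = g_{\#\mathcal{N}}$ (subgaussian at scale $O(\gamma\sqrt{d}+r)$ as a well-concentrated Gaussian mixture) gives the reverse bound of $\Omega(1)$. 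The main obstacle is the probabilistic step: bounding $\Pr_C[W_1(g_C,\mu) < 3r/2]$ uniformly over arbitrary fixed $\mu$ is delicate because high-dimensional spherical mixtures concentrate in Wasserstein space around their mean distribution, so naive uniform bounds fail; closing this gap requires either exploiting the Lipschitz-Gaussian-pushforward structure of the specific $\mu$'s appearing in the cover, or restricting attention to test functions $\phi$ --- such as $\phi(x)=\max_i \langle u_i, x\rangle$ for well-chosen directions $u_i$ --- that cleanly separate ``clustered'' from ``unimodal'' distributions.
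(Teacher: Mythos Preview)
Your high-level framework matches the paper exactly: the hard distribution is a mixture of $k$ spherical Gaussians with centers on a sphere of radius $\Theta(\gamma\sqrt{d})$, the generator $g$ uses one extra Gaussian coordinate as a component selector, the invertible family is handled by an $\epsilon$-net of size $(LR/\epsilon)^{O(k)}$ in parameter space lifted to $W_1$, and the KL bounds come from Bobkov--G\"otze applied in both directions (using $L$-subgaussianity of $(f_\theta)_{\#\mathcal N}$ and $O(\gamma^2 d)$-subgaussianity of $g_{\#\mathcal N}$). On all of these pieces you and the paper agree.

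The substantive divergence is in the existence step. You propose a probabilistic method: draw the centers at random and union-bound over the $N$ cover points. This does not close, and not only for the delicate reason you flag. Quantitatively, Hoeffding on $\tfrac{1}{k}\sum_i\phi(c_i)$ for a fixed $1$-Lipschitz $\phi$ yields concentration $e^{-\Omega(k)}$ (there are $k$ independent summands bounded by $O(r)$), and your stated $e^{-\Omega(d)}$ is no better; neither survives a union bound over $N$ with $\log N = k\cdot\log(RL/\epsilon) = k\cdot o(d)$, since for the full range $k=\exp(o(d))$ you need a failure probability of $e^{-\omega(k\cdot o(d))}$, essentially $e^{-\Omega(kd)}$. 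The obstacle you identify at the end---that random spherical mixtures concentrate in $W_1$ around their population mean, so the ``far from arbitrary fixed $\mu$'' event need not be rare in the right way---is real and makes the direct probabilistic route genuinely hard.

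The paper bypasses this by a \emph{packing} rather than a probabilistic argument. It first fixes a Gilbert--Varshamov style code $\{v_1,\ldots,v_{N_0}\}\subset S^{d-1}$ of size $N_0=e^{\Omega(d)}$ with pairwise separation $\Omega(1)$, and then invokes a combinatorial lemma (R\"odl) producing $(N_0/2k)^{k/10}=e^{\Omega(kd)}$ many $k$-subsets of this code, any two of which share at most $k/10$ indices. Each such subset gives a mixture, and a direct Kantorovich--Rubinstein computation (take $\varphi(x)=\max\{0,\,2\gamma\sqrt d-\min_{i\in S}\|x-\mu_i\|\}$) shows that two mixtures whose center sets overlap in at most $k/10$ points are $\Omega(\gamma\sqrt d)$ apart in $W_1$. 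Now it is pure counting: the packing has $e^{\Omega(kd)}$ elements while the cover has only $e^{k\cdot o(d)}$, and each cover point can be $W_1$-close to at most one packing element by the triangle inequality, so some mixture in the packing is far from the entire cover. This replaces your ``random centers are far from fixed $\mu$'' estimate with the much easier ``many mixtures are pairwise far from each other'', which is exactly the missing idea.
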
 

\begin{remark} First, note that while the number of parameters in both networks is comparable (i.e. it's \remove{$\Theta(k)$} \add{$O(k)$}), the invertible network is deeper, which usually is accompanied with algorithmic difficulties for training, due to vanishing and exploding gradients. For layerwise invertible generators, if we assume that the nonlinearity $\sigma$ is $1$-Lipschitz and each matrix in the network has operator norm at most $M$, then a depth $\ell$ network will have $L = O(M^{\ell})$\footnote{Note, our theorem applies to exponentially large Lipschitz constants.} and \remove{$p = \Theta(d^2)$} \add{$p = O(d^2)$}. For an affine coupling network with $g,h$ parameterized by $H$-layer networks with $p/2$ parameters each, $1$-Lipschitz activations and weights bounded by $M$ as above, we would similarly have $L = O(M^{\ell H})$. 
\end{remark}
\begin{remark} We make a couple of comments on the ``hard'' distribution $g$ we construct, as well as the meaning of the parameter $\gamma$ and how to interpret the various lower bounds in the different metrics. The distribution $g$ for a given \remove{$\sigma$} \add{$\gamma$} will in fact be close to a mixture of $k$ Gaussians, each with mean on the sphere of radius \remove{$10 \sigma^2 d$} \add{$10 \gamma^2 d$} and covariance matrix \remove{$\sigma^2 I_d$} \add{$\gamma^2 I_d$}. Thus this distribution has most of it's mass in a sphere of radius \remove{$O(\sigma^2 d)$} \add{$O(\gamma^2 d)$} --- so the Wasserstein guarantee gives close to a trivial approximation for $g$. The KL divergence bounds are derived by so-called transport inequalities between KL and Wasserstein for subgaussian distributions \citep{bobkov1999exponential}. The discrepancy between the two KL divergences comes from the fact that the functions $g, f_{\theta}$ may have different Lipschitz constants, hence the tails of $g_{\# \mathcal{N}}$ and $f_{\# \mathcal{N}}$ behave differently. In fact, if the function $f_{\theta}$ had the same Lipschitz constant as $g$, both KL lower bounds would be on the order of a constant. 
\end{remark}





\paragraph{Practical takeaways from our results.} 
Theorem~\ref{thm:linear_simulation_main} suggests the (representational) value of 1x1 convolutions as in \cite{kingma2018glow} is limited, as we can simulate them with a (small) constant number of affine couplings. Theorem \ref{thm:universal-no-padding} shows that though affine couplings are universal approximators (even without padding),  such constructions may result in poorly conditioned networks, even if the target distributions they are approximating are nondegenerate. Finally, Theorem \ref{l:sizedepth} makes quantitative the intuition that normalizing flow models with small layers may need to be deep to model complex distributions.
\section{Proof of Theorem \ref{thm:universal-no-padding}: Universal Approximation with Ill-Conditioned Affine Coupling Networks}
In this section, we prove Theorem~\ref{thm:universal-no-padding} that shows how to approximate a distribution in $\mathbb{R}^{n}$ using three layers of affine coupling networks, where the dimension $n = 2d$ is even. The partition in the affine coupling network is between the first $d$ coordinates and second $d$ coordinates in $\mathbb{R}^{2d}$.

First (as a warmup), we give a much simpler proof than \cite{huang2020augmented} that affine coupling networks are universal approximators in Wasserstein under zero-padding, which moreover shows that only a small number of affine coupling layers are required. For $Q$ a probability measure over $\mathbb{R}^n$ satisfying weak regularity conditions (see Theorem~\ref{thm:brenier} below), by Brenier's Theorem \citep{villani2003topics} there a $W_2$ optimal transport map
\[ \varphi : \mathbb{R}^n \to \mathbb{R}^n \]
such that if $X \sim N(0,I_{n \times n})$, then the pushforward $\varphi_{\#}(X)$ is distributed according to $Q$, and a corresponding transport map in the opposite direction which we denote $\varphi^{-1}$. If we allow for arbitrary functions $t$ in the affine coupling network, then we can implement the zero-padded transport map $(X,0) \mapsto (\varphi(X),0)$ as follows:
\begin{equation}\label{eqn:zero-padding-map}
(X,0) \mapsto (X, \varphi(X)) \mapsto (\varphi(X),\varphi(X)) \mapsto (\varphi(X),0).
\end{equation}
Explicitly, in the first layer the translation map is $t_1(x) = \varphi(x)$, in the second layer the translation map is $t_2(x) = x - \varphi^{-1}(x)$, and in the third layer the translation map is $t_3(x) = -x$. Note that no scaling maps are required: with zero-padding the basic NICE architecture can be universal, unlike in the unpadded case where NICE can only hope to implement volume preserving maps. This is because every map from zero-padded data to zero-padded data is volume preserving. Finally, if we are required to implement the translation maps using neural networks, we can use standard approximation-theoretic results for neural networks, combined with standard results from optimal transport, to show universality of affine coupling networks in Wasserstein. First, we recall the formal statement of Brenier's Theorem:
\begin{theorem}[Brenier's Theorem, Theorem 2.12 of \cite{villani2003topics}]\label{thm:brenier}
Suppose that $P$ and $Q$ are probability measures on $\mathbb{R}^n$ with densities with respect to the Lebesgue measure. Then $Q = (\nabla \psi)_{\#} P$ for $\psi$ a convex function, and moreover $\nabla \psi$ is the unique $W_2$-optimal transport map from $P$ to $Q$.
\end{theorem}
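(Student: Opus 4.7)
The plan is to prove Brenier's theorem through the Kantorovich relaxation followed by cyclical monotonicity. First I would set up the Kantorovich problem: instead of looking directly for a transport map $T$ with $T_\# P = Q$ minimizing $\int \|x-T(x)\|^2 dP(x)$, I would minimize $\int \|x-y\|^2 d\pi(x,y)$ over the set $\Pi(P,Q)$ of couplings with marginals $P$ and $Q$. Existence of a minimizer $\pi^\ast$ follows from a standard compactness argument: $\Pi(P,Q)$ is tight (hence weakly sequentially pre-compact by Prokhorov) and weakly closed, while the cost functional is lower semicontinuous under weak convergence (it is an integral of a nonnegative continuous function, so Portmanteau/Fatou applies after a truncation).

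Next I would use the structural characterization of optimal couplings: the support of any optimizer for the quadratic cost must be a cyclically monotone subset of $\mathbb{R}^n \times \mathbb{R}^n$. This is the classical ``$c$-cyclical monotonicity'' argument — if the support were not cyclically monotone, one could construct a perturbation of $\pi^\ast$ by swapping mass along a finite cycle that strictly decreases the cost, contradicting optimality. Then I would invoke Rockafellar's theorem: every cyclically monotone set in $\mathbb{R}^n \times \mathbb{R}^n$ is contained in the graph of the subdifferential of some lower semicontinuous convex function $\psi$. Thus $\mathrm{supp}(\pi^\ast) \subseteq \{(x,y) : y \in \partial\psi(x)\}$.

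Now I would bring in the hypothesis that $P$ has a density with respect to Lebesgue measure. Since $\psi$ is convex, it is differentiable everywhere except on a set of Hausdorff dimension at most $n-1$ (this is the standard fact that the non-differentiability set of a convex function on $\mathbb{R}^n$ is Lebesgue-negligible), so $\nabla\psi$ is defined $P$-almost everywhere. Combined with the inclusion above, for $\pi^\ast$-a.e. $(x,y)$ we have $y = \nabla\psi(x)$, which shows $\pi^\ast$ is concentrated on the graph of $\nabla\psi$. This forces $\pi^\ast = (\mathrm{id} \times \nabla\psi)_\# P$, and by looking at the second marginal we conclude $Q = (\nabla\psi)_\# P$, giving existence of the Brenier map.

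For uniqueness among $W_2$-optimal transport maps, suppose $T$ is another map with $T_\# P = Q$ minimizing $\int \|x-T(x)\|^2 dP(x)$. Then $(\mathrm{id} \times T)_\# P$ is a Kantorovich optimizer, so by the same argument its support is contained in $\partial\psi'$ for some convex $\psi'$, giving $T(x) = \nabla\psi'(x)$ $P$-a.e. A convex conjugation / duality argument (or a direct cyclical monotonicity comparison between $\nabla\psi$ and $\nabla\psi'$, combined with the fact that both push $P$ forward to $Q$) then forces $T = \nabla\psi$ $P$-a.e. The main obstacles are (i) the passage from cyclical monotonicity to the existence of the convex potential $\psi$ (Rockafellar's theorem), which I would cite rather than reprove, and (ii) verifying Lebesgue-a.e. differentiability of $\psi$ so that the coupling concentrates on an actual graph; everything else is compactness and bookkeeping.
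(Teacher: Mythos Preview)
The paper does not prove this statement at all: Theorem~\ref{thm:brenier} is quoted verbatim as ``Theorem 2.12 of \cite{villani2003topics}'' and used as a black box in the proof of Theorem~\ref{thm:universal-zero}. There is nothing to compare your proposal against.

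That said, your outline is the standard route to Brenier's theorem (Kantorovich relaxation $\to$ cyclical monotonicity of the optimal support $\to$ Rockafellar's theorem producing a convex potential $\to$ Lebesgue-a.e.\ differentiability of convex functions to pass from $\partial\psi$ to $\nabla\psi$), and it is essentially what one finds in \cite{villani2003topics}. The sketch is sound; the only place I would tighten is the uniqueness paragraph, where ``a convex conjugation / duality argument \ldots\ forces $T = \nabla\psi$ $P$-a.e.'' is doing real work. The clean way is: any two optimal plans $\pi_1,\pi_2$ have a convex combination $\tfrac12(\pi_1+\pi_2)$ that is also optimal, hence supported in some $\partial\psi$; since $\psi$ is differentiable $P$-a.e., both $\pi_1$ and $\pi_2$ must concentrate on the graph of the \emph{same} $\nabla\psi$, giving $P$-a.e.\ equality of the induced maps. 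But for the purposes of this paper none of this is needed---you can simply cite the result as the authors do.
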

It turns out that the transportation map $\varphi := \nabla \psi$ is not always a continuous function, however there are simple sufficient conditions for the distribution $Q$ under which the map is continuous (see e.g. \cite{caffarelli1992regularity}). From these results (or by directly smoothing the transport map), we know any distribution with bounded support can be approached in Wasserstein distance by smooth pushforwards of Gaussians. So for simplicity, we state the following Theorem for distributions which are the pushforward of smooth maps.
\begin{theorem}[Universal approximation with zero-padding]\label{thm:universal-zero}
Suppose that $P$ is the standard Gaussian measure in $\mathbb{R}^n$ and $Q = \varphi_{\#} P$ is the pushforward of the Gaussian measure through $\varphi$ and $\varphi$ is a smooth map. Then for any $\epsilon > 0$ there exists a depth 3 affine coupling network $g$ with no scaling and feedforward ReLU net translation maps such that $W_2(g_{\#} (P \times \delta_{0^n}), Q \times \delta_{0^n}) \le \epsilon$.
\end{theorem}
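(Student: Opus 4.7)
The plan is to instantiate explicitly the three-layer construction from \eqref{eqn:zero-padding-map}, using feedforward ReLU networks to approximate the exact translation maps
\[ t_1(x) = \varphi(x), \qquad t_2(x) = x - \varphi^{-1}(x), \qquad t_3(x) = -x. \]
Since $Q$ is the pushforward of the Gaussian through a smooth map $\varphi$, one first needs that $\varphi^{-1}$ makes sense as a smooth function on $\varphi(\mathbb{R}^n)$; this follows from the fact that a smooth Brenier map is a diffeomorphism onto its image (or, alternatively, we may replace $\varphi$ by a small smoothing with a smooth invertible Jacobian without changing $Q$ appreciably in $W_2$, and then invert). Call the three exact layers $F_1, F_2, F_3$, so that the ideal composition sends $(X,0) \mapsto (\varphi(X),0)$ identically when $X \sim P$.

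First, I would truncate: pick a radius $R$ large enough that $P(\|X\| > R)$ contributes less than $\epsilon^2/4$ to the squared $W_2$ cost after accounting for the size of $\varphi$ on the tail (which is controlled since we only need $\mathbb{E}\|\varphi(X)\|^2 \mathbf 1_{\|X\|>R}$ to be small, and $\varphi_{\#} P = Q$ has bounded support). On the compact set $B_R$, together with its image $\varphi(B_R)$, each of $\varphi$, $\varphi^{-1}$ and the identity is Lipschitz with some constant $L_0$, and hence so is each layer $F_i$ as a map on $\mathbb{R}^{2n}$. Next I would apply a standard quantitative universal approximation theorem for ReLU nets to build $\widetilde t_i$ that are uniformly $\eta$-close to $t_i$ on the appropriate ball, and moreover globally Lipschitz with constant comparable to $L_0$ (using e.g.\ clipping outside $B_R$ via extra ReLU units so the approximation does not blow up on the tail). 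Denote the resulting affine coupling layers $\widetilde F_i$ and the 3-layer network $\widetilde g := \widetilde F_3 \circ \widetilde F_2 \circ \widetilde F_1$.

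To control $W_2(\widetilde g_{\#}(P\times \delta_{0^n}), Q\times \delta_{0^n})$, I would use the obvious coupling $(\widetilde g(X,0),(\varphi(X),0))$ and bound its mean squared distance. On $\{X \in B_R\}$, a telescoping argument propagates the per-layer sup-norm error $\eta$ through the Lipschitz constants of $\widetilde F_2$ and $\widetilde F_3$, giving a pointwise bound of the form $O((1+L_0)^2\eta)$. On $\{X \notin B_R\}$, the Lipschitz bounds on $\widetilde F_i$ and tail integrals for the Gaussian give a contribution that tends to $0$ as $R \to \infty$. Choosing $R$ first and then $\eta$ small enough yields $W_2 \le \epsilon$.

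The main obstacle is the tension between ReLU universal approximation, which is naturally uniform only on compact sets, and the unbounded support of the Gaussian prior together with the fact that errors compose multiplicatively in Lipschitz constants across the three layers. The technical crux is to construct the ReLU approximants so that they are simultaneously (i) uniformly accurate on a large ball and (ii) globally Lipschitz with a controlled constant, so that the tail contribution to $W_2$ is dominated by standard Gaussian tail estimates rather than by an exploding approximation error; everything else is straightforward bookkeeping.
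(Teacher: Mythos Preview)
Your proposal is correct and follows the same approach as the paper: the three-layer construction \eqref{eqn:zero-padding-map}, ReLU approximation of $\varphi$ and $\varphi^{-1}$ on a compact set, and truncation to handle the unbounded Gaussian input. The paper's execution differs only in a minor technical detail---rather than making the approximants globally Lipschitz and bounding the tail contribution separately, it composes the first translation with coordinate-wise clipping on both input and output, $\tilde t_1(x) = f_M(\tilde\varphi(f_M(x)))$, so that the output of layer~1 always lies in $[-M,M]^n$ and the later layers only ever see bounded inputs; this sidesteps your global-Lipschitz bookkeeping but is equivalent in spirit.
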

\begin{proof}
For any $M > 0$, let $f_M(x) = \min(M, \max(-M, x))$ be the 1-dimensional truncation map to $[-M,M]$ and for a vector $x \in \mathbb{R}^n$ let $f_M(x) \in [-M,M]^n$ be the result of applying $f_M$ coordinate-wise. Note that $f_M$ can be implemented as a ReLU network with two hidden units per input dimension. Also, any continuous function on $[-M,M]^n$ can be approximated arbitrarily well in $L^{\infty}$ by a sufficiently large ReLU neural network with one hidden layer \citep{leshno1993multilayer}. Finally,  note that if $\|f - g\|_{L^{\infty}} \le \epsilon$ then for any distribution $P$ we have $W_2(f_{\#} P, g_{\#} P) \le \epsilon$ by considering the natural coupling that feeds the same input into $f$ and $g$.

Now we show how to approximate the construction of \eqref{eqn:zero-padding-map} using these tools. For any $\epsilon > 0$, if we choose $M$ sufficiently large and then take $\tilde{\varphi}$ and $\widetilde{\varphi^{-1}}$ to be sufficiently good approximations of $\varphi$ and $\varphi^{-1}$ on $[-M,M]^n$, we can construct an affine coupling network with ReLU feedforward network translation maps $\tilde{t}_1(x) = f_M(\tilde{\varphi}(f_M(x)))$, $\tilde{t}_2(x) = x - \widetilde{\varphi^{-1}}(x)$, and $\tilde{t}_3(x) = -x$, such that the output has $W_2$ distance at most $\epsilon$ from $Q$.
\end{proof}

\paragraph{Universality without padding.} We now show that universality in Wasserstein can be proved even if we don't have zero-padding, using a lattice-based encoding and decoding scheme. Let $\epsilon > 0$ be a small constant, to be taking sufficiently small. Let $\epsilon' \in (0,\epsilon)$ be a further constant, taken sufficiently small with respect to $\epsilon$ and similar for $\epsilon''$ wrt $\epsilon'$. Suppose the input dimension is $2n$, and let $X = (X_1,X_2)$ with independent $X_1 \sim N(0,I_{n \times n})$ and $X_2 \sim N(0,I_{n \times n})$ be the input to the affine coupling network. Let $f(x)$ be the map which rounds $x \in \mathbb{R}^n$ to the closest grid point in $\epsilon \mathbb{Z}^n$ and define $g(x) = x - f(x)$. Note that for a point of the form $z = f(x) + \epsilon' y$ for $y$ which is not too large, we have that $f(z) = f(x)$ and $g(z) = y$. Let $\varphi_1,\varphi_2$ be the desired orientation-preserving maps. 
Now we consider the following sequence of maps:
\begin{align} 
(X_1,X_2) &\mapsto (X_1, \epsilon' X_2 + f(X_1)) \label{eqn:f1}
\\
&\mapsto (f(\varphi_1(f(X_1),X_2)) + \epsilon' \varphi_2(f(X_1),X_2) + O(\epsilon''), \epsilon' X_2 + f(X_1)) \label{eqn:f2}\\
&\mapsto (f(\varphi_1(f(X_1),X_2)) + \epsilon' \varphi_2(f(X_1),X_2) + O(\epsilon''), \varphi_2(f(X_1),X_2) + O(\epsilon''/\epsilon')). \label{eqn:f3}
\end{align}
More explicitly, in the first step we take $s_1(x) = \log(\epsilon') \vec{1}$ and $t_1(x) = f(x)$. In the second step, we take $s_2(x) = \log(\epsilon'') \vec{1}$ and $t_2$ is defined by $t_2(x) = f(\varphi_1(f(x),g(x))) + \epsilon' \varphi_2(f(x),g(x))$. In the third step, we take $s_3(x) = \log(\epsilon'') \vec{1}$ and define $t_3(x) = \frac{g(x)}{\epsilon'}$.

Again, taking sufficiently good approximations to all of the maps allows to approximate this map with neural networks, which we formalize below. 

\begin{proof}[Proof of Theorem~\ref{thm:universal-no-padding}]
Turning \eqref{eqn:f1},\eqref{eqn:f2}, and \eqref{eqn:f3} into a universal approximation theorem for ReLU-net based feedforward networks just requires to modify the proof of Theorem~\ref{thm:universal-zero} for this scenario.

Fix $\delta > 0$, the above argument shows we can choose $\epsilon,\epsilon',\epsilon'' > 0$ sufficiently small so that if $h$ is map defined by composing \eqref{eqn:f1},\eqref{eqn:f2}, and \eqref{eqn:f3}, then  $W_2(h_{\#} P, Q) \le \epsilon/4$. The layers defining $h$ may not be continuous, since $f$ is only continuous almost everywhere. Using that continuous functions are dense in $L^2$, we can find a function $f_{\epsilon}$ which is continuous and such that if we define $h_{\epsilon}$ by replacing each application of $f$ by $f_{\epsilon}$, then $W_2(h_{\epsilon}\# P, Q) \le \epsilon/2$. 

Finally, since $f_{\epsilon}$ is an affine coupling network with continuous $s$ and $t$ functions, we can use the same truncation-and-approximation argument from Theorem~\ref{thm:universal-zero} to approximate it by an affine coupling network $g$ with ReLU feedforward $s$ and $t$ functions such that $W_2(g\# P, Q) \le \epsilon$, which proves the result.
\end{proof}

\subsection{Experimental Results}

On the empirical side, we explore the effect that different types of padding has on the training on various synthetic datasets. For Gaussian padding, this means we add to the $d$-dimensional training data point, an additional $d$ dimensions sampled from $N(0,I_d)$. We consistently observe that zero padding has the worst performance and Gaussian padding has the best performance. On Figure~\ref{fig:sub2} we show the performance of a simple RealNVP architecture trained via max-likelihood on a mixture of 4 Gaussians, as well as plot the condition number of the Jacobian during training for each padding method. The latter gives support to the fact that conditioning is a major culprit for why zero padding performs so badly. In Appendix ~\ref{ss:mnist_cifar_nvp} we provide figures from more synthetic datasets.   

\begin{figure}
        \centering
        \includegraphics[width=0.9\textwidth]{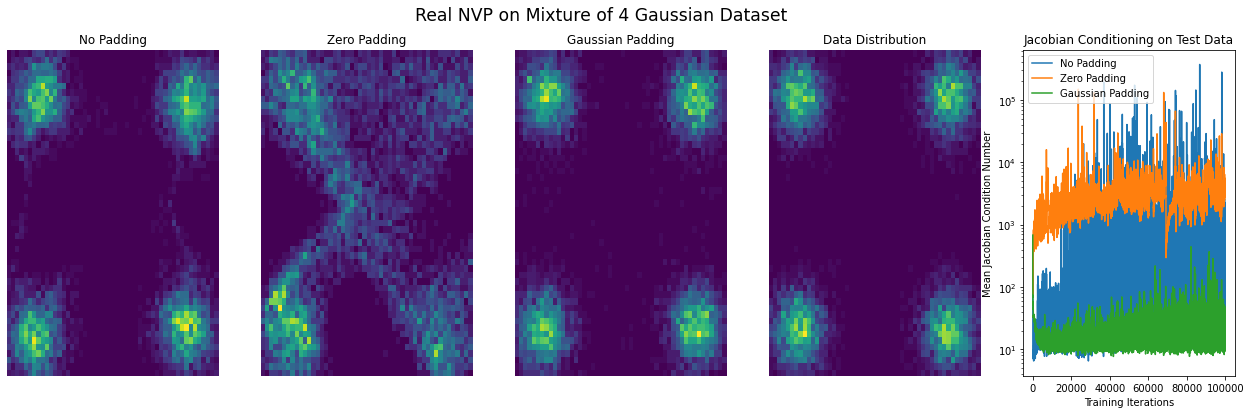} 
        \caption{Fitting a 4-component mixture of Gaussians using a RealNVP model with no padding, zero padding and Gaussian padding.}
        \label{fig:sub2}
\end{figure}



\section{Proof of Theorems \ref{thm:linear_simulation_main} and~\ref{thm:linear_lower_main}: Simulating Linear Functions with Affine Couplings}
\label{s:linsim}

In this section, we will prove Theorems \ref{thm:linear_lower_main} and \ref{thm:linear_simulation_main}. Before proceeding to the proofs, we will introduce a bit of helpful notation. 
We let $GL^+(2d,\mathbb{R})$ denote the group of $2d \times 2d$ matrices with positive determinant (see \cite{artin} for a reference on group theory). The lower triangular linear affine coupling layers are the subgroup $\mathcal{A_L} \subset GL^+(2d,\mathbb{R})$ of the form
\[ \mathcal{A_L} = \left\{\begin{bmatrix} I & 0 \\ A & B \end{bmatrix} : A \in \mathbb{R}^{d \times d}, \text{$B$ is diagonal with positive entries} \right\}, \]
and likewise the upper triangular linear affine coupling layers are the subgroup $\mathcal{A_U} \subset GL^+(2d,\mathbb{R})$ of the form
\[ \mathcal{A_U} = \left\{\begin{bmatrix} C & D \\ 0 & I \end{bmatrix} : D \in \mathbb{R}^{d \times d}, \text{$C$ is diagonal with positive entries} \right\}. \]
Finally, define $\mathcal{A} = \mathcal{A_L} \cup \mathcal{A_U} \subset GL^+(2d,\mathbb{R})$. This set is not a subgroup because it is not closed under multiplication. Let $\mathcal{A}^k$ denote the $k$th power of $\mathcal{A}$, i.e. all elements of the form $a_1 \cdots a_k$ for $a_i \in \mathcal{A}$.

\subsection{Upper Bound}
The main result of this section is the following:
\begin{theorem}[Restatement of Theorem \ref{thm:linear_simulation_main}]\label{thm:linear-upper-bound}
There exists an absolute constant $1 <  K \leq 47$ such that for any $d \ge 1$, $GL^+(2d,\mathbb{R}) = \mathcal{A}^K$.
\end{theorem}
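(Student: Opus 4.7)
The plan is to factor any $T \in GL^+(2d,\mathbb{R})$ into a product of a bounded number of $\mathcal{A}$-elements, combining a block LU decomposition with explicit constructive identities for a few special families of block matrices.

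I begin with two warm-up identities. The block swap $J = \begin{bmatrix} 0 & -I \\ I & 0 \end{bmatrix}$ lies in $\mathcal{A}^3$ via
\[ J = \begin{bmatrix} I & 0 \\ I & I \end{bmatrix} \begin{bmatrix} I & -I \\ 0 & I \end{bmatrix} \begin{bmatrix} I & 0 \\ I & I \end{bmatrix}, \]
each factor having $B = I$ or $C = I$ (positive diagonal); similarly $J^{-1} \in \mathcal{A}^3$. A symplectic-style identity
\[ \begin{bmatrix} M & 0 \\ 0 & M^{-1} \end{bmatrix} = \begin{bmatrix} I & -M \\ 0 & I \end{bmatrix} \begin{bmatrix} I & 0 \\ M^{-1} & I \end{bmatrix} \begin{bmatrix} I & -M \\ 0 & I \end{bmatrix} J^{-1} \]
then places the diagonal-inverse matrix $\mathrm{DI}(M) := \begin{bmatrix} M & 0 \\ 0 & M^{-1} \end{bmatrix}$ into $\mathcal{A}^6$ for any invertible $M \in \R^{d \times d}$.

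The heart of the argument is to show a one-sided block diagonal $\begin{bmatrix} I & 0 \\ 0 & N \end{bmatrix}$ with $N \in GL^+(d,\R)$ lies in $\mathcal{A}^{O(1)}$. Writing $N = \lambda N'$ with $\lambda = (\det N)^{1/d}$ and $N' \in SL(d,\R)$, the scalar piece $\begin{bmatrix} I & 0 \\ 0 & \lambda I \end{bmatrix}$ uses one $\mathcal{A_L}$-layer, and for $N'$ I invoke Goto's theorem (every element of a connected semisimple Lie group is a single commutator) to write $N' = h_1 h_2 h_1^{-1} h_2^{-1}$. Taking $g_i = h_i^{-1}$ and $X = g_2^{-1} g_1^{-1}$, direct computation then gives
\[ \mathrm{DI}(g_1) \cdot \mathrm{DI}(g_2) \cdot \mathrm{DI}(X) = \begin{bmatrix} g_1 g_2 X & 0 \\ 0 & g_1^{-1} g_2^{-1} X^{-1} \end{bmatrix} = \begin{bmatrix} I & 0 \\ 0 & N' \end{bmatrix}, \]
so $\begin{bmatrix} I & 0 \\ 0 & N' \end{bmatrix} \in \mathcal{A}^{18}$. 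A general block diagonal $\begin{bmatrix} N_1 & 0 \\ 0 & N_2 \end{bmatrix}$ with $\det N_1 \det N_2 > 0$ then factors as $\mathrm{DI}(N_1) \cdot \begin{bmatrix} I & 0 \\ 0 & N_1 N_2 \end{bmatrix}$, costing only a constant number of layers.

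Finally I reduce arbitrary $T$ to block diagonal via Schur block LU: if $T_{11}$ is invertible with $\det T_{11} > 0$,
\[ T = \begin{bmatrix} I & 0 \\ T_{21} T_{11}^{-1} & I \end{bmatrix} \begin{bmatrix} T_{11} & 0 \\ 0 & S \end{bmatrix} \begin{bmatrix} I & T_{11}^{-1} T_{12} \\ 0 & I \end{bmatrix}, \]
with $S = T_{22} - T_{21} T_{11}^{-1} T_{12}$ of positive determinant ($\det S = \det T / \det T_{11} > 0$). A single $\mathcal{A_U}$ pre-multiplication with generic $D$ handles the case when $T_{11}$ fails the preconditions, using that $[T_{11}; T_{21}]$ has full column rank $d$. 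Each step contributes only a constant number of layers, summing comfortably below $47$. The main obstacle is the one-sided block diagonal step: every $\mathrm{DI}(M)$ satisfies the rigid constraint that the bottom-right is the inverse of the top-left, and this constraint is preserved under $J$-conjugation, so purely $\mathrm{DI}$-based products can never escape this form. The commutator identity is the key non-obvious insight that breaks the symmetry, relying on the $d \geq 4$ hypothesis through the commutator-width-one property of $SL(d,\R)$.
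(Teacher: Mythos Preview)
Your argument is essentially correct and takes a genuinely different route from the paper's proof.

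\textbf{Comparison.} The paper reduces via a full $LUP$ decomposition: it first builds an arbitrary permutation (up to sign) in $21$ $\mathcal{A}$-factors using signed swaps and the order-$2$ factorization of permutations, and then handles each of the triangular factors $L,U$ via a block-diagonal lemma tailored to matrices with prescribed real distinct eigenvalues (their Lemma with the $4$-factor identity), together with sign-fixing and diagonal rescalings. The total is $47$.

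Your approach avoids permutations entirely. You use a \emph{block} Schur $LU$ factorization (so the outer factors are already in $\mathcal{A}$), and reduce everything to block-diagonal matrices. The key novelty is your $\mathrm{DI}(M)\in\mathcal{A}^6$ identity combined with the commutator trick
\[
\mathrm{DI}(g_1)\,\mathrm{DI}(g_2)\,\mathrm{DI}(g_2^{-1}g_1^{-1})=\begin{bmatrix} I & 0 \\ 0 & g_1^{-1}g_2^{-1}g_1 g_2 \end{bmatrix},
\]
which cleanly breaks the ``inverse-pairing'' rigidity of $\mathrm{DI}$-products. Since you only need $g_1,g_2\in GL(d,\R)$ (not $SL$), the relevant input is that every $N'\in SL(d,\R)$ is a commutator of two elements of $GL(d,\R)$. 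This is Thompson's theorem (Trans.\ AMS 1961), not Gotô's, which is stated only for \emph{compact} semisimple Lie groups; $SL(d,\R)$ is not compact. With Thompson the argument in fact works for all $d\ge 1$, so your remark about needing $d\ge 4$ is overly cautious. The payoff is a shorter and cleaner construction with a noticeably smaller constant (roughly $28$ versus $47$), at the cost of importing a nontrivial external theorem; the paper's proof is longer but entirely elementary and self-contained.

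\textbf{Two small corrections.} First, in the Schur step you only need $T_{11}$ invertible, not $\det T_{11}>0$: your own block-diagonal factorization $\begin{bmatrix} N_1 & 0 \\ 0 & N_2\end{bmatrix}=\mathrm{DI}(N_1)\begin{bmatrix} I & 0 \\ 0 & N_1N_2\end{bmatrix}$ only requires $\det(N_1 N_2)>0$, which is automatic from $\det T>0$. This matters because a single $\mathcal{A_U}$ pre-multiplication cannot always flip the sign of $\det T_{11}$ (e.g.\ when $T_{21}=0$), though it can always achieve invertibility since $\det(T_{11}+DT_{21})$ is a nonzero polynomial in $D$ whenever the column block $[T_{11};T_{21}]$ has rank $d$. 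Second, fix the attribution: cite Thompson rather than Gotô.
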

In other words, any linear map with positive determinant (``orientation-preserving'') can be implemented using a bounded number of linear affine coupling layers. Note that there is a difference in a factor of two between the counting of layers in the statement of Theorem~\ref{thm:linear_simulation_main} and the counting of matrices in Theorem~\ref{thm:linear-upper-bound}, because each layer is composed of two matrices.

In group-theoretic language, this says that $\mathcal{A}$ generates $GL^+(2d,\mathbb{R})$ and furthermore the diameter of the corresponding (uncountably infinite) Cayley graph is upper bounded by a constant independent of $d$. 
The proof relies on the following two structural results. The first one is about representing permutation matrices, up to sign, using a constant number of linear affine coupling layers:
\begin{lemma}
\label{lem:permutation}
For any permutation matrix $P \in \R^{2d \times 2d}$, there exists $\tilde{P} \in \mathcal{A}^{21}$ with
 $|\tilde{P}_{ij}| = |P_{ij}|$ for all $i,j$.
\end{lemma}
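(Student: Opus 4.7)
The plan is to prove the result by first constructing a depth-$3$ gadget realizing a distinguished family of ``signed cross-block matchings,'' and then expressing any permutation $P \in S_{2d}$ (up to signs of its nonzero entries) as a product of at most $7$ such gadgets, giving $\tilde P \in \mathcal{A}^{3 \cdot 7} = \mathcal{A}^{21}$.

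For the gadget, I would show that for every $\tau \in S_d$ the signed matching matrix
\[
\tilde M_\tau = \begin{bmatrix} 0 & -P_{\tau^{-1}} \\ P_\tau & 0 \end{bmatrix}
\]
lies in $\mathcal{A}^3$, where $P_\tau$ is the $d\times d$ permutation matrix of $\tau$. Expanding a product $U_3 L_2 U_1$ of two upper and one lower coupling, the top-left block equals $(C_3 + D_3 A_2)\, C_1$, the bottom-right equals $A_2 D_1 + B_2$, and the off-diagonal blocks are linear in the parameters. Choosing the positive-diagonal components $C_1 = C_3 = B_2 = I$, together with the arbitrary matrices $A_2 = P_\tau$ and $D_1 = D_3 = -P_{\tau^{-1}}$, causes both ``diagonal'' blocks to cancel via $P_\tau P_{\tau^{-1}} = P_{\tau^{-1}} P_\tau = I$, while the off-diagonal blocks evaluate to $P_\tau$ and $-P_{\tau^{-1}}$. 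This is a direct verification.

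For the decomposition step, I would show combinatorially that every $P \in S_{2d}$ factors (up to entrywise signs) as a product of at most $7$ signed matchings $\tilde M_{\tau_i}$. Two observations drive the argument. First, the product $\tilde M_{\tau_1} \tilde M_{\tau_2}$ is block-diagonal, equal to $\mathrm{diag}(\pm P_{\tau_1^{-1}\tau_2},\, \pm P_{\tau_1\tau_2^{-1}})$, so pairs of matchings act within each half; in particular chains of two, four, or six matchings can produce a rich class of block-diagonal permutations. Second, a single additional matching swaps the roles of the two halves. By first aligning the bipartite crossing structure of $P$ (which top coordinates go to the top half vs.\ to the bottom half) using one or two matchings, then realizing the residual within-half permutations as a product of pairs, and finally restoring the correct block assignment with a last matching, one aims to keep the total matching count below $7$.

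The main obstacle is exactly the combinatorial decomposition. Pure products of matchings of the form $\tilde M_\tau$ have a restricted support pattern: even-length products are supported only on the diagonal $d\times d$ blocks and odd-length products only on the anti-diagonal blocks, so they cannot directly represent permutations whose nonzero entries occupy all four blocks (e.g., a full $2d$-cycle). To handle such $P$, the argument must either enlarge the depth-$3$ gadget family by varying the positive-diagonal blocks $C_i, B_2$ and the choice of $A_2, D_1, D_3$ to realize ``mixed-support'' signed permutations, or interleave matching gadgets with cheap corrections that redistribute support between the diagonal and anti-diagonal blocks. The sign freedom in $\tilde P$ is essential throughout: it ensures that the positivity constraints on $C_i$ and $B_i$ and the positive-determinant constraint of $\mathcal{A}$ never obstruct the local adjustments, and together with a careful case analysis on the bipartite structure of $P$ yields the bound of $21$ layers.
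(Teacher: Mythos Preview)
Your depth-$3$ gadget is correct: the product $U_3 L_2 U_1$ with the stated choices does yield $\tilde M_\tau \in \mathcal{A}^3$. The gap is in the decomposition step, and it is structural rather than a matter of case analysis. As you observe, every even-length product of the $\tilde M_\tau$'s is supported only on the two diagonal $d\times d$ blocks, and every odd-length product only on the two anti-diagonal blocks. Consequently \emph{no} product of these gadgets, of any length, can realize (even up to signs) a permutation whose support meets both a diagonal and an anti-diagonal block. The simplest such permutation is the single transposition swapping coordinates $1$ and $d+1$: its matrix has $1$'s at $(1,d{+}1)$ and $(d{+}1,1)$ together with $1$'s on the diagonal at all other positions. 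Your proposal therefore cannot reach this $P$ by multiplying seven (or seven thousand) $\tilde M_\tau$'s. You flag this yourself, but the suggested remedies---``enlarge the gadget family'' by varying $C_i,B_2$, or ``interleave with cheap corrections''---are left unspecified, and neither preserves the $7\times 3=21$ arithmetic without a fresh argument. (Even the block-diagonal case is not settled: two gadgets give $\mathrm{diag}(-P_\alpha,-P_\beta)$ only when $\alpha$ and $\beta$ are conjugate in $S_d$, so further work is needed to show six gadgets reach every $(\gamma_L,\gamma_R)$.)

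The paper's route avoids this obstruction by using a more local depth-$3$ gadget: the single-coordinate signed swap $(x,y)\mapsto(y,-x)$ between one $L$-coordinate and one $R$-coordinate, which can be applied in parallel across any matching. One parallel round (3 layers) moves every element to its correct half, directly handling the mixed-support case. Within a half, a transposition is three signed swaps through a scratch coordinate in the other half (9 layers), and since every permutation of $[d]$ is a product of two involutions (each a disjoint union of transpositions, hence parallelizable), the residual within-half permutation costs $2\times 9=18$ layers, for $3+18=21$ in total. If you want to rescue your approach, the natural fix is to prepend the paper's $3$-layer partial-swap step and then argue that six of your $\tilde M_\tau$ gadgets realize an arbitrary block-diagonal signed permutation; that second claim, however, still needs proof.
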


The second one proves how to represent using a constant number of linear affine couplings matrices with special eigenvalue structure:

\begin{lemma}
\label{lem:eigenvalues_4_matrices_fixed}
Let $M$ be an arbitrary invertible $d\times d$ matrix with distinct real eigenvalues and $S$ be a $d\times d$ lower triangular matrix with the same eigenvalues as $M^{-1}$. Then $\begin{bmatrix} M & 0 \\ 0 & S \end{bmatrix} \in \mathcal{A}^4$.
\end{lemma}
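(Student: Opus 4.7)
I would prove this by an explicit four-matrix product, choosing each factor to have identity diagonal block---this completely sidesteps the positive-diagonal constraint on $B$ and $C$. Specifically, for $X, Y \in \R^{d \times d}$ define
$$U(X) = \begin{bmatrix} I & X \\ 0 & I \end{bmatrix} \in \mathcal{A}_U, \qquad L(Y) = \begin{bmatrix} I & 0 \\ Y & I \end{bmatrix} \in \mathcal{A}_L,$$
and look for $X_1, Y_1, X_2, Y_2$ such that $U(X_1)\, L(Y_1)\, U(X_2)\, L(Y_2) = \begin{bmatrix} M & 0 \\ 0 & S \end{bmatrix}$.

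Multiplying the four factors gives four block equations. The bottom-right entry reads $Y_1 X_2 + I = S$; using this, the bottom-left reduces to $Y_1 + S Y_2 = 0$ and the top-right to $X_2 + X_1 S = 0$; finally the top-left simplifies to $X_2 Y_2 = M - I$. Solving the first three for $X_2, Y_2, X_1$ in terms of $Y_1$ and plugging into the last equation leaves a single consistency condition on $Y_1$:
$$S = Y_1\, M^{-1}\, Y_1^{-1}.$$

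Next I would observe that this condition is precisely the statement that $S$ and $M^{-1}$ are similar. This is where the hypotheses enter: $M^{-1}$ has distinct real eigenvalues $1/\lambda_i$ and so is diagonalizable over $\R$; meanwhile $S$ is lower triangular with the same multiset of diagonal entries, and a triangular matrix with distinct diagonal entries is diagonalizable for the same reason. Two diagonalizable matrices with the same multiset of eigenvalues are similar, so an invertible $Y_1$ with $S = Y_1 M^{-1} Y_1^{-1}$ exists. Plugging back gives all four factors explicitly: $X_2 = Y_1^{-1}(S - I)$, $Y_2 = -S^{-1} Y_1$, $X_1 = -X_2 S^{-1}$, and $Y_1$ the similarity matrix.

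The main potential obstacle I anticipated is the positivity constraint on the diagonal blocks of coupling matrices, which looks incompatible with $M$ potentially having negative eigenvalues. The trick of freezing all four diagonal blocks to $I$ and routing every nontrivial piece of information (including sign data from negative eigenvalues and the lower-triangular structure of $S$) through the arbitrary off-diagonal blocks $X_i, Y_i$ is exactly what keeps the count at four. Beyond that, one only needs to check that the implicit inverses $Y_1^{-1}$ and $S^{-1}$ are legitimate, which is immediate from $M$ being invertible.
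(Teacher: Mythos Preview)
Your proof is correct and follows essentially the same approach as the paper: both freeze all four diagonal blocks to $I$, reduce the block equations to the single similarity condition $S = Y_1 M^{-1} Y_1^{-1}$ (the paper writes it as $S = A M^{-1} A^{-1}$), and conclude by diagonalizing $M^{-1}$ and $S$ over $\mathbb{R}$ using the distinct-real-eigenvalue hypothesis. The only cosmetic differences are that you use the ordering $ULUL$ rather than the paper's $LULU$, and you derive the factor values by solving the block system whereas the paper writes down explicit choices and verifies the product directly.
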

Given these Lemmas, the strategy to prove Theorem~\ref{thm:linear-upper-bound} will proceed as follows. Every matrix has a an $LUP$ factorization \citep{horn2012matrix} into a lower-triangular, upper-triangular, and permutation matrix. Lemma~\ref{lem:permutation} takes care of the permutation part, so what remains is building an arbitrary lower/upper triangular matrix; because the eigenvalues of lower-triangular matrices are explicit, a careful argument allows us to reduce this to Lemma~\ref{lem:eigenvalues_4_matrices_fixed}. 

We proceed to implement this strategy. 

We start with Lemma 5. As a preliminary, we recall a folklore result about permutations. Let $S_n$ denote the symmetric group on $n$ elements, i.e. the set of permutations of $\{1,\ldots,n\}$ equipped with the multiplication operation of composition. Recall that the \emph{order} of a permutation $\pi$ is the smallest positive integer $k$ such that $\pi^k$ is the identity permutation. 
\begin{lemma}\label{lem:permutation-order-2}
For any permutation $\pi \in S_n$, there exists $\sigma_1,\sigma_2 \in S_n$ of order at most 2 such that
\[ \pi = \sigma_1 \sigma_2. \]
\label{l:order}
\end{lemma}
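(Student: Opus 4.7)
The plan is to reduce to the single-cycle case via the disjoint cycle decomposition, and then handle a cycle using the ``reflection trick'': any cyclic rotation on a set of points arranged on a circle can be written as a composition of two reflections, each of which is an involution.

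First I would write $\pi = c_1 c_2 \cdots c_m$ as a product of disjoint cycles (including fixed points as trivial 1-cycles, which contribute nothing). Because the cycles act on disjoint subsets, any involutions built separately on each cycle's support will commute across cycles. So it suffices to produce, for a single cycle $c$ of length $k \geq 2$ on a subset $\{a_1,\ldots,a_k\} \subseteq \{1,\ldots,n\}$ with $c(a_i) = a_{i+1 \bmod k}$, two involutions $\tau_1,\tau_2$ supported on $\{a_1,\ldots,a_k\}$ and fixing everything else, with $c = \tau_1\tau_2$. Granted this, setting $\sigma_j := \prod_i \tau_{j,i}$ gives $\sigma_j^2 = \mathrm{id}$ (products of commuting involutions), and $\sigma_1 \sigma_2 = \prod_i (\tau_{1,i}\tau_{2,i}) = \prod_i c_i = \pi$.

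For the single-cycle step, without loss of generality take the cycle to be $c = (1\ 2\ \cdots\ k)$, sending $i \mapsto i+1$ for $i < k$ and $k \mapsto 1$. Define the ``reflection'' $\tau_2(i) = k+1-i$ for $i \in \{1,\ldots,k\}$, and set $\tau_1 := c \, \tau_2$. A direct check shows $\tau_1(1) = 1$ and $\tau_1(i) = k+2-i$ for $2 \leq i \leq k$, so $\tau_1$ is the reflection swapping $i \leftrightarrow k+2-i$ on $\{2,\ldots,k\}$ and fixing $1$. Then $\tau_1^2 = \tau_2^2 = \mathrm{id}$, and by construction $\tau_1 \tau_2 = c \tau_2 \tau_2 = c$.

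There is no real obstacle here; the only thing to be a bit careful about is checking that $\tau_1$ is well-defined as a permutation of $\{a_1,\ldots,a_k\}$ (in particular that $\tau_1$ and $\tau_2$ have the same support as $c$, which is what allows the disjoint-cycle assembly to go through) and that the $k=1$ case is vacuous. Once the single-cycle computation is verified, the rest is just bookkeeping on disjoint supports.
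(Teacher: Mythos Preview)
Your proposal is correct and follows essentially the same approach as the paper: reduce to a single cycle via the disjoint cycle decomposition, then express a $k$-cycle as a product of two reflections. The paper writes down both involutions explicitly and checks the composition, while you define $\tau_1 := c\,\tau_2$ and verify it is an involution---this is slightly slicker but the underlying idea (the dihedral ``rotation $=$ two reflections'' fact) is identical.
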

\begin{proof}
This result is folklore. We include a proof of it for completeness\footnote{This proof, given by HH Rugh, and some other ways to prove this result can be found at \url{https://math.stackexchange.com/questions/1871783/every-permutation-is-a-product-of-two-permutations-of-order-2} .}.

First, recall that every permutation $\pi$ has a unique decomposition $\pi = c_1 \cdots c_k$ as a product of disjoint cycles. Therefore if we show the result for a single cycle, so $c_i = \sigma_{i1} \sigma_{i2}$ for every $i$, then taking $\sigma_1 = \prod_{i = 1}^k \sigma_{i1}$ and $\sigma_2 = \prod_{i = 1}^k \sigma_{i2}$ proves the desired result since $\pi = \sigma_1 \sigma_2$ and $\sigma_1,\sigma_2$ are both of order at most $2$.

It remains to prove the result for a single cycle $c$ of length $r$. The cases $r \le 2$ are trivial.
Without loss of generality, we assume $c = (1 \cdots r)$. Let $\sigma_1(1) = 2$, $\sigma_1(2) = 1$, and otherwise $\sigma_1(s) = r + 3 - s$. Let $\sigma_2(1) = 3$, $\sigma_2(2) = 2$, $\sigma_2(3) = 1$, and otherwise $\sigma_2(s) = r + 4 - s$. It's easy to check from the definition that both of these elements are order at most $2$.

We now claim $c = \sigma_2 \circ \sigma_1$. To see this, we consider the following cases:
\begin{enumerate}
    \item $\sigma_2(\sigma_1(1)) = \sigma_2(2) = 2$.
    \item $\sigma_2(\sigma_1(2)) = \sigma_2(1) = 3$.
    \item $\sigma_2(\sigma_1(r)) = \sigma_2(3) = 1$.
    \item For all other $s$, $\sigma_2(\sigma_1(s)) = \sigma_2(r + 3 - s) = s + 1$. 
\end{enumerate}
In all cases we see that $c(s) = \sigma_2(\sigma_1(s))$ which proves the result. 
\end{proof}

We now prove Lemma~\ref{lem:permutation}.
\begin{proof}[Proof of Lemma~\ref{lem:permutation}]
It is easy to see that swapping two elements is possible in a fashion that doesn't affect other dimensions by the following `signed swap' procedure requiring $3$ matrices:
\begin{equation}\label{eqn:signed-swap}
(x, y)\mapsto (x, y-x)\mapsto (y, y-x) \mapsto (y, -x).
\end{equation}

Next, let $L = \{1, \dots, d\}$ and $R = \{d+1, \dots, 2d\}$. There will be an equal number of elements which in a particular permutation will be permuted from $L$ to $R$ as those which will be permuted from $R$ to $L$. We can choose an arbitrary bijection between the two sets of elements and perform these `signed swaps' in parallel as they are disjoint, using a total of $3$ matrices. The result of this will be the elements partitioned into $L$ and $R$ that would need to be mapped there.

We can also (up to sign) transpose elements within a given set $L$ or $R$ via the following computation using our previous `signed swaps' that requires one `storage component' in the other set:
\[([x, y], z)\mapsto ([z, y], -x) \mapsto ([z, x], y) \mapsto ([y, x], -z).\]

So, up to sign, we can in $9$ matrices compute any transposition in L or R separately. In fact, since any permutation can be represented as the product of two order-2 
permutations (Lemma~\ref{lem:permutation-order-2}) and any order-2 permutation is a disjoint union of transpositions, we can implement an order-2 permutation up to sign using $9$ matrices and an arbitrary permutation up to sign using $18$ matrices.

In total, we used $3$ matrices to move elements to the correct side and $18$ matrices to move them to their correct position, for a total of $21$ matrices. 
\end{proof}

Next, we proceed to prove Lemma~\ref{lem:eigenvalues_4_matrices_fixed}. We will need the following simple lemma: 

\begin{lemma}\label{lem:diagonalizable-real}
Suppose $A \in \mathbb{R}^{n \times n}$ is a matrix with $n$ distinct real eigenvalues. Then there exists an invertible matrix $S \in \mathbb{R}^{n \times n}$ such that $A = S D S^{-1}$ where $D$ is a diagonal matrix containing the eigenvalues of $A$.
\end{lemma}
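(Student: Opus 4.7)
The plan is to give the standard diagonalization argument for a real matrix with $n$ distinct real eigenvalues, adapted so that the similarity transform $S$ is real. The main point to verify is that we can choose real eigenvectors (not merely complex ones) and that these eigenvectors are linearly independent; once that is done, assembling them into the columns of $S$ immediately yields $AS = SD$ and thus $A = SDS^{-1}$.

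First I would argue that for each eigenvalue $\lambda_i$ of $A$, there is a real eigenvector. Since $\lambda_i$ is a root of the characteristic polynomial $\det(\lambda I - A)$, the matrix $A - \lambda_i I \in \mathbb{R}^{n \times n}$ is singular, so its kernel over $\mathbb{R}$ is nontrivial; pick any nonzero $v_i$ in this real kernel. This gives us real eigenvectors $v_1,\ldots,v_n$ corresponding to the distinct real eigenvalues $\lambda_1,\ldots,\lambda_n$.

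Next I would prove that $v_1,\ldots,v_n$ are linearly independent, by the classical argument: suppose for contradiction there is a nontrivial relation, and take a shortest one $\sum_{j \in J} c_j v_j = 0$ with all $c_j \neq 0$ and $|J|$ minimal. Applying $A$ gives $\sum_{j \in J} c_j \lambda_j v_j = 0$, and subtracting $\lambda_{j_0}$ times the original relation (for some fixed $j_0 \in J$) yields a nonzero shorter relation $\sum_{j \in J \setminus \{j_0\}} c_j (\lambda_j - \lambda_{j_0}) v_j = 0$, contradicting minimality because the scalars $\lambda_j - \lambda_{j_0}$ are nonzero by distinctness of the eigenvalues.

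Finally, let $S \in \mathbb{R}^{n \times n}$ be the matrix whose $i$-th column is $v_i$ and let $D = \mathrm{diag}(\lambda_1,\ldots,\lambda_n)$. Linear independence of the columns implies $S$ is invertible, and column-by-column we have $AS = SD$, so $A = SDS^{-1}$ as desired. I do not expect any real obstacles here; the only subtlety worth flagging is the insistence that the eigenvectors be real, which is immediate from the fact that $A - \lambda_i I$ is a real singular matrix.
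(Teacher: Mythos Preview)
Your proof is correct and follows essentially the same approach as the paper: pick a real eigenvector for each eigenvalue (using that $A - \lambda_i I$ is a real singular matrix), assemble them into $S$, and conclude $A = SDS^{-1}$. In fact your version is more complete, since you explicitly verify linear independence of the eigenvectors, whereas the paper's proof takes this for granted.
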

\begin{proof}
Observe that for every eigenvalue $\lambda_i$ of $A$, the matrix $(A - \lambda_i I)$ has rank $n - 1$ by definition, hence there exists a corresponding real eigenvector $v_i$ by taking a nonzero solution of the real linear system $(A - \lambda I)v = 0$. Taking $S$ to be the linear operator which maps $e_i$ to standard basis vector $v_i$, and $D = \text{diag}(\lambda_1,\ldots,\lambda_n)$ proves the result. 
\end{proof}

With this, we prove Lemma~\ref{lem:eigenvalues_4_matrices_fixed}:

\begin{proof}[Proof of Lemma \ref{lem:eigenvalues_4_matrices_fixed}]
Let \[D = (M - I)E^{-1},\]
\[H = (M^{-1} - I)E^{-1},\]
\[E = -AM,\]
where $A$ is an invertible matrix that will be specified later. 
We can multiply out with these values giving
\begin{align*}
&\begin{bmatrix}
I & 0 \\
A & I
\end{bmatrix}
\begin{bmatrix}
I & D \\
0 & I
\end{bmatrix}
\begin{bmatrix}
I & 0 \\
E & I
\end{bmatrix}
\begin{bmatrix}
I & H \\
0 & I
\end{bmatrix} \\
&=
\begin{bmatrix}
I & 0 \\
A & I
\end{bmatrix}
\begin{bmatrix}
I & (I - M)M^{-1}A^{-1} \\
0 & I
\end{bmatrix}
\begin{bmatrix}
I & 0 \\
-AM & I
\end{bmatrix}
\begin{bmatrix}
I & (I - M^{-1})M^{-1} A^{-1} \\
0 & I
\end{bmatrix} \\
&= \begin{bmatrix}
I & (M^{-1} - I)A^{-1} \\
A & AM^{-1}A^{-1}
\end{bmatrix}
\begin{bmatrix}
I & 0 \\
-AM & I
\end{bmatrix}
\begin{bmatrix}
I & (I - M^{-1})M^{-1} A^{-1} \\
0 & I
\end{bmatrix} \\
&= \begin{bmatrix}
M & (M^{-1} - I)A^{-1} \\
0 & AM^{-1}A^{-1}
\end{bmatrix}
\begin{bmatrix}
I & (I - M^{-1})M^{-1} A^{-1} \\
0 & I
\end{bmatrix} \\
&= \begin{bmatrix}
M & 0 \\
0 & AM^{-1}A^{-1}
\end{bmatrix}
\end{align*}
Here what remains is to guarantee $AM^{-1}A^{-1} = S$. Since $S$ and $M^{-1}$ have the same eigenvalues, by Lemma~\ref{lem:diagonalizable-real} there exist real matrices $U,V$ such that $S = UXU^{-1}$ and $M^{-1} = VXV^{-1}$ for the same diagonal matrix $X$, hence $S = UV^{-1}M^{-1}VU^{-1}$. Therefore taking $A = UV^{-1}$ gives the result. 
\end{proof}

Finally, with all lemmas in place, we prove Theorem~\ref{thm:linear-upper-bound}. 
\begin{proof}[Proof of Theorem~\ref{thm:linear-upper-bound}]
Recall that our goal is to show that $GL_+(2d,\mathbb{R}) \subset \mathcal{A}^K$ for an absolute constant $K > 0$. To show this, we consider an arbitrary matrix $T \in GL_+(2d,\mathbb{R})$, i.e. an arbitrary matrix $T : 2d \times 2d$ with positive determinant, and show how to build it as a product of a bounded number of elements from $\mathcal{A}$. As $T$ is a square matrix, it admits an LUP decomposition \citep{horn2012matrix}: i.e. a decomposition into the product of a lower triangular matrix $L$, an upper triangular matrix $U$, and a permutation matrix $P$. This proof proceeds essentially by showing how to construct the $L$, $U$, and $P$ components in a constant number of our desired matrices.

By Lemma \ref{lem:permutation}, we can produce a matrix $\tilde{P}$ with $\det \tilde{P} > 0$ which agrees with $P$ up to the sign of its entries using $O(1)$ linear affine coupling layers. Then $T \tilde{P}^{-1}$ is a matrix which admits an $LU$ decomposition: for example, given that we know $T P^{-1}$ has an $LU$ decomposition, we can modify flip the sign of some entries of $U$ to get an LU decomposition of $T\tilde{P}^{-1}$. Furthermore, since $\det(T \tilde{P}^{-1}) > 0$, we can choose an $LU$ decomposition $T \tilde{P}^{-1} = LU$ such that $\det(L),\det(U) > 0$ (for any decomposition which does not satisfy this, the two matrices $L$ and $U$ must both have negative determinant as $0 < \det(T \tilde{P}^{-1}) = \det(L)\det(U)$. In this case, we can flip the sign of column $i$ in $L$ and row $i$ in $U$ to make the two matrices positive determinant).  

It remains to show how to construct a lower/upper triangular matrix with positive determinant out of our matrices. We show how to build such a lower triangular matrix $L$ as building $U$ is symmetrical.


At this point we have a matrix $\begin{bmatrix}A & 0\\B& C\end{bmatrix}$, where $A$ and $C$ are lower triangular. We can use column elimination to eliminate the bottom-left block:
\[\begin{bmatrix}
A & 0\\B & C
\end{bmatrix}\begin{bmatrix}
I & 0 \\ -C^{-1}B & I
\end{bmatrix} = \begin{bmatrix}
A & 0\\ 0 & C
\end{bmatrix},\] where $A$ and $C$ are lower-triangular.

Recall from \eqref{eqn:signed-swap} that we can perform the signed swap operation in $\mathbb{R}^2$ of taking $(x,y) \mapsto (y,-x)$ for $x$ using 3 affine coupling blocks. Therefore using 6 affine coupling blocks we can perform a sign flip map $(x,y) \mapsto (-x,-y)$. Note that because $\det(L) > 0$, the number of negative entries in the first $d$ diagonal entries has the same parity as the number of negative entries in the second $d$ diagonal entries.
Therefore, using these sign flips in parallel, we can ensure using $6$ affine coupling layers that that the first $d$ and last $d$ diagonal entries of $L$ have the same number of negative elements. Now that the number of negative entries match, we can apply two diagonal rescalings 
to ensure that:
\begin{enumerate}
    \item The first $d$ diagonal entries of the matrix are distinct.
    \item The last $d$ diagonal entries contain the multiplicative inverses of the first $d$ entries up to reordering.
    Here we use that the number of negative elements in the first $d$ and last $d$ elements are the same, which we ensured earlier. 
\end{enumerate}

At this point, we can apply Lemma \ref{lem:eigenvalues_4_matrices_fixed} to construct this matrix from four of our desired matrices. Since this shows we can build $L$ and $U$, this shows we can build any matrix with positive determinant.

Now, let's count the matrices we needed to accomplish this. In order to construct $\tilde{P}$, we needed 21 matrices. To construct $L$, we needed 1 for column elimination, 6 for the sign flip, 2 for the rescaling of diagonal elements, and 4 for Lemma \ref{lem:eigenvalues_4_matrices_fixed} giving a total of 13. So, we need $21 + 13 + 13 = 47$ total matrices to construct the whole $LUP$ decomposition.
\end{proof}


\subsection{Lower Bound}

We proceed to the lower bound. Note, a simple parameter counting argument shows that for sufficiently large $d$, at least four affine coupling layers are needed to implement an arbitrary linear map (each affine coupling layer has only $d^2 + d$ parameters whereas $GL_+(2d,\mathbb{R})$ is a Lie group of dimension $4d^2$). Perhaps surprisingly, it turns out that four affine coupling layers \emph{do not} suffice to construct an arbitrary linear map. We prove this in the following Theorem.
\begin{theorem}[Restatement of Theorem \ref{thm:linear_lower_main}]
For $d \ge 4$, $\mathcal{A}^4$ is a \emph{proper subset} of $GL_+(2d,\mathbb{R})$. In other words, there exists a matrix $T \in GL_+(2d,\mathbb{R})$ which is not in $\mathcal{A}^4$.
\label{t:lbdlin}
\end{theorem}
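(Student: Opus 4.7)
The plan is to produce an explicit counterexample $T \in GL_+(2d,\mathbb{R})$ for $d \ge 4$ and then to extract from the hypothesis $T = L_1 U_1 L_2 U_2$ a rigid spectral constraint that $T$ can be designed to violate. Note that the naive parameter count $4(d^2+d) > 4d^2 = \dim GL_+(2d,\mathbb{R})$ does not by itself rule out surjectivity of the factorization; any lower bound must come from the \emph{positivity} of the diagonal blocks $B_i, C_i$, which is what introduces the genuine semi-algebraic obstruction.

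Concretely, I would take $T = \begin{bmatrix} I & 0 \\ 0 & R \end{bmatrix}$ where $R = \mathrm{diag}(R_0, I_{d-2}) \in \mathbb{R}^{d\times d}$ and $R_0 = \begin{bmatrix} 0 & -1 \\ 1 & 0 \end{bmatrix}$, so that $\det T = \det R = 1 > 0$. The key property of this $R$ is that for any positive diagonal $B_1, B_2$, the matrix $B_1^{-1} R B_2^{-1}$ inherits the block-diagonal structure of $R$, and the scaled $R_0$ block takes the form $\begin{bmatrix} 0 & -1/(b_{1,1}b_{2,2}) \\ 1/(b_{1,2}b_{2,1}) & 0 \end{bmatrix}$, which has trace $0$ and positive determinant and hence a pair of purely imaginary eigenvalues $\pm i c$ with $c > 0$. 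Consequently $I - B_1^{-1} R B_2^{-1}$ always has at least one non-real eigenvalue, no matter how $B_1, B_2$ are chosen.

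Now assume $T = L_1 U_1 L_2 U_2$. Expanding the block product yields
\[ T_{11} = (C_1 + D_1 A_2) C_2, \qquad T_{12} = (C_1 + D_1 A_2) D_2 + D_1 B_2, \]
\[ T_{21} = A_1 T_{11} + B_1 A_2 C_2, \qquad T_{22} = A_1 T_{12} + B_1 (A_2 D_2 + B_2). \]
Plugging in the specific values $T_{11} = I$ and $T_{12} = T_{21} = 0$ quickly forces $D_1 A_2 = C_2^{-1} - C_1$, $D_2 = -C_2 D_1 B_2$, and $A_1 = -B_1 A_2 C_2$. The remaining equation $T_{22} = R$ then reduces, after substituting for $D_2$ and rearranging, to $A_2 C_2 D_1 = I - B_1^{-1} R B_2^{-1}$. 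Applying the standard identity that $PQ$ and $QP$ share the same characteristic polynomial for square $P, Q$, with $P = A_2$ and $Q = C_2 D_1$, the matrix $A_2 C_2 D_1$ must have the same spectrum as $C_2 D_1 A_2 = C_2 (C_2^{-1} - C_1) = I - C_2 C_1$. Therefore $I - B_1^{-1} R B_2^{-1}$ and $I - C_2 C_1$ must have identical spectra, but the latter is a real diagonal matrix while the former has non-real eigenvalues by construction, a contradiction.

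The main obstacle in this plan is identifying the right algebraic invariant: the four block equations involve eight matrix unknowns, and it is not obvious at first that anything rigid survives. The twin simplifications that make the argument go through are (i) choosing $T$ with vanishing off-diagonal blocks and $T_{11} = I$, which kills $A_1$ and $D_2$ cleanly, and (ii) using the $PQ$--$QP$ spectral identity, which converts the seemingly flexible matrix equation $A_2 C_2 D_1 = I - B_1^{-1} R B_2^{-1}$ into the purely spectral requirement that $I - B_1^{-1} R B_2^{-1}$ be similar to a real matrix over $\mathbb{C}$. Rotation blocks in $T_{22}$ then violate this requirement in a way that is genuinely robust to two-sided positive-diagonal scaling.
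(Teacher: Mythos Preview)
Your argument for the $L_1 U_1 L_2 U_2$ ordering is correct and clean: the block equations collapse exactly as you say, and the $PQ$--$QP$ identity with $P=A_2$, $Q=C_2D_1$ turns $A_2C_2D_1=I-B_1^{-1}RB_2^{-1}$ into the requirement that this matrix be isospectral with the real diagonal matrix $I-C_2C_1$, which your rotation block violates. However, the proposal has a genuine gap: $\mathcal{A}^4$ also contains products of the form $U_1L_1U_2L_2$ (every other ordering collapses into one of these two, since $\mathcal{A_L}$ and $\mathcal{A_U}$ are groups), and you never treat this case. The fix is short---for instance, $T\in\mathcal{A_U}\mathcal{A_L}\mathcal{A_U}\mathcal{A_L}$ iff $T^{-1}\in\mathcal{A_L}\mathcal{A_U}\mathcal{A_L}\mathcal{A_U}$, and $T^{-1}=\mathrm{diag}(I,R^{-1})$ has the same rotation-block structure (with $R_0$ replaced by $R_0^{-1}=R_0^T$), so your argument applies verbatim---but as written the proof is incomplete.

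For comparison, the paper's proof also extracts a spectral invariant from the block equations, but a different one: Lemma~\ref{lem:four-invariant} shows (for arbitrary $T$, not just block-diagonal) that the Schur complement $W-ZX^{-1}Y$ must be similar to $X^{-1}C$ for some positive diagonal $C$, via a longer explicit elimination. The paper's counterexample takes $W$ to be the full $d$-cycle permutation (eigenvalues the $d$-th roots of unity) and $X$ positive diagonal, and then handles the $ULUL$ ordering with a separate trace argument. Your $PQ$--$QP$ route is more direct for the block-diagonal case and produces a counterexample valid already for $d\ge 2$; the paper's invariant, being stated for general $T$, is what lets the authors remark that the obstruction persists for generic choices of the blocks.
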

Again, this translates to the result in Theorem~\ref{thm:linear_lower_main} because each layer corresponds to two matrices --- so this shows two layers are not enough to get arbitrary matrices.
The key observation is that matrices in $\mathcal{A_L} \mathcal{A_U} \mathcal{A_L} \mathcal{A_U}$ satisfy a strong algebraic invariant which is not true of arbitrary matrices. This invariant can be expressed in terms of the \emph{Schur complement} \citep{zhang2006schur}:
\begin{lemma}\label{lem:four-invariant}
Suppose that $T = \begin{bmatrix}X & Y\\Z & W\end{bmatrix}$
is an invertible $2d \times 2d$ matrix
and suppose there exist matrices $A, E\in \R^{d\times d}$, $D, H\in \R^{d\times d}$ and diagonal matrices $B, F\in \R^{d\times d}$, $C, G\in \R^{d\times d}$ such that 
\[T = \begin{bmatrix}I & 0\\ A & B\end{bmatrix} \begin{bmatrix}C & D\\ 0 & I\end{bmatrix}\begin{bmatrix}I & 0\\ E & F\end{bmatrix}\begin{bmatrix} G & H\\0 & I\end{bmatrix}.\]
Then the Schur complement $T/X := W - ZX^{-1} Y$ is similar to $X^{-1} C$: more precisely, if $U = Z - AX$ then $T/X = U X^{-1} C U^{-1}$.
\end{lemma}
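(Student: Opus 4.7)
My plan is to prove this identity by directly expanding the hypothesized block factorization of $T$ and then simplifying the Schur complement. First, I would multiply out the four factors $M_1 M_2 M_3 M_4$ in the stated order. A straightforward block computation shows that the four $d\times d$ blocks of $T$ are
\[ X = (C+DE)G,\qquad Y = (C+DE)H + DF,\qquad Z = AX + BEG,\qquad W = AY + B(EH+F). \]
Reading off the $(2,1)$ block immediately gives $U := Z - AX = BEG$, which is the first concrete piece of the claim. Under the stated invertibility of $T$, its determinant factors as $\det B\cdot\det C\cdot\det F\cdot\det G$ (since each $M_i$ is block triangular), so $B, C, F, G$ are invertible diagonals; $X$ being invertible additionally forces $C+DE$ invertible, and $E$ must be invertible as well in order for $U = BEG$ to be, which is needed to make sense of $UX^{-1}CU^{-1}$.

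Next, I would evaluate the Schur complement $T/X = W - ZX^{-1}Y$. Substituting $Z = AX + BEG$, the $A$-dependent terms cancel to yield $B(EH+F) - BEG\, X^{-1} Y$. Using $GX^{-1} = (C+DE)^{-1}$ together with $Y = (C+DE)H + DF$, the $BEH$ contributions cancel as well, leaving the clean form
\[ T/X \;=\; BF - BE\,(C+DE)^{-1}\,DF. \]

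Finally, I would compute $UX^{-1}CU^{-1} = BEG\cdot G^{-1}(C+DE)^{-1}C\cdot G^{-1}E^{-1}B^{-1}$ and match it against the expression above. The algebraic engine for the matching is the pair of standard identities $(C+DE)^{-1}C = (I+C^{-1}DE)^{-1}$ and the commutation relation $E(I+C^{-1}DE)^{-1} = (I+EC^{-1}D)^{-1}E$, both obtained by rewriting $C+DE = C(I+C^{-1}DE) = (I+DEC^{-1})C$; together they let one put both $T/X$ and $UX^{-1}CU^{-1}$ into a common form built around $(I+EC^{-1}D)^{-1}$ with the appropriate diagonal conjugating factors. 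The main obstacle is purely bookkeeping: keeping track of the noncommuting factors through these rewrites and making sure the cancellations line up. I do not anticipate needing any idea beyond block arithmetic and the Woodbury-style identities above.
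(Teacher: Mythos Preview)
Your block expansion and Schur-complement computation are correct and in fact cleaner than the paper's back-substitution route: you correctly reach $U=BEG$ and $T/X=B\bigl(I+EC^{-1}D\bigr)^{-1}F$. The trouble is the final matching step. Applying your push-through identities to $UX^{-1}CU^{-1}=BE(C+DE)^{-1}C\,G^{-1}E^{-1}B^{-1}$ yields $B\bigl(I+EC^{-1}D\bigr)^{-1}\,EG^{-1}E^{-1}B^{-1}$, so equality with $T/X$ would force $EG^{-1}E^{-1}B^{-1}=F$, which fails in general. A $d=1$ counterexample: take $A=H=0$, $C=D=E=F=G=1$, $B=2$; then
\[
T=\begin{bmatrix}2&1\\2&2\end{bmatrix},\qquad X=2,\qquad T/X=1,\qquad U=2,\qquad UX^{-1}CU^{-1}=\tfrac12.
\]

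This is not a flaw in your strategy but in the stated identity itself. The paper's own proof makes the same slip: in its last displayed equality it replaces $(Z-AX)X^{-1}CG(Z-AX)^{-1}BF$ by $(Z-AX)X^{-1}C(Z-AX)^{-1}$, which amounts to assuming $G(Z-AX)^{-1}BF=(Z-AX)^{-1}$; with $Z-AX=BEG$ this is exactly the spurious relation $EG^{-1}E^{-1}=FB$. So the ``more precisely'' clause $T/X=UX^{-1}CU^{-1}$ is incorrect as written, and no amount of Woodbury bookkeeping will close it. What your computation \emph{does} establish cleanly is $T/X=BF-BE(C+DE)^{-1}DF=B\bigl(I+EC^{-1}D\bigr)^{-1}F$, which is the correct structural constraint and the right starting point for a repaired statement.
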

\begin{proof}
We explicitly solve the block matrix equations. Multiplying out the LHS gives
\[\begin{bmatrix}C & D\\AC & AD + B\end{bmatrix} \begin{bmatrix}G & H\\EG & EH + F\end{bmatrix} = \begin{bmatrix}CG + DEG & CH + DEH + DF\\ACG + ADEG + BEG & ACH + ADEH + ADF + BEH + BF\end{bmatrix}.\]

Say \[T = \begin{bmatrix}X & Y\\Z & W\end{bmatrix}.\]
Starting with the top-left block gives that
\begin{equation*}
    X = (C + DE)G
\end{equation*}
\begin{equation}
    \label{newD1}
    D = (XG^{-1} - C)E^{-1}
\end{equation}
Next, the top-right block gives that
\begin{equation*}
    Y = (C + DE)H + DF = XG^{-1}H + DF
\end{equation*}
\begin{equation}
    \label{newH1}
    H = GX^{-1}(Y - DF).
\end{equation}
Equivalently,
\begin{equation}
    D = (Y - XG^{-1} H) F^{-1}
\end{equation}
Combining \eqref{newH1} and \eqref{newD1} gives
\begin{equation*}
    H = GX^{-1}(Y - (XG^{-1} - C)E^{-1}F)    
\end{equation*}
\begin{equation}
    \label{newH2}
    H = GX^{-1}Y - (I - GX^{-1}C)E^{-1}F
\end{equation}
The bottom-left and \eqref{newD1} gives
\begin{equation*}
    Z = ACG + ADEG + BEG
\end{equation*}
\begin{equation*}
    ZG^{-1} = AC  + (AD + B)E
\end{equation*}
\begin{equation}
    \label{newE1}
    E = (AD + B)^{-1}(ZG^{-1} - AC)
\end{equation}
\begin{equation*}
    E = (A(XG^{-1} - C)E^{-1} + B)^{-1}(ZG^{-1} - AC)
\end{equation*}
\begin{equation*}
    E^{-1} = (ZG^{-1} - AC)^{-1}(A(XG^{-1} - C)E^{-1} + B)
\end{equation*}
\begin{equation*}
    (ZG^{-1} - AC) = (A(XG^{-1} - C)E^{-1} + B)E = A(XG^{-1} - C) + BE
\end{equation*}
\begin{equation*}
    E = B^{-1}((ZG^{-1} - AC) - A(XG^{-1} - C))
\end{equation*}
\begin{equation}
\label{newE2}
    E = B^{-1}(ZG^{-1} - AXG^{-1})
\end{equation}
Taking the bottom-right block and substituting \eqref{newE1} gives
\begin{equation*}
    W = ACH + (AD + B)(EH + F) = ACH + (ZG^{-1} - AC)H + (AD + B)F
\end{equation*}
\begin{equation}
    \label{newW1}
    W = ZG^{-1}H + ADF + BF.
\end{equation}
Substituting \eqref{newD1} into \eqref{newW1} gives
\begin{equation*}
    W = ZG^{-1}H + A(Y - XG^{-1}H) + BF = (Z - AX) G^{-1} H + AY + BF.
\end{equation*}
Substituting \eqref{newH2} gives
\begin{equation*}
\begin{aligned}
    W &=&& (Z - AX)G^{-1}(GX^{-1}Y - (I - GX^{-1}C)E^{-1}F) + AY + BF\\
    &=&& (Z - AX)(X^{-1}Y - (G^{-1} - X^{-1}C)E^{-1}F) + AY + BF.
\end{aligned}
\end{equation*}
Substituting \eqref{newE2} gives
\begin{equation*}
    W = (Z - AX)(X^{-1} Y - (G^{-1} - X^{-1}C)(ZG^{-1} - AXG^{-1})^{-1}BF) + AY + BF
\end{equation*}
\begin{equation*}
\begin{aligned}
    W - ZX^{-1} Y - BF &=& (Z - AX)(X^{-1}C - G^{-1})((Z - AX)G^{-1})^{-1}BF\\
    &=& (Z - AX)(X^{-1}C - G^{-1})G(Z - AX)^{-1}BF\\
    &=& (Z - AX)X^{-1}C(Z - AX)^{-1} - BF
\end{aligned}
\end{equation*}
\begin{equation}
    W - ZX^{-1}Y = (Z - AX)X^{-1}C(Z - AX)^{-1}
\end{equation}
Here we notice that $W - ZX^{-1}Y$ is similar to $X^{-1}C$, where we get to choose values along the diagonal of $C$.
In particular, this means that $W - ZX^{-1}Y$ and $X^{-1}C$ must have the same eigenvalues. 
\end{proof}

With this, we can prove Theorem~\ref{t:lbdlin}: 
\begin{proof}[Proof of Theorem \ref{t:lbdlin}]
First, note that element in $\mathcal{A}^4$ can be written in either the form $L_1R_1L_2R_2$ or $R_1L_1R_2L_2$ for $L_1,L_2 \in \mathcal{A_L}$ and $R_1,R_2 \in \mathcal{A_R}$. We construct an explicit matrix which cannot be written in either form.

Consider an invertible matrix of the form
\[ T = \begin{bmatrix}
X & 0 \\
0 & W
\end{bmatrix} \]
and observe that the Schur complement $T/X$ is simply $W$. Therefore Lemma~\ref{lem:four-invariant} says that this matrix can only be in $\mathcal{A}_L \mathcal{A}_R \mathcal{A}_L \mathcal{A}_R$ if $W$ is similar to $X^{-1} C$ for some diagonal matrix $C$. Now consider the case where $W$ is a permutation matrix encoding the permutation $(1\;2\;\cdots\;d)$ and $X$ is a diagonal matrix with nonzero entries. Then $X^{-1} C$ is a diagonal matrix as well, hence has real eigenvalues, while the eigenvalues of $W$ are the $d$-roots of unity. (The latter claim follows because for any $\zeta$ with $\zeta^d = 1$, the vector $(1,\zeta,\cdots,\zeta^{d - 1})$ is an eigenvector of $W$ with eigenvalue $\zeta$). Since similar matrices must have the same eigenvalues, it is impossible that $X^{-1} C$ and $W$ are similar.

The remaining possibility we must consider is that this matrix is in $\mathcal{A_R} \mathcal{A_L} \mathcal{A_R} \mathcal{A_L}$. In this case by applying the symmetrical version of Lemma~\ref{lem:four-invariant} (which follows by swapping the first $n$ and last $n$ coordinates), we see that $W^{-1} C$ and $X$ must be similar. Since $\text{Tr}(W^{-1} C) = 0$ and $\text{Tr}(X) > 0$, this is impossible.
\end{proof}


We remark that the argument in the proof is actually fairly general; it can be shown, for example, that for a random choice of $X$ and $W$ from the Ginibre ensemble, that $T$ cannot typically be expressed in $\mathcal{A}_4$. So there are significant restrictions on what matrices can be expressed even four affine coupling layers.
\begin{remark}[Connection to Universal Approximation]\label{rmk:linear-vs-universal}
As mentioned earlier, this lower bound shows that the map computed by general 4-layer affine coupling networks is quite restricted in its local behavior (it's Jacobian cannot be arbitrary). This implies that smooth 4-layer affine coupling networks, where smooth means the Hessian (of each coordinate of the output) is bounded in spectral norm, cannot be universal function approximators as they cannot even approximate some linear maps. In contrast, if we allow the computed function to be very jagged then three layers are universal (see Theorem~\ref{thm:universal-no-padding}).
\end{remark}
\vspace{-0.3cm}
\subsection{Experimental results} 
\label{l:explin}
We also verify the bounds from this section. At least on randomly chosen matrices, the correct bound is closer to the lower bound. Precisely, we generate (synthetic) training data of the form $Az$, where $z \sim \mathcal{N}(0, I)$ for a fixed $d\times d$ square matrix $A$ with random standard Gaussian entries and train a linear affine coupling network with $n = 1,2,4,8,16$ layers by minimizing the loss $\mathbb{E}_{z \sim \mathcal{N}(0, I)}\left[(f_n(z) - Az)^2\right]$. We are training this ``supervised'' regression loss instead of the standard unsupervised likelihood loss to minimize algorithmic (training) effects as the theorems are focusing on the representational aspects. The results for $d = 16$ are shown in Figure~\ref{fig:sub1}, and more details are in Section~\ref{a:experiments}. To test a different distribution other than the Gaussian ensemble, we also generated random Toeplitz matrices with constant diagonals by sampling the value for each diagonal from a standard Gaussian and performed the same regression experiments. We found the same dependence on number of layers but an overall higher error, suggesting that that this distribution is slightly `harder'. We provide results in Section~\ref{a:experiments}. We also regress a nonlinear RealNVP architecture on the same problems and see a similar increase in representational power though the nonlinear models seem to require more training to reach good performance.
\vspace{-0.2cm}
\paragraph{Additional Remarks} 
Finally, we also note that there are some surprisingly simple functions that cannot be \emph{exactly} implemented by a finite affine coupling network. For instance, an entrywise $\tanh$ function (i.e. an entrywise nonlinearity) cannot be exactly represented by any finite affine coupling network, regardless of the nonlinearity used. Details of this are in Appendix~\ref{apdx:entrywise-nonlinear}. 
\begin{figure}
    \centering
    \includegraphics[width=\textwidth]{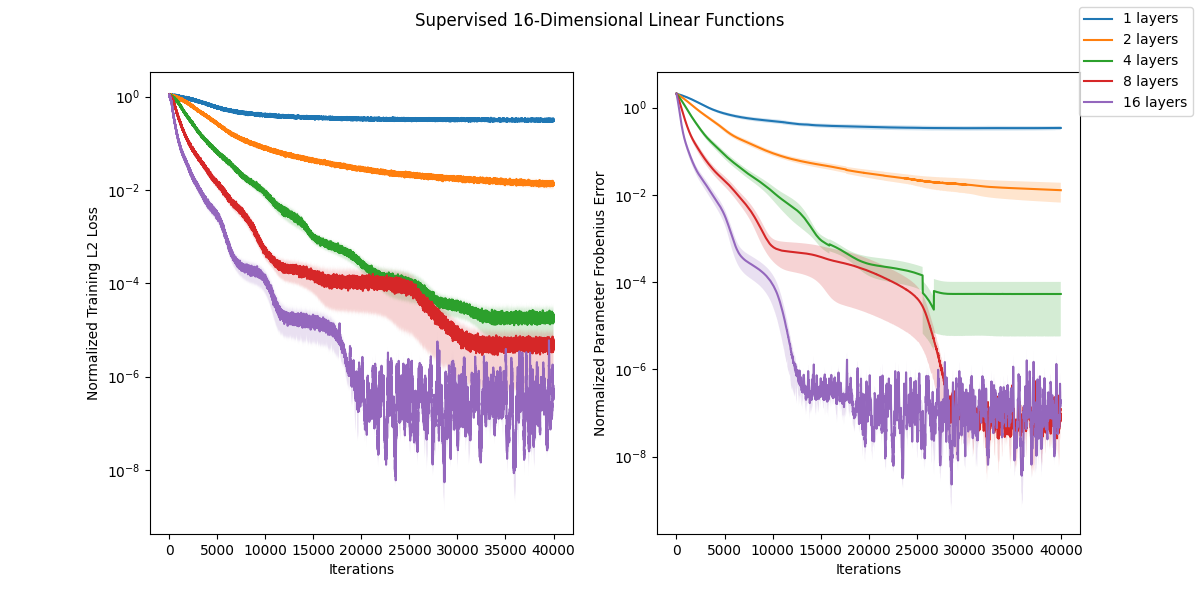}
    \caption{Fitting linear maps using $n$-layer linear affine coupling networks. The squared Frobenius error is normalized by $1/d^2$ so it is independent of dimensionality. The L2 Loss is normalized by $1/d$ for the same reason. We shade the standard error regions of these losses across the seeds tried. }
    \label{fig:sub1}
\end{figure}

\section{Proof of Theorem \ref{thm:separation}: Depth Lower Bounds on Invertible Models} 
\label{sec:depth}
In this section we prove Theorem \ref{l:sizedepth}. The intuition behind the $k/p$ bound on the depth relies on parameter counting: a depth $k/p$ invertible network will have $k$ parameters in total ($p$ per layer)---which is the size of the network we are trying to represent. Of course, the difficulty is that we need more than $f_{\theta}, g$ simply not being identical: we need a quantitative bound in various probability metrics. 

\begin{proof}[Proof of Theorem~\ref{thm:separation}]

The proof will proceed as follows. First, we will exhibit a large family of distributions (of size $\exp(kd)$), s.t. each pair of these distributions has a large pairwise Wasserstein distance between them. 
Moreover, each distribution in this family will be approximately expressible as the pushforward of the Gaussian through a small neural network. Since the family of distributions will have a large pairwise Wasserstein distance, by the triangle inequality, no other distribution can be close to two distinct members of the family.

Second, we can count the number of ``approximately distinct'' invertible networks of depth $l$: each layer is described by $p$ weights, hence there are $l p$ parameters in total. The Lipschitzness of the neural network in terms of its parameters then allows to argue about discretizations of the weights.  


Formally, we show the following lemma: 
\begin{lemma}[Large family of well-separated distributions] 
\label{lemma:well_separated_family}
For every $k=o(exp(d))$, for $d$ sufficiently large and $\gamma > 0$ there exists a family $\mathcal{D}$ of distributions, s.t. $|\mathcal{D}| \geq \exp(k d/20)$ and:  
\begin{enumerate}[leftmargin=*]
\item Each distribution $p \in \mathcal{D}$ is a mixture of $k$ Gaussians with means $\{\mu_i\}_{i=1}^k, \|\mu_i\|^2 = 20 \gamma^2 d$ and covariance $\gamma^2 I_d$. 
\item $\forall p \in \mathcal{D}$ and $\forall \epsilon > 0$, we have $W_1(p, g_{\# \mathcal{N}}) \leq \epsilon$ for a neural network $g$ with at most $O(k)$ parameters.\footnote{The size of $g$ doesn't indeed depend on $\epsilon$. The weights in the networks will simply grow as $\epsilon$ becomes small.} 
\item For any $p,p' \in \mathcal{D}, W_1(p,p') \geq 20 \gamma^2 d$. 
\end{enumerate} 
\end{lemma}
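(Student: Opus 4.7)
The plan is to build $\mathcal{D}$ by a packing argument on a sphere in $\mathbb{R}^d$, combined with a Gilbert--Varshamov style code over subsets of centers, and then verify the neural-network representability of each mixture by using the extra input coordinate as a component selector.

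\textbf{Step 1 (spherical packing of candidate centers).} I would fix a radius $R$ on the scale indicated by Property 1 (up to absolute constants, take $R = \Theta(\gamma^{2} d)$, consistent with the remark that the data sits in a sphere of radius $O(\gamma^2 d)$), and exhibit $N = \exp(\Omega(d))$ points $v_{1},\dots,v_{N}$ on the sphere of radius $R$ in $\mathbb{R}^d$ whose pairwise distances are all at least $R$. This is standard: either by a volume bound on spherical caps, or equivalently by sampling $N$ points uniformly on the sphere and using sub-Gaussian concentration of inner products. These $v_{i}$'s will serve as the universe from which mixture centers are drawn.

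\textbf{Step 2 (code over subsets).} Every size-$k$ subset $S \subseteq [N]$ defines a candidate mixture $p_{S} = \tfrac1k \sum_{i \in S} \mathcal{N}(v_{i}, \gamma^{2} I_{d})$, and there are $\binom{N}{k} = \exp(\Omega(kd))$ of them. To ensure a lower bound on pairwise Wasserstein distance I would then prune via a Gilbert--Varshamov argument: view subsets as length-$N$ weight-$k$ binary vectors and keep only a maximal subcollection $\mathcal{C}$ whose pairwise symmetric differences are at least $k/2$. A standard volume count in the Hamming metric shows $|\mathcal{C}| \ge \exp(kd/20)$ once constants are set appropriately, and we let $\mathcal{D} = \{p_{S} : S \in \mathcal{C}\}$. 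This immediately gives Property 1 and the cardinality bound.

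\textbf{Step 3 (pairwise Wasserstein separation, Property 3).} The key observation is that the centered Gaussian components $\mathcal{N}(0,\gamma^{2} I_{d})$ have typical $\ell_{2}$ norm $\gamma\sqrt{d}$, which is dominated by the center separation $R$ once $d$ is large. Hence the $W_{1}$ distance between $p_{S}$ and $p_{S'}$ is well-approximated by the $W_{1}$ distance between the discrete mixing measures $\mu_{S} = \tfrac1k \sum_{i \in S} \delta_{v_{i}}$ and $\mu_{S'}$: formally, by the triangle inequality and $W_{1}(\mathcal{N}(v,\gamma^{2}I),\delta_{v}) \le \gamma\sqrt{d}$, one gets $W_{1}(p_{S},p_{S'}) \ge W_{1}(\mu_{S},\mu_{S'}) - 2\gamma\sqrt{d}$. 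For $S,S' \in \mathcal{C}$, any coupling of $\mu_{S}$ and $\mu_{S'}$ must move a constant fraction of the mass across distance at least $R$ (because of the packing property and the $\ge k/2$ symmetric difference), so $W_{1}(\mu_{S},\mu_{S'}) = \Omega(R)$, yielding $W_{1}(p_{S},p_{S'}) \ge 20\gamma^{2} d$ once constants are chosen.

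\textbf{Step 4 (neural-network representability, Property 2).} I would use the extra $(d+1)$-st input coordinate as a selector: partition the real line into $k$ intervals $I_{1},\dots,I_{k}$ of equal standard-Gaussian mass $1/k$, and approximate $\mathbf{1}_{I_{j}}$ by a ReLU ``bump'' $b_{j}$ built from $O(1)$ units. The network outputs $\sum_{j} b_{j}(x_{0}) v_{i_{j}} + \gamma(x_{1},\ldots,x_{d})$; the first term is a smooth approximation to ``if $x_{0} \in I_{j}$, emit $v_{i_{j}}$'', and the second term supplies the required Gaussian noise. Sharpening the bumps drives the $W_{1}$ error of the selector below any prescribed $\epsilon$ (at the cost of inflating weights), while the parameter count remains $O(k)$ in the selector module and the depth is $O(1)$.

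\textbf{Main obstacle.} The bookkeeping for Steps 1, 2, and 4 is largely routine. The delicate step is Step 3: one has to justify passing from the Gaussian mixtures to their discrete mixing measures without losing a factor comparable to the desired bound. This is where the specific choice of scale $R \gg \gamma\sqrt{d}$ is essential, since only in that regime do the Gaussian components look ``pointlike'' relative to the inter-center spacing, and the discrete lower bound $\Omega(R)$ survives up to constants after the $O(\gamma\sqrt{d})$ perturbation from the covariance.
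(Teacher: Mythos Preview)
Your overall skeleton---a spherical packing of candidate centers, a Gilbert--Varshamov style code over size-$k$ subsets, and a ReLU selector on the extra coordinate---is exactly what the paper does (it invokes a sphere-packing lemma for Step~1, R\"odl's bound for Step~2, and the same selector construction for Step~4). The substantive divergence is in Step~3, and there your proposal has a concrete gap tied to a misreading of the scale.

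Property~1 fixes $\|\mu_i\|^2 = 20\gamma^2 d$, so the sphere radius is $R=\sqrt{20}\,\gamma\sqrt{d}=\Theta(\gamma\sqrt{d})$, \emph{not} $\Theta(\gamma^2 d)$. In particular $R$ and the typical Gaussian displacement $\gamma\sqrt{d}$ are of the \emph{same order}, so the components are not ``pointlike'' relative to the inter-center spacing, and your stated resolution of the main obstacle (``$R\gg\gamma\sqrt{d}$'') is simply unavailable here. Concretely: with symmetric-difference threshold $k/2$ only a $\tfrac14$ fraction of the discrete mass must move, giving $W_1(\mu_S,\mu_{S'})\ge \tfrac14\min_{i\neq j}\|v_i-v_j\|\approx \tfrac14\sqrt{40}\,\gamma\sqrt{d}\approx 1.58\,\gamma\sqrt{d}$, while the triangle-inequality correction $2\,\E\|Z\|$ is essentially $2\gamma\sqrt{d}$---so your lower bound on $W_1(p_S,p_{S'})$ comes out negative. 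The paper avoids this loss by \emph{not} reducing to the discrete mixing measures at all: it lower-bounds $W_1$ directly via Kantorovich--Rubinstein duality with the explicit $1$-Lipschitz witness $\varphi(x)=\max\bigl(0,\,2\gamma\sqrt{d}-\min_{i\in S}\|x-\mu_i\|\bigr)$, combined with the hypothesis that at least a $1/10$ fraction of the centers of $p$ are $\ge\sqrt{20}\,\gamma\sqrt{d}$ away from every center of $p'$ (this is also why the paper insists on pairwise intersection $\le k/10$ rather than $\le 3k/4$). If you want to salvage the discrete-reduction route, you must tighten the code to intersection $\le k/10$ so that $\ge 9/10$ of the mass moves; then the subtraction leaves a positive $\Omega(\gamma\sqrt{d})$ as needed.
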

\begin{proof}
The proof of this lemma will rely on two ideas: first, we will show that there is a family of distributions consisting of mixtures of Gaussians with $k$ components -- s.t. each pair of members of this family is far in $W_1$ distance, and each member in the family can be approximated by the pushforward of a network of size $O(k)$. 

The reason for choosing mixtures is that it's easy to lower bound the Wasserstein distance between two mixtures with equal weights and covariance matrices in terms of the distances between the means. Namely, we show: 
\begin{lemma}
\label{lemma:well_separated_mixtures}
 Let $\mu$ and $\nu$ be two mixtures of $k$ spherical Gaussians in $d$ dimensions with mixing weights $1/k$, means $(\mu_1, \mu_2, \dots, \mu_k)$ and  $(\nu_1, \nu_2, \dots, \nu_k)$ respectively, and with all of the Gaussians having spherical covariance matrix $\gamma^2 I$ for some $\gamma> 0$. Suppose that there exists a set $S \subseteq [k]$ with $|S| \ge k/10$ such that for every $i \in S$,
 \[ \min_{1 \le j \le k} \| \mu_i - \nu_j \|^2 \ge 20 \gamma^2 d. \]
 Then $W_1(\mu,\nu) = \Omega(\gamma\sqrt{d})$. 
\label{l:wdm}
\end{lemma}
\begin{proof}
By the dual formulation of Wasserstein distance (Kantorovich-Rubinstein Theorem,  \citep{villani2003topics}), we have
$ W_1(\mu,\nu) = \sup_{\varphi} \left[\int \varphi d\mu - \int \varphi d\nu \right]$
where the supremum is taken over all $1$-Lipschitz functions $\varphi$.
Towards lower bounding this, consider
$\varphi(x) = \max(0, 2 \gamma \sqrt{d} - \min_{i \in S} \|x_i - \mu_i\|)$ and note that this function is $1$-Lipschitz and always valued in $[0,2\gamma \sqrt{d}]$.
For a single Gaussian $Z \sim \mathcal{N}(0,\gamma^2 I_{d \times d})$, observe that
\[ \E_{Z \sim \mathcal{N}(0,\gamma^2 I)}[\max(0,2\gamma\sqrt{d} - \|Z\|)] \ge 2 \gamma\sqrt{d} - \E_{Z)}[\|Z\|] \ge 2 \gamma\sqrt{d} - \sqrt{\E_{Z \sim \mathcal{N}}[\|Z\|^2]} \ge \gamma\sqrt{d}. \]
Therefore, we see that $\int \varphi d\mu = \Omega(\gamma\sqrt{d})$ by combining the above calculation with the fact that at least $1/10$ of the centers for $\mu$ are in $S$.
On the other hand, for $Z \sim \mathcal{N}(0,\gamma^2 I_{d\times d})$ we have
\[ \Pr(\|Z\|^2 \ge 10 \gamma^2d) \le 2e^{-10d} \]
(e.g. by Bernstein's inequality \citep{vershynin}, as $\|Z\|^2$ is a sum of squares of Gaussians, i.e. a $\chi^2$-random variable).
In particular, since the points in $S$ do not have a close point in $\{\nu_i\}_{i=1}^k$, we similarly have $\int \varphi d\nu = O(e^{-10 d} \gamma\sqrt{d}) = o(\gamma\sqrt{d})$, since very little mass from each Gaussian in $\nu_i$ lands in the support of $\varphi$ by the separation assumption. Combining the bounds gives the result.
\end{proof}

Given this, to design a family of mixtures of Gaussians with large pairwise Wasserstein distance, it suffices to construct a large family of $k$-tuples for the means, s.t. for each pair of $k$-tuples $(\{\mu_i\}_{i=1}^k, \{\nu_i\}_{i=1}^k)$, there exists a set $S \subseteq [k], |S| \geq k/10$, s.t. $\forall i \in S, \min_{1 \le j \le k} \| \mu_i - \nu_j \|^2 \ge 20 \gamma^2 d$.  We do this by leveraging ideas from coding theory (the Gilbert-Varshamov bound \citep{gilbert1952comparison,varshamov1957estimate}). Namely, we first pick a set of $\exp(\Omega(d))$ vectors of norm $20 \gamma^2 d$, each pair of which has a large distance; second, we pick a large number ($\exp(\Omega(kd))$) of $k$-tuples from this set at random, and show with high probability, no pair of tuples intersect in more than $k/10$ elements. 

Concretely, first, by elementary Chernoff bounds, we have the following result: 
\begin{lemma}[Large family of well-separated points]\label{l:wellseparated} Let $\epsilon > 0$. There exists a set $\{v_1, v_2, \dots, v_N\}$ of vectors $v_i \in \mathbb{R}^d, \|v_i\| = 1$ with $N = \exp(d \epsilon^2/4)$, s.t. $\|v_i - v_j\|^2 \geq 2(1-\epsilon)$ for all $i \ne j$. 
\end{lemma} 
\begin{proof}
Recall that for a random unit vector $v$ on the sphere in $d$ dimensions, $\Pr(v_i > t/\sqrt{d}) \le e^{-t^2/2}$. (This is a basic fact about spherical caps, see e.g. \cite{rao}). By spherical symmetry and the union bound, this means for two unit vectors $v,w$ sampled uniformly at random $\Pr(|\langle v,w\rangle| > t/\sqrt{d}) \le 2 e^{-t^2/2}$. Taking $t = \epsilon \sqrt{d}$ gives that the probability is $2 e^{-d\epsilon^2/2}$; therefore if draw $N$ i.i.d. vectors, the probability that two have inner product larger than $\epsilon$ in absolute value is at most $N^2 e^{-d\epsilon^2/2} < 1$ if $N = e^{d\epsilon^2/4}$, which in particular implies such a collection of vectors exists.
\end{proof}

From this, we construct a large set of $k$-sized subsets of this family which have small overlap, essentially by choosing such subsets uniformly at random. We use the following result:
\begin{lemma}[\cite{rodl1996asymptotic}] There exists a set consisting of $(\frac{N}{2k})^{k/10}$ subsets of size $k$ of $[N]$, s.t. no pair of subsets intersect in more than $k/10$ elements.
\label{l:rodl}
\end{lemma}

For part 2 of Lemma \ref{lemma:well_separated_family}, we also show that a mixture of $k$ Gaussians can be approximated as the pushforward of a Gaussian through a network of size $O(k)$. 
Precisely, we show: 
\begin{lemma} 
Let $p:\mathbb{R}^d \to \mathbb{R}^+$ be a mixture of $k$ Gaussians with means $\{\mu_i\}_{i=1}^k, \|\mu_i\|^2 = 20 \sigma^2 d$ and covariance $\sigma^2 I_d$.  
Then, $\forall \epsilon > 0$, we have $W_1(p, g_{\# \mathcal{N}}) \leq \epsilon$ for a neural network $g$ with $O(k)$ parameters.\footnote{The size of $g$ doesn't indeed depend on $\epsilon$. The weights in the networks will simply grow with $\epsilon$.} \\
Moreover, for every 1-Lipschitz $\phi:\mathbb{R}^d \to \mathbb{R}^+$ and $X \sim g_{\# \mathcal{N}}$,
$\phi(X)$ is $O(\sigma^2 d)$-subgaussian.   
\label{l:applem1}
\end{lemma} 
The idea behind Lemma~\ref{l:applem1} is as follows: the network will use a sample from a standard Gaussian in $\mathbb{R}^{d+1}$. We will subsequently use the first coordinate to implement a ``mask'' that most of the time masks all but one randomly chosen coordinate in $[k]$. The remaining coordinates are used to produce a sample from each of the components in the Gaussian, and the mask is used to select only one of them. The full proof of Lemma~\ref{l:applem1} is given in Appendix~\ref{a:gensep}. 
With this, the proof of Lemma \ref{lemma:well_separated_family} is finished. 
\end{proof}



With this lemma in hand, we finish the Wasserstein lower bound with a stanard epsilon-net argument, using the parameter Lipschitzness of the invertible networks. Namely, the following lemma is immediate:

\begin{lemma}
Suppose that $\Theta \subset \mathbb{R}^{d'}$ is contained in a ball of radius $R > 0$
and $f_{\theta}$ is a family of invertible layerwise networks 
which is $L$-Lipschitz with respect to its parameters. 
Then there exists a set of neural networks $S_{\epsilon} = \{f_i\}$, s.t.  $|S_{\epsilon}| = O\left((\frac{LR}{\epsilon})^{d'}\right)$ and for every 
$\theta \in \Theta$ there exists a $f_i \in S_{\epsilon}$, s.t. $\E_{x \sim N(0,I_{d \times d})} \|f_{\theta}(x)-f_i(x)\|_{\infty} \leq \epsilon$. 
\label{l:epsnetsep}
\end{lemma}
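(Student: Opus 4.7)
The plan is to produce $S_{\epsilon}$ as the image under $\theta \mapsto f_{\theta}$ of a standard Euclidean net on the parameter set, and then use the parameter-Lipschitz assumption to transfer the covering guarantee from parameter space to function space. Concretely, I would first fix $\delta = \epsilon/L$, choose a $\delta$-net $\{\theta_1,\ldots,\theta_N\} \subset \Theta$ in the Euclidean norm on $\mathbb{R}^{d'}$, and define $S_{\epsilon} := \{f_{\theta_i} : 1 \le i \le N\}$.

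To bound $N$, I would use the usual volumetric covering argument for the Euclidean ball $B(0,R) \subset \mathbb{R}^{d'}$ containing $\Theta$: packing disjoint balls of radius $\delta/2$ inside $B(0, R + \delta/2)$ and comparing volumes yields $N \le (1 + 2R/\delta)^{d'} = O((LR/\epsilon)^{d'})$, which matches the claimed cardinality.

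Given any $\theta \in \Theta$, by the net property there is some $\theta_i$ with $\|\theta - \theta_i\| \le \delta = \epsilon/L$. Applying the parameter-Lipschitz assumption,
\[ \E_{x \sim \mathcal{N}(0,I_{d\times d})} \|f_{\theta}(x) - f_{\theta_i}(x)\| \le L \|\theta - \theta_i\| \le \epsilon. \]
Since $\|y\|_{\infty} \le \|y\|_2$ pointwise, this Euclidean-norm bound immediately upgrades to the $\ell_{\infty}$ guarantee $\E_{x}\|f_{\theta}(x) - f_{\theta_i}(x)\|_{\infty} \le \epsilon$ required in the statement, completing the argument.

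The main ``obstacle'' here is really only bookkeeping: there is no hard analytic step, just the translation between parameter-space covering and expected function-value covering via Lipschitzness, plus a trivial norm comparison. The content of the lemma lies entirely in the fact that the Lipschitz constant $L$ and radius $R$ enter only polynomially inside the $d'$-th power, which is what downstream lets us combine this net with the exponentially large family from Lemma~\ref{lemma:well_separated_family} to force a contradiction when $k/p$ is too small.
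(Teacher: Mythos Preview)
Your argument is correct and is exactly the standard $\epsilon$-net argument the paper has in mind; in fact the paper does not even write out a proof, declaring the lemma ``immediate,'' so your fleshed-out version (net in parameter space at scale $\epsilon/L$, volumetric cardinality bound, then Lipschitz transfer and the trivial $\|\cdot\|_\infty \le \|\cdot\|_2$ comparison) is precisely what is intended.
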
 

The proof of Theorem~\ref{l:sizedepth} can then be finished by triangle inequality: since the family of distributions has large Wasserstein distance, by the triangle inequality, no other distribution can be close to two distinct members of the family.
Finally, KL divergence bounds can be derived from
the Bobkov-G\"otze inequality \citep{bobkov1999exponential}, which lower bounds KL divergence by the squared Wasserstein distance. Concretely: 
\begin{theorem}[\cite{bobkov1999exponential}]\label{thm:bobkovgotze}
Let $p,q: \mathbb{R}^d \to \mathbb{R}^+$ be two distributions s.t. for every 1-Lipschitz $f:\mathbb{R}^d \to \mathbb{R}^+$ and $X \sim p$,
$f(X)$ is $c^2$-subgaussian. Then, we have $\mbox{KL}(q,p) \geq \frac{1}{2c^2} W_1(p,q)^2$.   
\end{theorem}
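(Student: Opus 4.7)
The plan is to prove this via the classical combination of Kantorovich--Rubinstein duality and the Donsker--Varadhan variational characterization of KL divergence, which is essentially the original Bobkov--G\"otze argument. The subgaussian hypothesis enters through Chernoff-style optimization.

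First I would recall the two dual representations that govern the two sides of the inequality. On the Wasserstein side, the Kantorovich--Rubinstein theorem gives
\[ W_1(p,q) \;=\; \sup_{f : \mathrm{Lip}(f) \le 1} \Bigl( \mathbb{E}_q[f] - \mathbb{E}_p[f] \Bigr). \]
On the KL side, the Donsker--Varadhan variational formula gives
\[ \mathrm{KL}(q,p) \;=\; \sup_{g}\, \Bigl( \mathbb{E}_q[g] - \log \mathbb{E}_p[e^{g}] \Bigr), \]
where the supremum is over all bounded measurable $g$ (and more generally any $g$ for which both sides are well defined).

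Next I would fix an arbitrary $1$-Lipschitz $f:\mathbb{R}^d\to\mathbb{R}$ and, without loss of generality, assume $\mathbb{E}_p[f] = 0$ by subtracting a constant (which changes neither side). The hypothesis that $f(X)$ is $c^2$-subgaussian under $X\sim p$ then reads $\mathbb{E}_p[e^{\lambda f}] \le \exp(\lambda^2 c^2/2)$ for every $\lambda\in\mathbb{R}$. Plugging the test function $g = \lambda f$ into Donsker--Varadhan and using this subgaussian moment bound yields
\[ \mathrm{KL}(q,p) \;\ge\; \lambda\,\mathbb{E}_q[f] \;-\; \frac{\lambda^2 c^2}{2}. \]
Optimizing the right-hand side over $\lambda$ (take $\lambda = \mathbb{E}_q[f]/c^2$) gives $\mathrm{KL}(q,p) \ge (\mathbb{E}_q[f])^2/(2c^2) = (\mathbb{E}_q[f] - \mathbb{E}_p[f])^2/(2c^2)$.

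Finally, I would take the supremum over all $1$-Lipschitz $f$ on both sides. Since the left-hand side does not depend on $f$ and the right-hand side becomes $W_1(p,q)^2/(2c^2)$ by Kantorovich--Rubinstein (the sign of $\mathbb{E}_q[f] - \mathbb{E}_p[f]$ can be flipped by replacing $f$ with $-f$, so the squared quantity recovers the absolute value needed for $W_1$), this produces the claimed bound $\mathrm{KL}(q,p) \ge \tfrac{1}{2c^2} W_1(p,q)^2$. The only mild technical obstacle is ensuring Donsker--Varadhan applies with the unbounded choice $g = \lambda f$; this is standard by truncating $f$ to $f_M = \max(-M,\min(M,f))$ (still $1$-Lipschitz), applying the bound above to each $f_M$ using the subgaussian tail of $f$ to control $\mathbb{E}_p[e^{\lambda f_M}]\to \mathbb{E}_p[e^{\lambda f}]$, and then taking $M\to\infty$.
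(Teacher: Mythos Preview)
Your argument is correct and is essentially the original Bobkov--G\"otze proof: Kantorovich--Rubinstein duality to linearize $W_1$, Donsker--Varadhan to lower-bound $\mathrm{KL}$, the subgaussian MGF bound to control the log-partition term, and a Chernoff optimization over $\lambda$. The paper itself does not supply a proof of this statement; it is quoted as an external result from \cite{bobkov1999exponential} and used as a black box, so there is nothing further to compare against.
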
 
To finish the two inequalities in the statement of the main theorem, we note that: 
\begin{itemize}[leftmargin=*]
\item For any mixture of $k$ Gaussians where the component means $\mu_i$ satisfy $\|\mu_i\| \leq M$, the condition of Theorem~\ref{thm:bobkovgotze} is satisfied with $c^2 = O(\gamma^2 + M^2)$. In fact, we show this for the pushforward through $g$, the neural network which approximates the mixture, which poses some non-trivial technical challenges. This is part of the statement of Lemma \ref{l:applem1} above, proved in Appendix~\ref{a:gensep}.
\item A pushforward of the standard Gaussian through a $L$-Lipschitz generator $f$ satisfies the conditions of Theorem~\ref{thm:bobkovgotze} with $c^2 = L^2$, which implies the second part of the claim. (Theorem 5.2.2 in \cite{vershynin}.)
\end{itemize}
Altogether, the proof of Theorem \ref{thm:separation} is complete. 
\end{proof}


\section{Conclusion}
Normalizing flows are one of the most heavily used generative models across various domains, though we still have a relatively narrow understanding of their relative pros and cons compared to other models. In this paper, we tackled representational aspects of two issues that are frequent sources of training difficulties, depth and conditioning. We hope this work will inspire more theoretical study of fine-grained properties of different generative models. 

\paragraph{Acknowledgements.} Frederic Koehler was supported in part by NSF CAREER Award CCF-1453261, NSF Large CCF-1565235, Ankur Moitra's ONR Young Investigator Award, and E. Mossel's Vannevar Bush Fellowship ONR-N00014-20-1-2826.
\bibliographystyle{plainnat}
\bibliography{main}

\begin{thebibliography}{32}
\providecommand{\natexlab}[1]{#1}
\providecommand{\url}[1]{\texttt{#1}}
\expandafter\ifx\csname urlstyle\endcsname\relax
  \providecommand{\doi}[1]{doi: #1}\else
  \providecommand{\doi}{doi: \begingroup \urlstyle{rm}\Url}\fi

\bibitem[Arora et~al.(2017)Arora, Ge, Liang, Ma, and
  Zhang]{arora_generalization_2017}
Sanjeev Arora, Rong Ge, Yingyu Liang, Tengyu Ma, and Yi~Zhang.
\newblock Generalization and {Equilibrium} in {Generative} {Adversarial} {Nets}
  ({GANs}).
\newblock \emph{arXiv:1703.00573 [cs, stat]}, August 2017.
\newblock URL \url{http://arxiv.org/abs/1703.00573}.
\newblock arXiv: 1703.00573.

\bibitem[Artin(2011)]{artin}
Michael Artin.
\newblock \emph{Algebra}.
\newblock Pearson, 2011.

\bibitem[Bailey and Telgarsky(2018)]{bailey2018size}
Bolton Bailey and Matus~J Telgarsky.
\newblock Size-noise tradeoffs in generative networks.
\newblock In \emph{Advances in Neural Information Processing Systems}, pages
  6489--6499, 2018.

\bibitem[Behrmann et~al.(2019{\natexlab{a}})Behrmann, Grathwohl, Chen,
  Duvenaud, and Jacobsen]{behrmann2018invertible}
Jens Behrmann, Will Grathwohl, Ricky~TQ Chen, David Duvenaud, and
  J{\"o}rn-Henrik Jacobsen.
\newblock Invertible residual networks.
\newblock In \emph{International Conference on Machine Learning}, pages
  573--582. PMLR, 2019{\natexlab{a}}.

\bibitem[Behrmann et~al.(2019{\natexlab{b}})Behrmann, Vicol, Wang, Grosse, and
  Jacobsen]{behrmann2019invertibility}
Jens Behrmann, Paul Vicol, Kuan-Chieh Wang, Roger~B Grosse, and J{\"o}rn-Henrik
  Jacobsen.
\newblock On the invertibility of invertible neural networks.
\newblock 2019{\natexlab{b}}.

\bibitem[Bobkov and G{\"o}tze(1999)]{bobkov1999exponential}
Sergej~G Bobkov and Friedrich G{\"o}tze.
\newblock Exponential integrability and transportation cost related to
  logarithmic sobolev inequalities.
\newblock \emph{Journal of Functional Analysis}, 163\penalty0 (1):\penalty0
  1--28, 1999.

\bibitem[Caffarelli(1992)]{caffarelli1992regularity}
Luis~A Caffarelli.
\newblock The regularity of mappings with a convex potential.
\newblock \emph{Journal of the American Mathematical Society}, 5\penalty0
  (1):\penalty0 99--104, 1992.

\bibitem[Devroye et~al.(2018)Devroye, Mehrabian, and Reddad]{devroye2018total}
Luc Devroye, Abbas Mehrabian, and Tommy Reddad.
\newblock The total variation distance between high-dimensional gaussians.
\newblock \emph{arXiv preprint arXiv:1810.08693}, 2018.

\bibitem[Dinh et~al.(2014)Dinh, Krueger, and Bengio]{dinh2014nice}
Laurent Dinh, David Krueger, and Yoshua Bengio.
\newblock Nice: Non-linear independent components estimation.
\newblock \emph{arXiv preprint arXiv:1410.8516}, 2014.

\bibitem[Dinh et~al.(2016)Dinh, Sohl-Dickstein, and Bengio]{dinh2016density}
Laurent Dinh, Jascha Sohl-Dickstein, and Samy Bengio.
\newblock Density estimation using real nvp.
\newblock \emph{arXiv preprint arXiv:1605.08803}, 2016.

\bibitem[Dupont et~al.(2019)Dupont, Doucet, and Teh]{dupont2019augmented}
Emilien Dupont, Arnaud Doucet, and Yee~Whye Teh.
\newblock Augmented neural odes.
\newblock In \emph{Advances in Neural Information Processing Systems}, pages
  3134--3144, 2019.

\bibitem[Everitt(2014)]{everitt2014finite}
Brian~S Everitt.
\newblock Finite mixture distributions.
\newblock \emph{Wiley StatsRef: Statistics Reference Online}, 2014.

\bibitem[Gilbert(1952)]{gilbert1952comparison}
Edgar~N Gilbert.
\newblock A comparison of signalling alphabets.
\newblock \emph{The Bell system technical journal}, 31\penalty0 (3):\penalty0
  504--522, 1952.

\bibitem[Grathwohl et~al.(2018)Grathwohl, Chen, Bettencourt, Sutskever, and
  Duvenaud]{grathwohl2018ffjord}
Will Grathwohl, Ricky~TQ Chen, Jesse Bettencourt, Ilya Sutskever, and David
  Duvenaud.
\newblock Ffjord: Free-form continuous dynamics for scalable reversible
  generative models.
\newblock \emph{arXiv preprint arXiv:1810.01367}, 2018.

\bibitem[Grauert and Fritzsche(2012)]{grauert2012several}
Hans Grauert and Klaus Fritzsche.
\newblock \emph{Several complex variables}, volume~38.
\newblock Springer Science \& Business Media, 2012.

\bibitem[Horn and Johnson(2012)]{horn2012matrix}
Roger~A Horn and Charles~R Johnson.
\newblock \emph{Matrix analysis}.
\newblock Cambridge university press, 2012.

\bibitem[Huang et~al.(2020)Huang, Dinh, and Courville]{huang2020augmented}
Chin-Wei Huang, Laurent Dinh, and Aaron Courville.
\newblock Augmented normalizing flows: Bridging the gap between generative
  flows and latent variable models.
\newblock \emph{arXiv preprint arXiv:2002.07101}, 2020.

\bibitem[Kingma and Dhariwal(2018)]{kingma2018glow}
Durk~P Kingma and Prafulla Dhariwal.
\newblock Glow: Generative flow with invertible 1x1 convolutions.
\newblock In \emph{Advances in Neural Information Processing Systems}, pages
  10215--10224, 2018.

\bibitem[Kong and Chaudhuri(2020)]{kong2020expressive}
Zhifeng Kong and Kamalika Chaudhuri.
\newblock The expressive power of a class of normalizing flow models.
\newblock \emph{arXiv preprint arXiv:2006.00392}, 2020.

\bibitem[Leshno et~al.(1993)Leshno, Lin, Pinkus, and
  Schocken]{leshno1993multilayer}
Moshe Leshno, Vladimir~Ya Lin, Allan Pinkus, and Shimon Schocken.
\newblock Multilayer feedforward networks with a nonpolynomial activation
  function can approximate any function.
\newblock \emph{Neural networks}, 6\penalty0 (6):\penalty0 861--867, 1993.

\bibitem[Montufar and Ay(2011)]{montufar2011refinements}
Guido Montufar and Nihat Ay.
\newblock Refinements of universal approximation results for deep belief
  networks and restricted boltzmann machines.
\newblock \emph{Neural computation}, 23\penalty0 (5):\penalty0 1306--1319,
  2011.

\bibitem[Mont{\'u}far et~al.(2011)Mont{\'u}far, Rauh, and
  Ay]{montufar2011expressive}
Guido~F Mont{\'u}far, Johannes Rauh, and Nihat Ay.
\newblock Expressive power and approximation errors of restricted boltzmann
  machines.
\newblock In \emph{Advances in neural information processing systems}, pages
  415--423, 2011.

\bibitem[Papamakarios et~al.(2017)Papamakarios, Pavlakou, and
  Murray]{papamakarios2017masked}
George Papamakarios, Theo Pavlakou, and Iain Murray.
\newblock Masked autoregressive flow for density estimation.
\newblock In \emph{Advances in Neural Information Processing Systems}, pages
  2338--2347, 2017.

\bibitem[Rao et~al.(1973)Rao, Rao, Statistiker, Rao, and Rao]{rao1973linear}
Calyampudi~Radhakrishna Rao, Calyampudi~Radhakrishna Rao, Mathematischer
  Statistiker, Calyampudi~Radhakrishna Rao, and Calyampudi~Radhakrishna Rao.
\newblock \emph{Linear statistical inference and its applications}, volume~2.
\newblock Wiley New York, 1973.

\bibitem[Rao(2011)]{rao}
Satish Rao.
\newblock Lecture notes for cs270: Algorithms.
\newblock
  \url{https://people.eecs.berkeley.edu/~satishr/cs270/sp11/rough-notes/measure-concentration.pdf},
  2011.

\bibitem[R{\"o}dl and Thoma(1996)]{rodl1996asymptotic}
Vojt{\v{e}}ch R{\"o}dl and Lubo{\v{s}} Thoma.
\newblock Asymptotic packing and the random greedy algorithm.
\newblock \emph{Random Structures \& Algorithms}, 8\penalty0 (3):\penalty0
  161--177, 1996.

\bibitem[Teshima et~al.(2020)Teshima, Ishikawa, Tojo, Oono, Ikeda, and
  Sugiyama]{teshima2020couplings}
Takeshi Teshima, Isao Ishikawa, Koichi Tojo, Kenta Oono, Masahiro Ikeda, and
  Masashi Sugiyama.
\newblock Coupling-based invertible neural networks are universal
  diffeomorphism approximators.
\newblock In \emph{Advances in Neural Information Processing Systems}, 2020.

\bibitem[Varshamov(1957)]{varshamov1957estimate}
Rom~Rubenovich Varshamov.
\newblock Estimate of the number of signals in error correcting codes.
\newblock \emph{Docklady Akad. Nauk, SSSR}, 117:\penalty0 739--741, 1957.

\bibitem[Vershynin(2018)]{vershynin}
Roman Vershynin.
\newblock \emph{High-dimensional probability: An introduction with applications
  in data science}, volume~47.
\newblock Cambridge university press, 2018.

\bibitem[Villani(2003)]{villani2003topics}
C{\'e}dric Villani.
\newblock \emph{Topics in optimal transportation}.
\newblock Number~58. American Mathematical Soc., 2003.

\bibitem[Zhang(2006)]{zhang2006schur}
Fuzhen Zhang.
\newblock \emph{The Schur complement and its applications}, volume~4.
\newblock Springer Science \& Business Media, 2006.

\bibitem[Zhang et~al.(2020)Zhang, Gao, Unterman, and
  Arodz]{zhang2020approximation}
Han Zhang, Xi~Gao, Jacob Unterman, and Tom Arodz.
\newblock Approximation capabilities of neural odes and invertible residual
  networks.
\newblock In \emph{International Conference on Machine Learning}, pages
  11086--11095. PMLR, 2020.

\end{thebibliography}
\newpage
\appendix
\section{Missing proofs for Subsection~\ref{ss:linear}}
\subsection{Exact Representation of Nonlinear Functions}
\label{apdx:corollary}

We give the proof of Corollary~\ref{cor:lin_lower_bound}, which shows that there exist functions which cannot be exactly represented by 4 composed (nonlinear) affine couplings.
\begin{proof}[Proof of Corollary \ref{cor:lin_lower_bound}]
Let $T \in \mathbb{R}^{2d\times 2d}$ be the matrix given by Theorem \ref{thm:linear_lower_main} for some $d$. Let $f(x) = Tx$.
If $g$ is any depth-4 affine coupling with nonlinear $s$ and $t$ functions its Jacobian at zero cannot be $T$, since its Jacobian can be calculated by precisely the matrix multiplication given in Theorem \ref{thm:linear_lower_main} with matrices which are the Jacobians of the affine coupling layers.
So, $g$ and $f$ will not have matching Taylor expansions at zero. Hence, $f$ cannot be exactly represented by any depth-4 affine coupling.
\end{proof}

\subsection{Maximum Likelihood Estimation}\label{apdx:mle}
In this section, we elaborate on the consequences of Theorem~\ref{thm:linear_simulation_main} for Maximum Likelihood Estimation with linear affine coupling networks. 

Recall that if $\{P_{\theta}\}_{\theta}$ is a class of densities parameterized by $\theta$, then the \emph{Maximum Likelihood Estimate} of $\theta$ given data points $x_1,\ldots,x_n$ is any maximizer of the joint log likelihood of the data points $x_i$, namely 
\[ \sum_{i = 1}^n \log P_{\theta}(x_i). \]
By Theorem~\ref{thm:linear_simulation_main}, $\exists k \le 47$ such that a linear affine coupling model with this many layers can implement an arbitrary orientation-preserving invertible linear map. This means that for this choice of $k$, the class of distributions writeable as the pushforward of $N(0,I)$ through a linear affine coupling layer with $k$ layers is exactly the set of distributions $N(0,\Sigma)$ with $\Sigma$ invertible. This follows since:\\
(1) The pushforward of $N(0,I)$ through a linear affine coupling network is a Gaussian distribution of the form $N(0,A^TA)$ where $A$ is the linear map that the network computes, and
\\(2) Any distribution $N(0,\Sigma)$ can be sampled from by outputting $\Sigma^{1/2} Z$ where $Z \sim N(0,I)$, and by the simulation result a linear affine coupling model can exactly represent $\Sigma^{1/2}$. 

As is well known (e.g. Chapter 8 of \cite{rao1973linear}), for the class of Gaussian distributions with zero-mean the maximum-likelihood estimate is the sample covariance matrix, i.e.
\[ \hat \Sigma = \frac{1}{n} \sum_{i = 1}^n x_i x_i^T. \]
Since the MLE does not depend on the parameterization of the class of distributions, this also holds for the class of pushforwards of linear affine coupling models which we proved is equivalent. This leads to the following result giving consistency with Gaussian data.
\begin{corollary}
Let $k \le 47$ be as in Theorem~\ref{thm:linear_simulation_main}, and suppose that $x_1,\ldots,x_n \sim N(0,\Sigma)$ i.i.d. with $\Sigma : d\times d$ invertible. Suppose that $\hat P$ is the Maximum Likelihood Estimator given data $x_1,\ldots,x_n$ among the class of distributions which are pushforwards of $N(0,I)$ through a linear affine coupling network with $k$ layers. Then $\hat P \to N(0,\Sigma)$ as the number of samples $n \to \infty$, i.e. maximum likelihood estimation is consistent. Moreover, for any $\epsilon > 0$, $n = poly(d,1/\epsilon)$ samples suffice to ensure $d_{TV}(\hat P, N(0,\Sigma)) \le \epsilon$ with high probability.
\end{corollary}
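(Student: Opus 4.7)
The plan is to reduce the statement to the classical analysis of the Gaussian covariance MLE by invoking the parameterization-invariance of maximum likelihood, then to convert operator-norm concentration of the sample covariance into a total-variation bound via Pinsker's inequality.

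First, I would invoke the identification established in the two bullet points preceding the Corollary: the class of pushforwards of $N(0,I)$ through a linear affine coupling network with $k$ layers is \emph{exactly} $\mathcal{G} := \{N(0,\tilde\Sigma) : \tilde\Sigma \in \R^{d\times d},\ \tilde\Sigma \succ 0\}$. Since the MLE, viewed as a distribution, depends only on the family under consideration and not on its parameterization, $\hat P$ coincides with the MLE over $\mathcal{G}$. By the classical result (Chapter 8 of \cite{rao1973linear}), this MLE is $\hat P = N(0,\hat\Sigma)$ with $\hat\Sigma = \tfrac{1}{n}\sum_{i=1}^n x_i x_i^T$, which is invertible (hence in $\mathcal{G}$) almost surely whenever $n \ge d$.

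Next, I would reduce both conclusions to a single estimate on $W := \Sigma^{-1/2}\hat\Sigma\,\Sigma^{-1/2}$. A short computation gives the closed-form Gaussian KL
\[ \operatorname{KL}\bigl(N(0,\Sigma)\,\Vert\,N(0,\hat\Sigma)\bigr) = \tfrac{1}{2}\bigl(\operatorname{tr}(W^{-1}) + \log\det(W) - d\bigr) = \tfrac{1}{2}\sum_{i=1}^d \bigl(\lambda_i^{-1} + \log\lambda_i - 1\bigr), \]
where $\lambda_i$ are the eigenvalues of $W$. Taylor-expanding $f(\lambda) = \lambda^{-1} + \log\lambda - 1$ around $\lambda = 1$ shows $f(\lambda) = O\bigl((\lambda-1)^2\bigr)$ whenever $\lambda$ is bounded away from $0$ and $\infty$, so on the event $\|W - I\|_{op} \le 1/2$ we have $\operatorname{KL} \le C\|W-I\|_F^2 \le Cd\|W-I\|_{op}^2$. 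Pinsker then gives $d_{TV}(\hat P, N(0,\Sigma)) \le \sqrt{Cd/2}\,\|W-I\|_{op}$.

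Finally, I would quote the standard covariance-concentration bound for Gaussian samples (e.g.\ Theorem 4.7.1 of \cite{vershynin}): for $n \gtrsim d + t$,
\[ \Pr\!\left[\|W - I\|_{op} \le C_1\Bigl(\sqrt{d/n} + t/\sqrt{n}\Bigr)\right] \ge 1 - 2e^{-t^2}. \]
Consistency $\hat P \to N(0,\Sigma)$ follows immediately (e.g.\ in $d_{TV}$) as $n \to \infty$. For the quantitative bound, choosing $n = C_2 d^2/\epsilon^2$ (with $t$ a small constant times $\sqrt{\log(1/\delta)}$) ensures $\|W - I\|_{op} \le \epsilon/\sqrt{Cd/2\cdot 2}$ with high probability, which yields $d_{TV}(\hat P, N(0,\Sigma)) \le \epsilon$.

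The only real obstacle is the step $\operatorname{KL} \le Cd\|W-I\|_{op}^2$, which requires $W$ to have eigenvalues bounded away from $0$ and $\infty$; this is automatic on the high-probability event provided by the concentration bound once $n \gtrsim d$. Everything else is bookkeeping: the parameterization-invariance argument for MLE, the closed-form Gaussian KL, and Pinsker's inequality are standard.
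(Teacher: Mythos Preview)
Your argument is correct and follows essentially the same route as the paper: reduce to the Gaussian MLE via parameterization invariance, invoke concentration of the sample covariance, and translate covariance closeness into distributional closeness. The only cosmetic difference is that you pass through the closed-form Gaussian KL and Pinsker's inequality, whereas the paper cites direct total-variation bounds between Gaussians (\cite{devroye2018total}); both yield the same $n = \mathrm{poly}(d,1/\epsilon)$ conclusion.
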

\begin{proof}
By the Law of Large Numbers, the sample covariance matrix is a consistent estimator for the true covariance matrix so we also obtain convergence of distributions, showing the first claim.
The total variation distance guarantee follows from concentration of the sample covariance matrix about its mean, and from standard bounds on total variation distance between Gaussians in terms of their covariance matrices \citep{devroye2018total}.
\end{proof}
Note that this result is for the true MLE, i.e. the \emph{global maximum} of the likelihood objective. When the log likelihood is maximized using e.g. gradient descent, this may not necessarily reach the global maximum. 
\section{Proof of Lemma \ref{l:applem1}} \label{a:gensep}

We prove Lemma \ref{l:applem1}, which shows how to sample from a mixture of Gaussians using a neural generator: 
\begin{proof}[Proof of Lemma \ref{l:applem1}]
We will use a construction similar to \cite{arora_generalization_2017}.  
Since the latent variable dimension is $d+1$, the idea is to use the first variable, say $h$ as input to a ``selector'' circuit which picks one of the components of the mixture with approximately the right probability, then use the remaning dimensions---say variable $z$, to output a sample from the appropriate component. 

For notational convenience, let $M = \sqrt{20 \sigma^2 d}$. Let $\{h_i\}_{i=1}^{k-1}$ be real values that partition $\mathbb{R}$ into $k$ intervals that have equal probability under the Gaussian measure. Then, the map

\begin{equation} \tilde{f}(h,z) =  \sigma z + \sum_{i=1}^k 1(h \in (h_{i-1}, h_i]) \mu_i \end{equation} 
exactly generates the desired mixture, where $h_0$ is understood to be $-\infty$ and $h_k = +\infty$. 

To construct $g$, first we approximate the indicators using two ReLUs, s.t. we design for each interval $(h_{i-1}, h_i]$ a function $\tilde{1}_i$, s.t.: \\
(1) $\tilde{1}_i(h) = 1(h \in (h_{i-1},h_i])$ unless $h \in [h_{i-1}, h_{i-1} + \delta^{+}_{i-1}] \cup [h_{i} -\delta^{-}_i, h_i]$, and the Gaussian measure of the union of the above two intervals is $\delta$. \\
(2) $\sum_i \tilde{1}_i(h) = 1$. \\
The constructions of the functions $\tilde{1}_i$ above can be found in  \cite{arora_generalization_2017}, Lemma 3.
We subsequently construct the neural network $f(h, z)$ using ReLUs defined as 
\begin{equation}
f(h,z) = \sigma z + \sum_{i=1}^k \left(\mbox{ReLU}(-M(1 - \tilde{1}_i(h)) + \mu_{i})  -\mbox{ReLU}(-M(1 - \tilde{1}_i(h)) - \mu_{i}))\right).\\
\end{equation} 
Denoting
\[B := \bigcup_{i=1}^{k - 1} [h_i - \delta^{-}_i, h_i + \delta^{+}_i]\] note that if $h \notin B$, $\forall z$, $f(h,z) = \tilde{f}(h,z)$, as desired. 
If $h \in [h_i - \delta^{-}_i, h_i + \delta^{+}_i]$, $f(h, z)$ by construction will be
$\sigma z + \sum_{i=1}^k w_i(h) \mu_i$ for some $w_i(h)\in [0, 1]$ s.t. $\sum_i w_i(h) = 1$.





Denoting by $\phi(h,z)$ the joint pdf of $h,z$, by the coupling definition of $W_1$, we have
\[
\begin{aligned}
W_1(f_{\#\mathcal{N}}, \mu) &\leq&& \int_{h\in \R, z\in \R^{d}}\left|\tilde{f}(h, z) - f(h,z)\right|_1 d\phi(h,z)\\
&=&&\int_{h\in \R}\left|\sum_{i=1}^k 1(h \in (h_{i-1}, h_i]) \mu_i -\sum_{i=1}^k \left(\mbox{ReLU}(-M(1 - \tilde{1}_i(h)) + \mu_{i})  -\right.\right.\\
&&&\left.\left.\mbox{ReLU}(-M(1 - \tilde{1}_i(h)) - \mu_{i}))\right)\right|_1d\phi(h)\\
&=&&\int_{h\in B}\left|\sum_{i=1}^k 1(h \in (h_{i-1}, h_i]) \mu_i - \sum_i w_i(h)\mu_i\right|_1 d\phi(h)\\
&\leq&& \int_{h \in B} \max_{i, j}|\mu_i - \mu_ j|_1 d\phi(h)\\
&=&& \int_{h\in B} 2M\sqrt{d} d\phi(h)\\
&=&& 2 M\sqrt{d} \Pr\left[h \in B\right]\\
&=&& 2 M \sqrt{d} k \delta
\end{aligned}
\]
So if we choose $\delta = \frac{\epsilon}{2 M \sqrt{d} k}$, we have the desired bound in $W_1$. (We note, making $\delta$ small only manifests in the size of the weights of the functions $\tilde{1}$, and not in the size of the network itself. This is obvious from the construction in Lemma 3 in  \cite{arora_generalization_2017}.) 

Proceeding to subgaussianity, consider a $1$-Lipschitz function $\varphi$ centered such that $\E[(\varphi \circ f)_{\# \mathcal N}] = 0$.
Next, we'll show that $(\varphi \circ f)_{\# \mathcal N}$ is subgaussian with an appropriate constant. We can view $f_{\# \mathcal N}$ as the sum of two random variables: $\sigma z$ and 
\[\sum_{i=1}^k \left(\mbox{ReLU}(-M(1 - \tilde{1}_i(h)) + \mu_{i})  -\mbox{ReLU}(-M(1 - \tilde{1}_i(h)) - \mu_{i}))\right).\]
$\sigma z$ is a Gaussian with covariance $\sigma^2 I$. The other term is contained in an $l_2$ ball of radius $M$. Using the Lipschitz property and Lipschitz concentration for Gaussians (Theorem 5.2.2 of \cite{vershynin}),  we see that $\Pr[\left|(\varphi \circ f)\right| \ge t] \le \exp\left(-\frac{(t - M)^2}{2 \sigma^2}\right)$. By considering separately the cases $|t| \le 2M$ and $|t| > 2M$, we immediately see this implies that the pushforward is $O(\sigma^2 + M^2)$-subgaussian. Since $M^2 = O(\sigma^2 d)$, the claim follows.
\end{proof}

\section{Approximating entrywise nonlinearity with affine couplings}\label{apdx:entrywise-nonlinear}

To show how surprisingly hard it may be to represent even simple function using affine couplings, we show an example of a very simple function---an entrywise application of hyperbolic tangent, s.t.
an arbitrary depth/width sequence of affine coupling blocks with $\tanh$ nonlinearities cannot exactly represent it. Thus, even for simple functions, the affine-coupling structure imposes nontrivial restrictions. 

Precisely, we show: 

\begin{theorem} Let $d \ge 2$ and denote $g: \real^d \to \real^d$, $g(z) := (\tanh{z_1}, \dots,  \tanh{z_d}).$
Then, for any $W, D, N \in \mathbb{N}$ and norm $\|\cdot \|$, there exists an $\varepsilon(W,D,N) > 0$, s.t. for any network $f$ consisting of a sequence of at most $N$ affine coupling layers of the form: 
$$(y_S, y_{\bar{S}}) \to (y_{S}, y_{\bar{S}} \odot a(y_{S}) + b(y_{S}))$$ 
for in each layer an arbitrary set $S \subsetneq [d]$ and $a,b$ arbitrary feed-forward $\tanh$ neural networks of width at most $W$, depth at most $D$, and weight norm into each unit of at most $R$, it holds that  
\[\E_{x \in [-1,1]^d} \|f(x) - g(x)\| > \varepsilon(W,D,N,R). \]
\label{t:tanh}
\end{theorem}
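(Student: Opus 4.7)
My plan is to prove the theorem in two stages: first, a compactness argument reduces the approximation statement to showing that no admissible network can \emph{exactly} represent $g$ on $[-1,1]^d$; second, a structural induction on the number of layers $N$ rules out any exact representation.

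For the compactness reduction, fix $W, D, N, R$ and observe that the discrete choices (the $N$ partitions $S_1, \ldots, S_N \subsetneq [d]$) come from a finite set, while for each fixed choice the parameters of the $\tanh$ sub-networks $a_k, b_k$ lie in a compact Euclidean ball (each input weight vector bounded by $R$, with a bounded number of units $\le 2 N W D$). The map sending (partitions, parameters) $\mapsto f \in C([-1,1]^d, \R^d)$ is continuous in uniform norm, and the functional $f \mapsto \E_{x \in [-1,1]^d}\|f(x) - g(x)\|$ is continuous. Hence its infimum over this compact parameter space is attained and is strictly positive iff no admissible $f$ agrees with $g$ on $[-1,1]^d$. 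Since $\tanh$ is real-analytic and real-analytic functions are closed under composition and affine operations, both $f$ and $g$ extend to real-analytic maps $\R^d \to \R^d$; agreement on $[-1,1]^d$ thus implies agreement on all of $\R^d$ by the identity theorem, so it suffices to rule out exact representation on the whole space.

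The structural heart of the proof is induction on $N$. The base case $N = 1$ is immediate: a single affine coupling layer with partition $S$ leaves $x_S$ unchanged, while $(\tanh x_i)_{i \in S}$ is not the identity on any nonempty $S$; and $S = \emptyset$ reduces the layer to an affine map $x \mapsto x \odot a_0 + b_0$, which likewise cannot equal the nonlinear map $g$. For the inductive step, consider an $N$-layer composition $f = L_N \circ f^{(N-1)}$ and assume $f = g$. If the last layer has partition $S$ with coupling factors $a, b$, then comparing coordinates gives $f^{(N-1)}_S(x) = (\tanh x_i)_{i \in S}$ and $f^{(N-1)}_j(x) = \bigl(\tanh x_j - b_j(\tanh x_S)\bigr)/a_j(\tanh x_S)$ for each $j \in \bar S$. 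In particular, each component $f^{(N-1)}_j$ with $j \in \bar S$ depends on $x$ only through $x_j$ and $x_S$, never through $x_i$ for $i \in \bar S \setminus \{j\}$. This ``block-decoupled'' dependency, together with the prescribed $\tanh$-type factor on $S$, is the hook for the induction.

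The main obstacle is propagating the induction, since the residual function required of the first $N-1$ layers is \emph{not} itself an entrywise $\tanh$ map. The cleanest route I see is to strengthen the inductive invariant to a class of real-analytic maps with a prescribed coordinate-wise dependency pattern and, in each output, a univariate factor of the specific form $(\tanh x_j - \beta_j(\tanh x_S))/\alpha_j(\tanh x_S)$ modulo contributions depending only on the ``frozen'' block $S$; the inductive claim is then that no such map is realizable by an affine coupling composition of fewer than the required depth. Verifying this reduces, via the peel-off step applied recursively, to showing that after all layers are removed one is left with the constraint that an identity map equals a genuinely nonlinear rational-in-$\tanh$ expression, which is impossible; the crux of the work is a careful analysis (using that $\tanh$ is transcendental, so $\tanh x_j$ cannot be written as a rational function of $\tanh$ of linear forms in $x_S$ alone) of how partial derivatives $\partial_{x_i} f^{(N-1)}_j$ for distinct $i, j \in \bar S$ are forced to vanish through the remaining layers. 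An alternative and perhaps more robust route is to compute high-order mixed partial derivatives at the origin via the multivariate Faà di Bruno formula applied through the layers and compare against the extremely sparse tensor pattern of $g$ (namely $\partial^k g_i/\partial x_{j_1} \cdots \partial x_{j_k}(0) = 0$ unless $j_1 = \cdots = j_k = i$); showing that no choice of partitions and parameters can realize this sparse tensor while matching the nonzero diagonal values $\partial^k \tanh(0)$ for enough $k$ is the remaining combinatorial obstacle, and it is the step I expect to be the hardest to make fully rigorous.
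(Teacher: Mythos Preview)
Your compactness reduction to exact representation is correct and essentially the same as the paper's first paragraph. The divergence is in how you rule out exact equality $f=g$, and here your proposal has a genuine gap: the induction does not close. After peeling off the last layer you correctly observe that $f^{(N-1)}$ must satisfy a certain constrained form, but that form is \emph{not} in the same class as the original hypothesis (it is no longer entrywise $\tanh$), so you cannot invoke the inductive hypothesis as stated. You acknowledge this and propose to strengthen the invariant to a larger class of ``rational-in-$\tanh$'' maps with prescribed dependency patterns, but you never actually state that class precisely, let alone verify that it is closed under the peel-off operation for \emph{arbitrary} partitions $S$ at each subsequent step. The alternative route via Fa\`a di Bruno and sparse derivative tensors is likewise only a sketch; you explicitly flag the combinatorial step as the one you ``expect to be the hardest to make fully rigorous,'' and indeed nothing in the proposal rules out the possibility that the derivative constraints are satisfiable for some clever choice of partitions and parameters. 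As written, the proof is incomplete at its central step.

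The paper avoids induction entirely with a short complex-analytic argument. Both $f$ and $g$ extend to holomorphic functions on dense open subsets of $\mathbb{C}^d$ (since $\tanh$ is meromorphic and affine couplings are built from addition, multiplication, and composition). By the identity theorem for several complex variables, $f=g$ on $[-1,1]^d$ forces $f=g$ wherever both are defined. Now one observes a \emph{global} obstruction: $g$ is periodic in the imaginary direction, $g(z+\pi i k)=g(z)$ for all $k\in\mathbb{Z}^d$, whereas an affine coupling network is invertible off a nowhere-dense set (the zero set of the scaling functions $a$), so it cannot be periodic. This contradiction finishes the proof in a few lines. The key insight you are missing is that invertibility of $f$ versus non-injectivity of $g$ is the right lever, but one must pass to $\mathbb{C}^d$ to see it, since $\tanh$ is injective on $\mathbb{R}$.
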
 
The proof of the theorem is fairly unusual, as it uses some tools from complex analysis 
in several variables (see \cite{grauert2012several} for a reference) --- though it's so short that we include it here. The result also generalizes to other neural networks with analytic activations.
\begin{proof}[Proof of Theorem \ref{t:tanh}]
By compactness of the class of models bounded by $W,D,N,R$, it suffices to prove that there is no way to exactly represent the function.

Suppose for contradiction that $f = g$ on the entirety of $[-1,1]^d$. Let $z_1,\ldots,z_d$ denote the $d$ inputs to the function: we now consider the behavior of $f$ and $g$ when we extend their definition to $\mathbb{C}^d$. From the definition, $g$ extends to a holomorphic function (of several variables) on all of $\mathbb{C}^d \setminus \{z : \exists j, z_j = i\pi (k + 1/2) : k \in \mathbb{Z} \}$, i.e. everywhere where $\tanh$ doesn't have a pole. Similarly, there exists an dense open subset $D \subset \mathbb{C}^d$ on which the affine coupling network $f$ is holomorphic, because it is formed by the addition, multiplication, and composition of holomorphic functions. 

We next prove that $f = g$ on their complex extensions by the Identity Theorem (Theorem 4.1 of \cite{grauert2012several}). We must first show that $f = g$ on an open subset of $\mathbb{C}^d$. To prove this, observe that $f$ is analytic at zero and its power series expansion is uniquely defined in terms of the values of $f$ on $\mathbb{R}^d$ (for example, we can compute the coefficients by taking partial derivatives). It follows that the power series expansions of $f$ and $g$ are both equal at zero and convergent in an open neighborhood of $0$ in $\mathbb{C}^d$, so we can indeed apply the Identity Theorem; this shows that $f = g$ wherever they are both defined. 

From the definition $ \tanh(z) = \frac{e^{2z} - 1}{e^{2z} + 1}$ 
we can see that $g$ is periodic in the sense that
$g(z + \pi i k) = g(z)$ for any $k \in \mathbb{Z}^d$. However, by construction the affine coupling network $f$ is invertible whenever, at every layer, the output of the function $a$ is not equal to zero. By the identity theorem, the set of inputs where each $a$ vanishes is nowhere dense --- otherwise, by continuity $a$ vanishes on the open neighborhood of some point, so $a = 0$ by the Identity Theorem which contradicts the assumption. Therefore the union of inputs where $a$ at any layer vanishes is also nowhere dense. Consider the behavior of $f$ on an open neighborhood of $0$ and of $i\pi$: we have shown that $f$ is invertible except on a nowhere dense set, and also that $g = f$ wherever $f$ is defined, but $g(z) = g(z + i\pi)$ so it's impossible for $f$ to be invertible on these neighborhoods except on a nowhere dense subset. By contradiction, $f \ne g$ on $[-1,1]^d$. 
\end{proof}

Finally, to give empirical evidence that the above is not merely a theoretical artifact, we regress an affine coupling architecture to fit entrywise $\tanh$. 

Specifically, we sample 10-dimensional vectors from a standard Gaussian distribution and train networks as in the padding section on a squared error objective such that each input is regressed on its elementwise $\tanh$. We train an affine coupling network with $5$ pairs of alternating couplings with $g$ and $h$ networks consisting of 2 hidden layers with 128 units each. 
For comparison, we also regress a simple MLP with 2 hidden layers with 128 units in each layer, exactly one of the $g$ or $h$ subnetworks from the coupling architecture, which contains 20 such subnetworks. For another comparison, we also try this on the elementwise ReLU function, using affine couplings with $\tanh$ activations and the same small MLP.

As we see in Figure \ref{fig:l2_nonlinearities}, the affine couplings fit the function substantially worse than a much smaller MLP -- corroborating our theoretical result. 

\begin{figure}
    \centering
    \includegraphics[width=\textwidth]{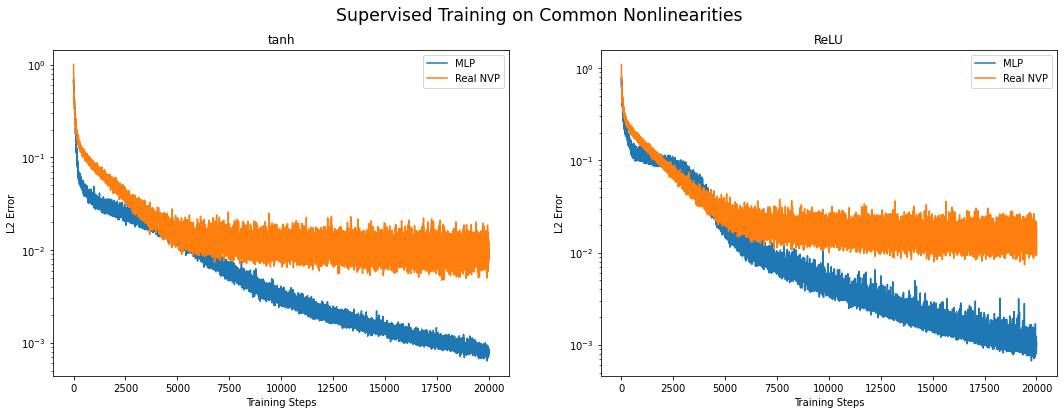}
    \caption{The smaller MLPs are much better able to fit simple elementwise nonlinearities than the affine couplings.}
    \label{fig:l2_nonlinearities}
\end{figure}
\section{Experimental verification} 
\label{a:experiments}

\subsection{Partitioned Linear Networks}
\label{a:pln}

In this section, we will provide empirical support for Theorems \ref{thm:linear_simulation_main} and \ref{thm:linear_lower_main}. More precisely, empirically, the number of required linear affine coupling layers at least for random matrices seems closer to the lower bound -- so it's even better than the upper bound we provide.

\paragraph{Setup} We consider the following synthetic setup. We train $n$ layers of affine coupling layers, namely networks of the form
\[
f_n(z) = \prod_{i=1}^n E_i
\begin{bmatrix}C_i & D_i \\ 0 & I\end{bmatrix}
\begin{bmatrix}I & 0 \\ A_i & B_i\end{bmatrix}
\]
with $E_i, B_i, C_i$ diagonal. Notice the latter two follow the statement of Theorem \ref{thm:linear_simulation_main} and the alternating order of upper vs lower triangular matrices can be assumed without loss of generality, as a product of upper/lower triangular matrices results in an upper/lower triangular matrix. The matrices $E_i$ turn out to be necessary for training -- they enable ``renormalizing'' the units in the network (in fact, Glow uses these and calls them actnorm layers; in older models like RealNVP, batchnorm layers are used instead).

The training data is of the form $Az$, where $z \sim \mathcal{N}(0, I)$ for a fixed $d\times d$ square matrix $A$ with random standard Gaussian entries. This ensures that there is a ``ground'' truth linear model that fits the data well. \footnote{As a side remark, this ground truth is only specified up to orthogonal matrices $U$, as $A U z$ 
is identically distributed to $Az$, due to the rotational invariance of the standard Gaussian.}
We then train the affine coupling network by minimizing the loss
$\mathbb{E}_{z \sim \mathcal{N}(0, I)}\left[(f_n(z) - Az)^2\right]$ and trained on a variety of values for $n$ and $d$ in order to investigate how the depth of linear networks affects the ability to fit linear functions of varying dimension.

Note, we are not training via maximum likelihood, but rather we are minimizing a ``supervised'' loss, wherein the network $f_n$ ``knows'' which point $x$ a latent $z$ is mapped to. This is intentional and is meant to separate the representational vs training aspect of different architectures. Namely, this objective is easier to train, and our results address the representational aspects of different architectures of flow networks -- so we wish our experiments to be confounded as little as possible by aspects of training dynamics.  

We chose $n = 1,2,4,8,16$ layers and $d=4,8,16,32,64$ dimensions. We present the standard L2 training loss and the squared Frobenius error of the recovered matrix $\hat{A}$ obtained by multiplying out the linear layers $||\hat{A} - A||^2_F$, both normalized by $1/d^2$ so that they are independent of dimensionality. We shade the standard error regions of these losses across the seeds tried. All these plots are log-scale, so the noise seen lower in the charts is very small.

We initialize the $E, C, B$ matrices with $1$s on the diagonal and $A, D$ with random Gaussian elements with $\sigma = 10^{-5}$ and train with Adam with learning rate $10^{-4}$. We train on $5$ random seeds which affect the matrix $A$ generated and the datapoints $z$ sampled.

Finally, we also train similar RealNVP models on the same datasets, using a regression objective as done with the PLNs but $s$ and $t$ networks with two hidden layers with 128 units and the same numbers of couplings as with the PNN experiments.

\paragraph{Results} The results demonstrate that 
the 1- and 2- layer networks fail to fit even coarsely any of the linear functions we tried. Furthermore, the 4-layer networks consistently under-perform compared to the 8- and 16-layer networks. The 8- and 16-layer networks seem to perform comparably, though we note the larger mean error for d=64, which suggests that the performance can potentially be further improved (either by adding more layers, or improving the training by better choice of hyperparameters; even on this synthetic setup, we found training of very deep networks to be non-trivial). 

These experimental results suggest that at least for random linear transformations $T$, the number of required linear layers is closer to the lower bound. Closing this gap is an interesting question for further work.

In our experiments with the RealNVP architecture, we observe more difficulty in fitting these linear maps, as they seem to need more training data to reach similar levels or error. We hypothesize this is due to the larger model class that comes with allowing nonlinear functions in the couplings.
\subsection{Additional Padding Results on Synthetic Datasets}
\label{ss:mnist_cifar_nvp}
We provide further results on the performance of Real NVP models on datasets with different kinds of padding (no padding, zero-padding and Gaussian padding) on standard synthetic datasets--Swissroll, 2 Moons and Checkerboard.  

The results are consistent with the performance on the mixture of 4 Gaussians: in Figures \ref{fig:nvp_swissroll}, \ref{fig:nvp_2_moons}, and \ref{fig:nvp_checkerboard}, we see that the zero padding greatly degrades the conditioning and somewhat degrades the visual quality of the learned distribution. On the other hand, Gaussian padding consistently performs best, both in terms of conditioning of the Jacobian, and in terms of the quality of the recovered distribution.

On both datasets, we train a network consisting of an alternating composition of affine couplings where $g, h$ are MLPs with two hidden layers with 128 units and ReLU activations with $g$ having a $\exp(\tanh(\cdot))$ activation on the output (following \cite{dinh2016density}).

\begin{figure}
    \centering
    \includegraphics[width=\textwidth]{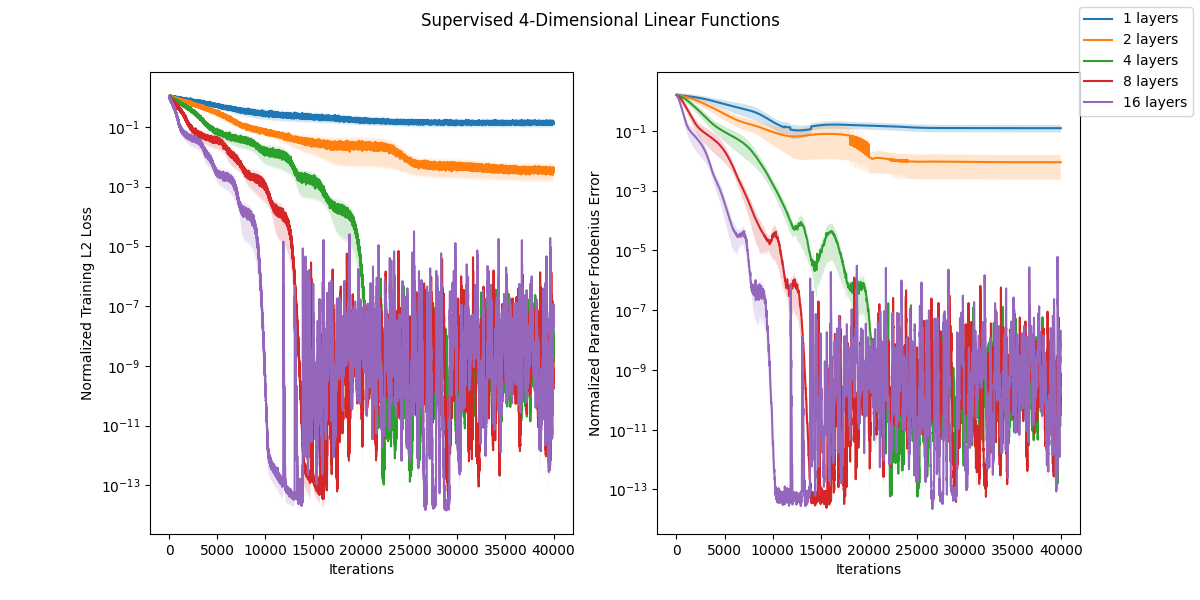}
    \caption{Learning Partitioned Linear Networks on 4-D linear functions.}
    \label{fig:plnr4}
\end{figure}
\begin{figure}
    \centering
    \includegraphics[width=\textwidth]{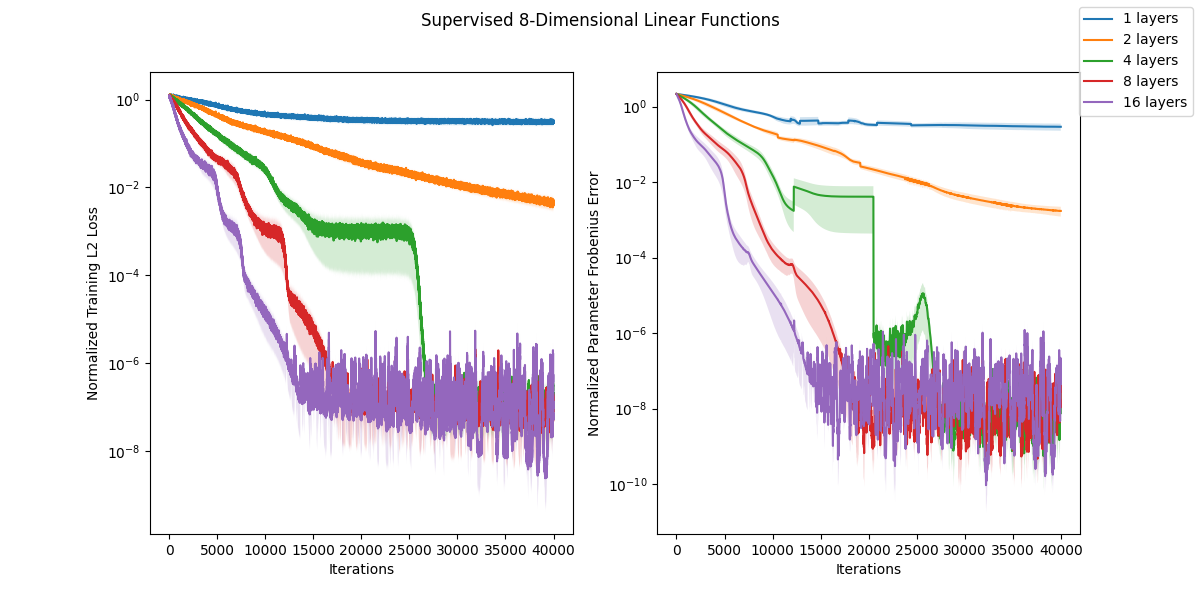}
    \caption{Learning Partitioned Linear Networks on 8-D linear functions.}
    \label{fig:plnr8}
\end{figure}
\begin{figure}
    \centering
    \includegraphics[width=\textwidth]{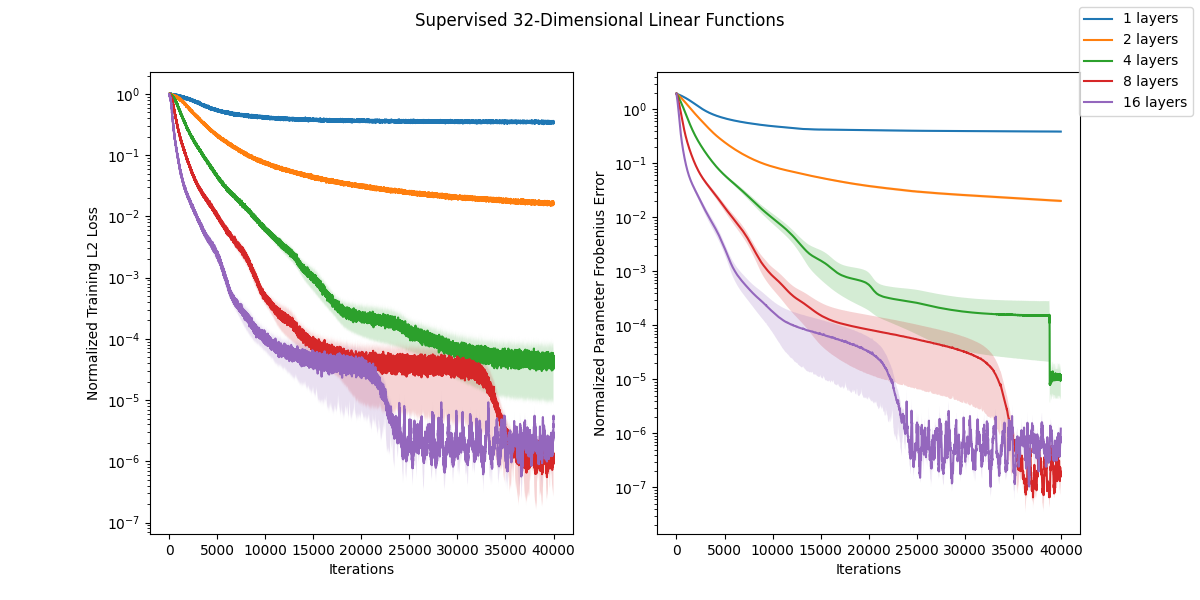}
    \caption{Learning Partitioned Linear Networks on 32-D linear functions.}
    \label{fig:plnr32}
\end{figure}
\begin{figure}
    \centering
    \includegraphics[width=\textwidth]{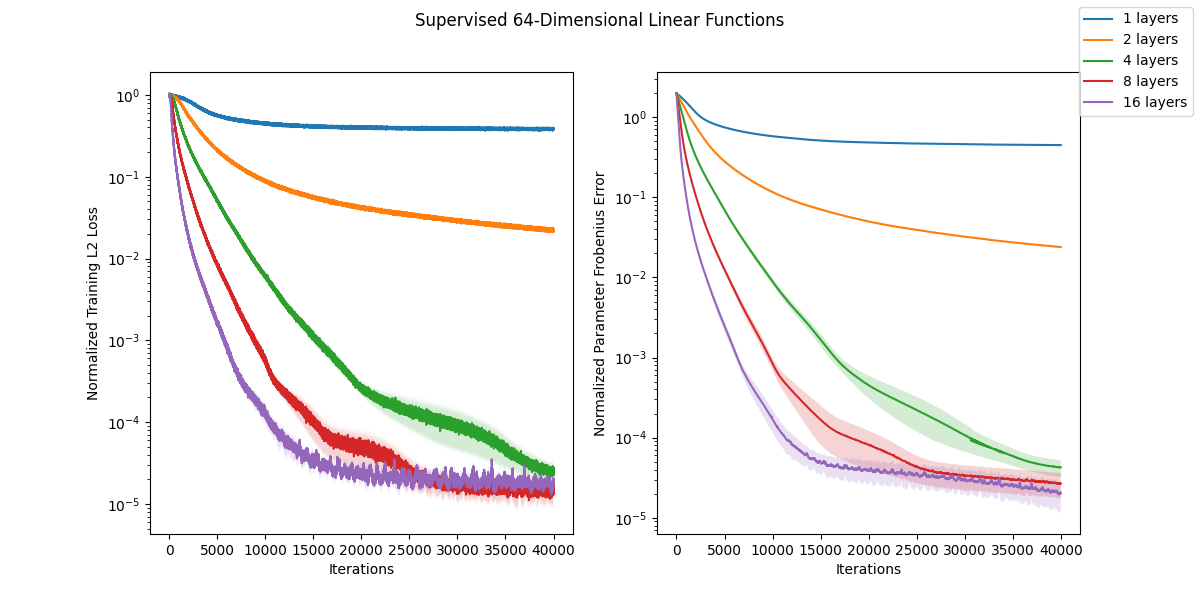}
    \caption{Learning Partitioned Linear Networks on 64-D linear functions.}
    \label{fig:plnr64}
\end{figure}
\begin{figure}
    \centering
    \includegraphics[width=\textwidth]{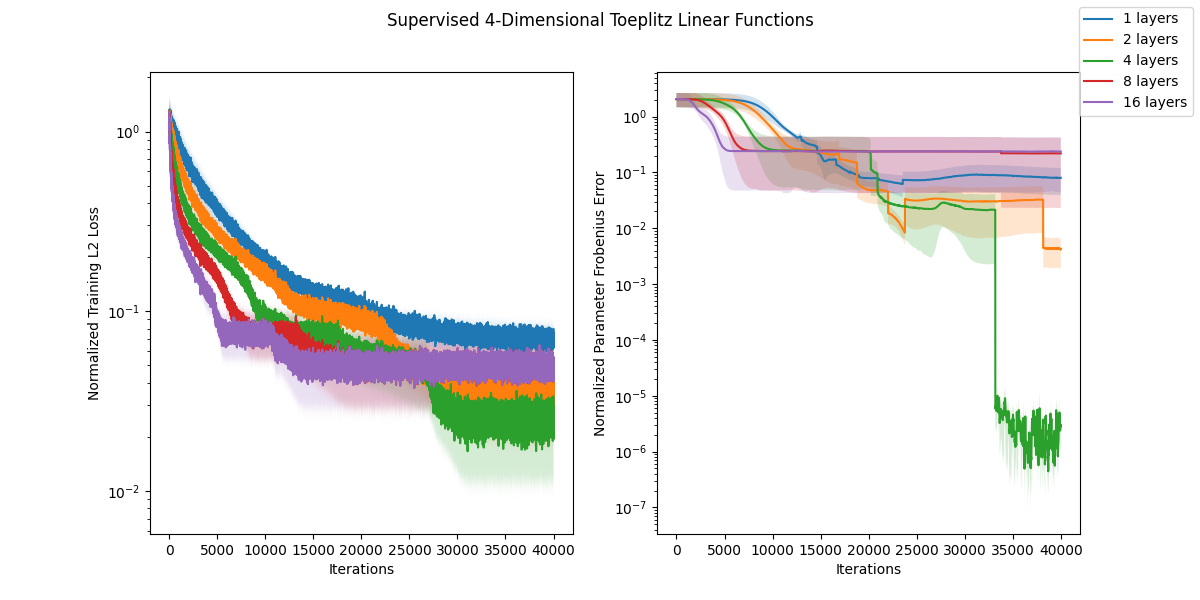}
    \caption{Learning Partitioned Linear Networks on 4-D Toeplitz functions.}
    \label{fig:plnt4}
\end{figure}
\begin{figure}
    \centering
    \includegraphics[width=\textwidth]{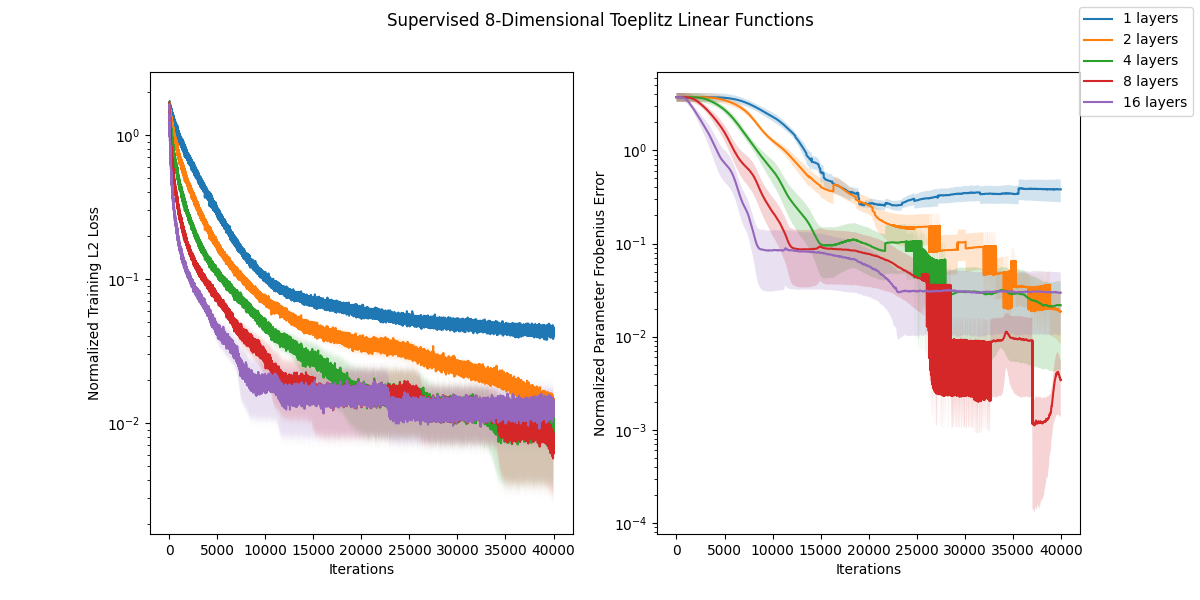}
    \caption{Learning Partitioned Linear Networks on 8-D Toeplitz functions.}
    \label{fig:plnt8}
\end{figure}
\begin{figure}
    \centering
    \includegraphics[width=\textwidth]{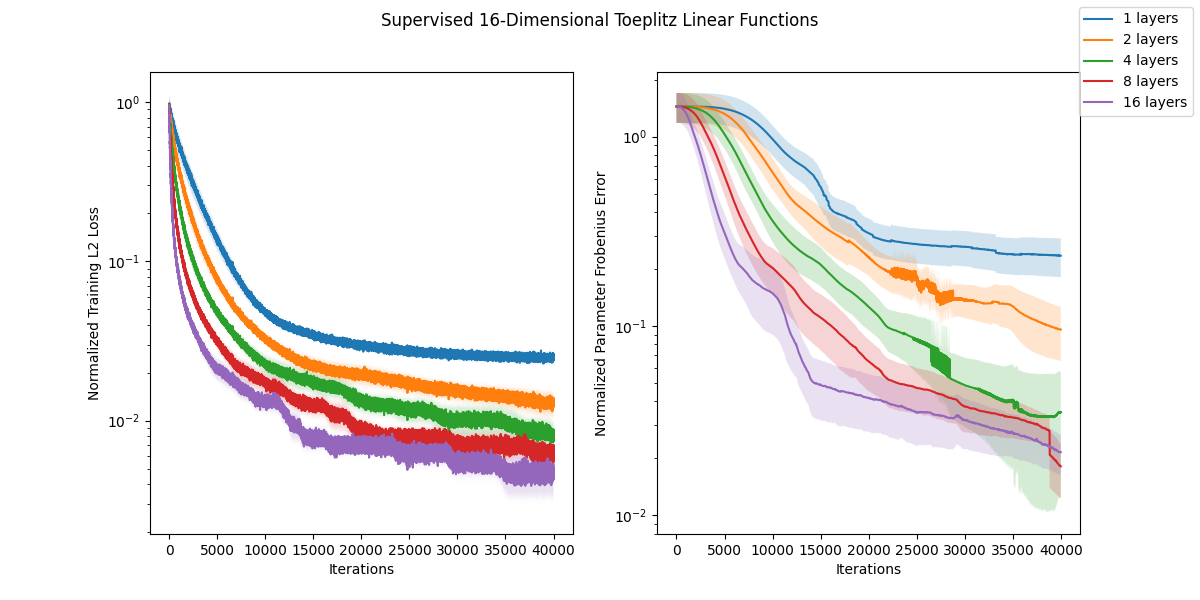}
    \caption{Learning Partitioned Linear Networks on 16-D Toeplitz functions.}
    \label{fig:plnt16}
\end{figure}
\begin{figure}
    \centering
    \includegraphics[width=\textwidth]{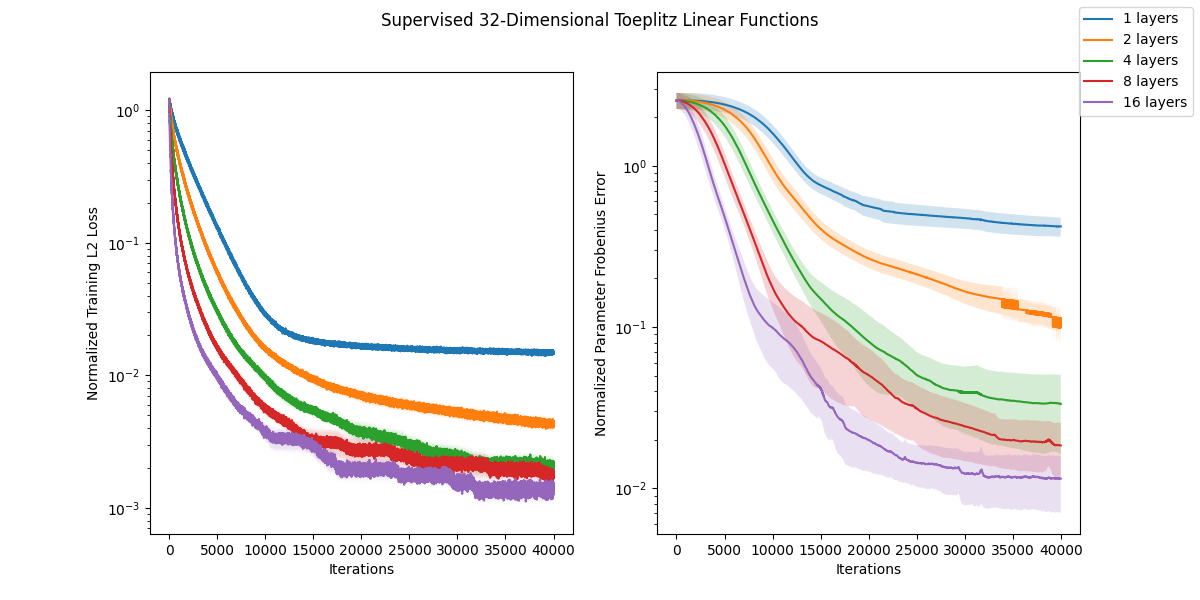}
    \caption{Learning Partitioned Linear Networks on 32-D Toeplitz functions.}
    \label{fig:plnt32}
\end{figure}
\begin{figure}
    \centering
    \includegraphics[width=\textwidth]{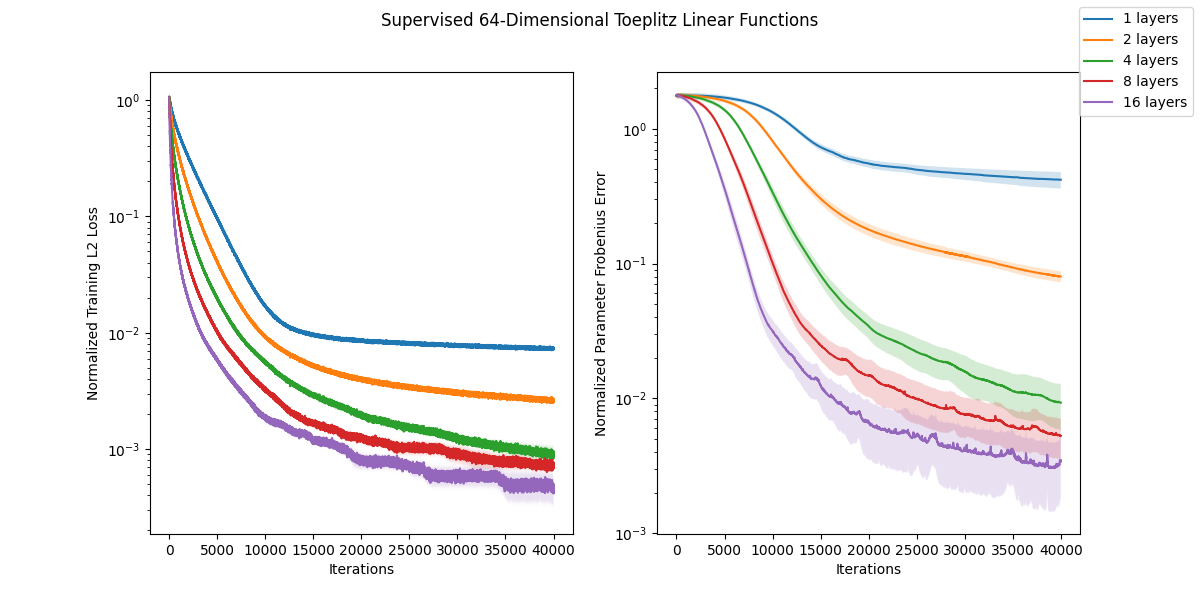}
    \caption{Learning Partitioned Linear Networks on 64-D Toeplitz functions.}
    \label{fig:plnt64}
\end{figure}
\begin{figure}
\centering
\begin{minipage}{.48\textwidth}
  \centering
  \includegraphics[width=\linewidth]{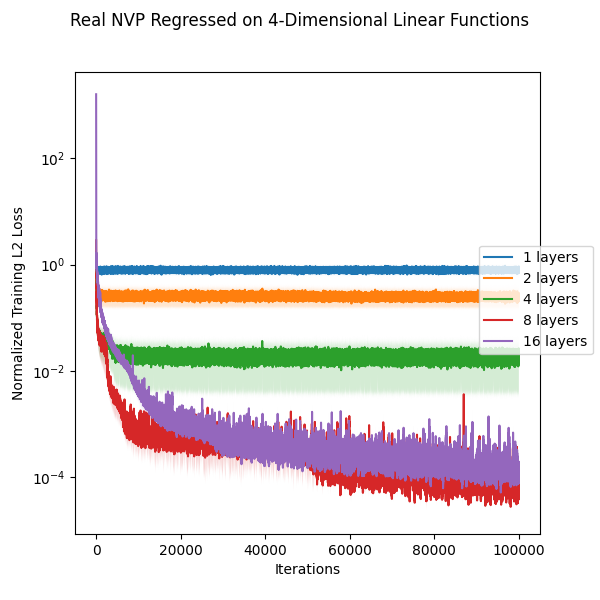}
  \captionof{figure}{Real NVP Regressed on 4-D Linear Functions}
  \label{fig:nvp_linear_4}
\end{minipage}%
\hfill
\begin{minipage}{.48\textwidth}
  \centering
  \includegraphics[width=\linewidth]{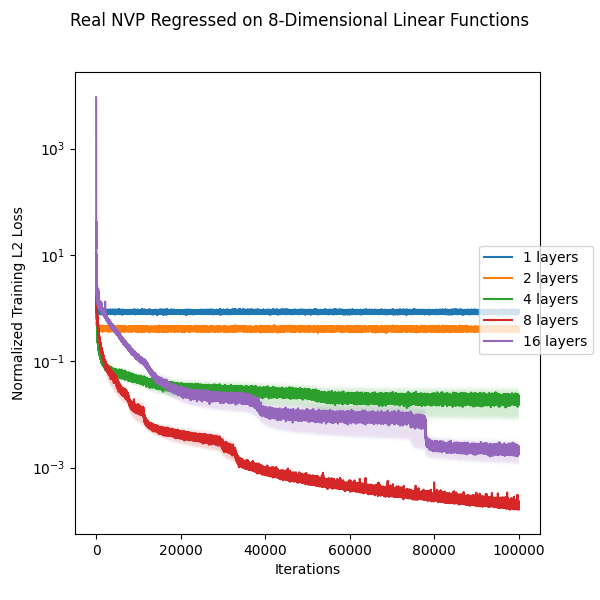}
  \captionof{figure}{Real NVP Regressed on 8-D Linear Functions}
  \label{fig:nvp_linear_8}
\end{minipage}
\end{figure}
\begin{figure}
\centering
\begin{minipage}{.48\textwidth}
  \centering
  \includegraphics[width=\linewidth]{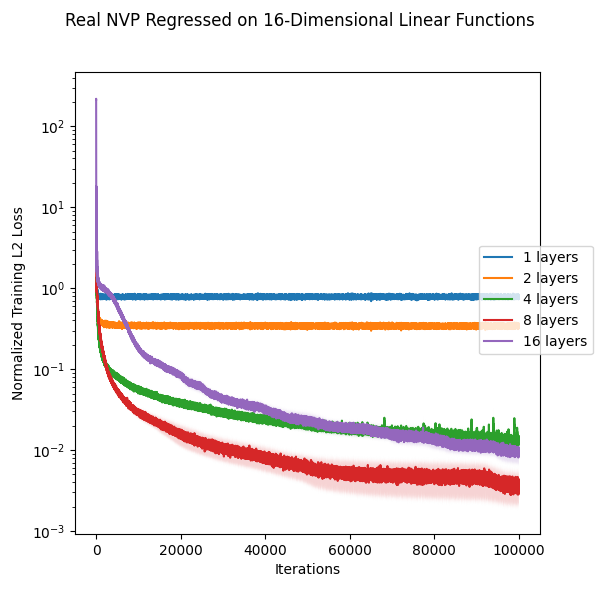}
  \captionof{figure}{Real NVP Regressed on 16-D Linear Functions}
  \label{fig:nvp_linear_16}
\end{minipage}%
\hfill
\begin{minipage}{.48\textwidth}
  \centering
  \includegraphics[width=\linewidth]{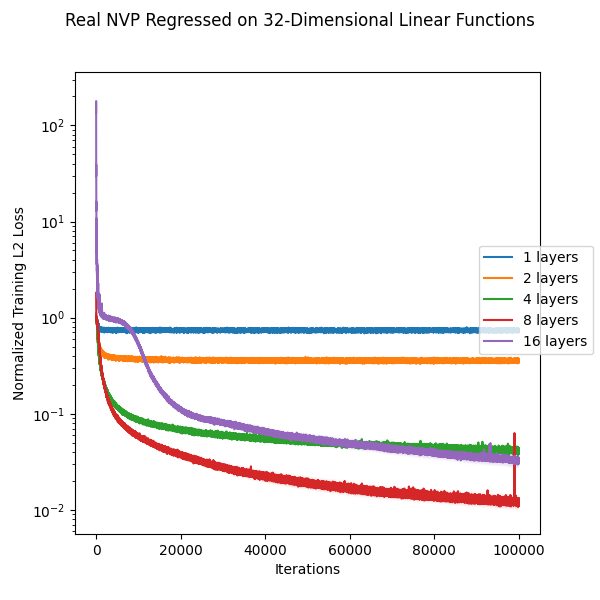}
  \captionof{figure}{Real NVP Regressed on 32-D Linear Functions}
  \label{fig:nvp_linear_32}
\end{minipage}
\end{figure}
\begin{figure}
\centering
\begin{minipage}{.48\textwidth}
  \centering
  \includegraphics[width=\linewidth]{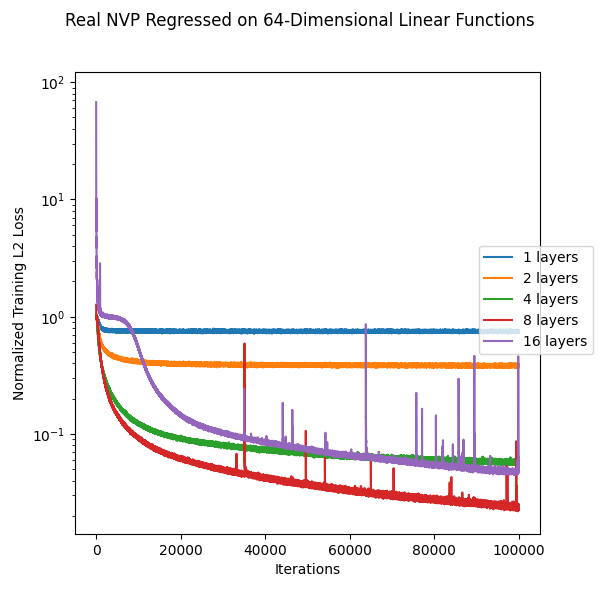}
  \captionof{figure}{Real NVP Regressed on 64-D Linear Functions}
  \label{fig:nvp_linear_64}
\end{minipage}%
\hfill
\begin{minipage}{.48\textwidth}
  \centering
  \includegraphics[width=\linewidth]{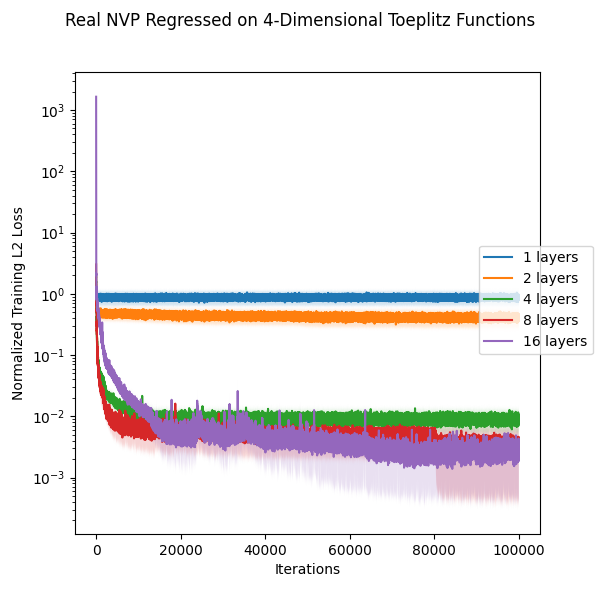}
  \captionof{figure}{Real NVP Regressed on 4-D Toeplitz Functions}
  \label{fig:nvp_toeplitz_4}
\end{minipage}
\end{figure}
\begin{figure}
\centering
\begin{minipage}{.48\textwidth}
  \centering
  \includegraphics[width=\linewidth]{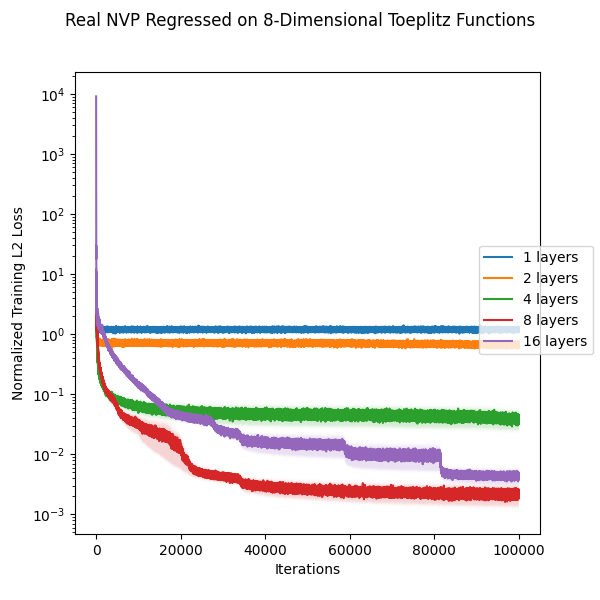}
  \captionof{figure}{Real NVP Regressed on 8-D Toeplitz Functions}
  \label{fig:nvp_toeplitz_8}
\end{minipage}%
\hfill
\begin{minipage}{.48\textwidth}
  \centering
  \includegraphics[width=\linewidth]{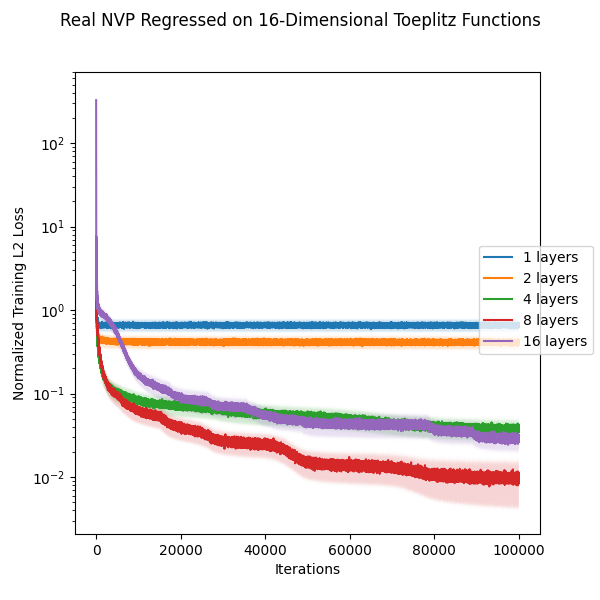}
  \captionof{figure}{Real NVP Regressed on 16-D Toeplitz Functions}
  \label{fig:nvp_toeplitz_16}
\end{minipage}
\end{figure}
\begin{figure}
\centering
\begin{minipage}{.48\textwidth}
  \centering
  \includegraphics[width=\linewidth]{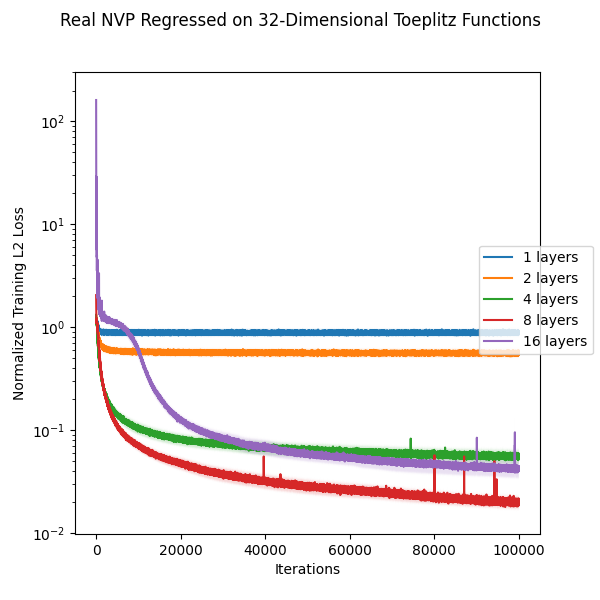}
  \captionof{figure}{Real NVP Regressed on 32-D Toeplitz Functions}
  \label{fig:nvp_toeplitz_32}
\end{minipage}%
\hfill
\begin{minipage}{.48\textwidth}
  \centering
  \includegraphics[width=\linewidth]{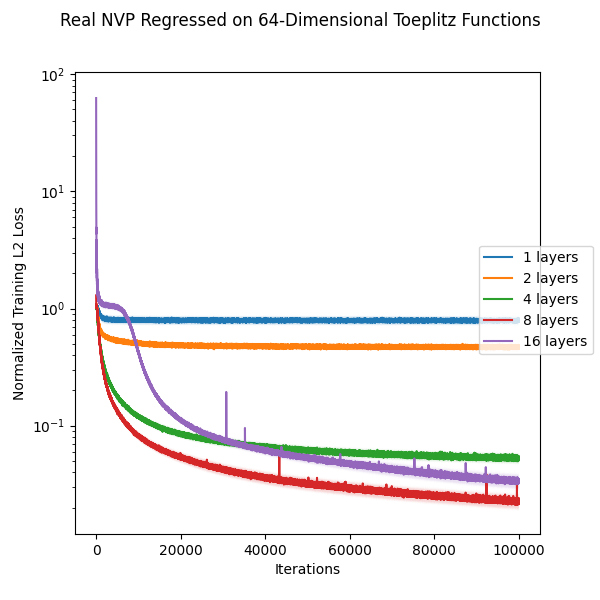}
  \captionof{figure}{Real NVP Regressed on 64-D Toeplitz Functions}
  \label{fig:nvp_toeplitz_64}
\end{minipage}
\end{figure}

\clearpage

\begin{figure}
    \centering
    \includegraphics[width=\textwidth]{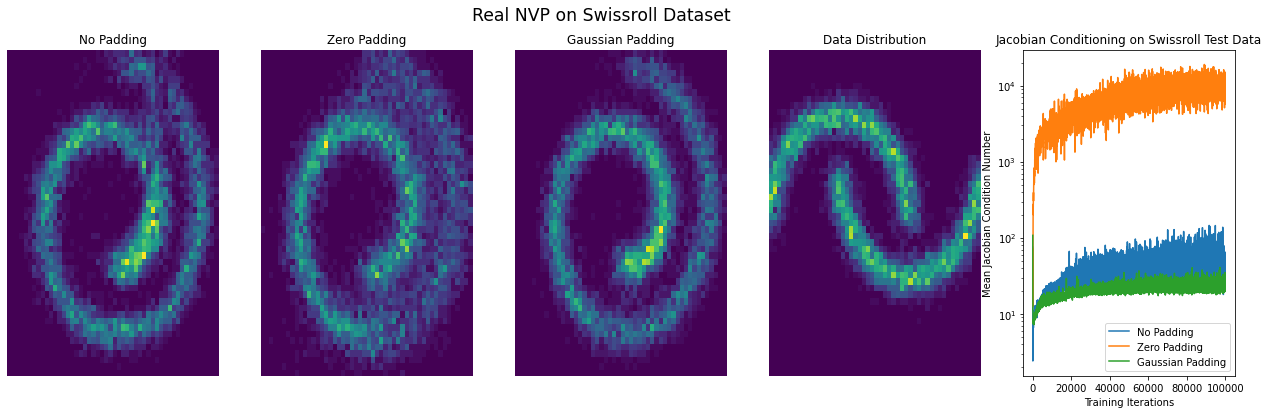}
    \caption{Real NVP on Swissroll Dataset}
    \label{fig:nvp_swissroll}
\end{figure}
\begin{figure}
    \centering
    \includegraphics[width=\textwidth]{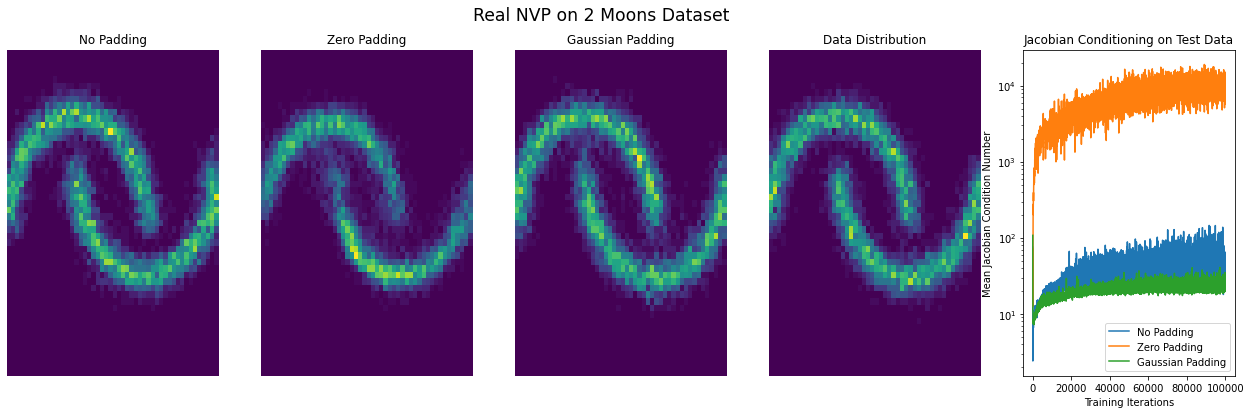}
    \caption{Real NVP on 2 Moons Dataset}
    \label{fig:nvp_2_moons}
\end{figure}
\begin{figure}
    \centering
    \includegraphics[width=\textwidth]{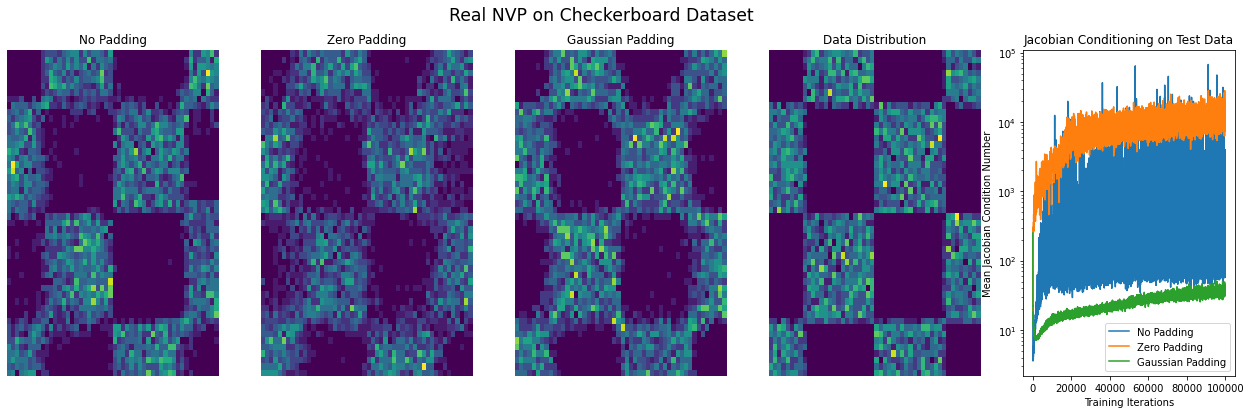}
    \caption{Real NVP on Checkerboard Dataset}
    \label{fig:nvp_checkerboard}
\end{figure}

\end{document}